\newcommand{\bel}{\mathrm{bel}}
\newcommand{\pl}{\mathrm{pl}}
\newcommand{\mass}{{m}}
\renewcommand{\emph}{\textbf}
\newcommand{\jty}{J^{\infty}}
\newcommand{\mty}{M^{\infty}}
\newcommand{\nomi}{\mathbf{i}}
\newcommand{\marginnote}[1]{\marginpar{\raggedright\tiny{#1}}}
\newcommand{\val}[1]{[\![{#1}]\!]}
\newcommand{\descr}[1]{(\![{#1}]\!)}
\renewcommand{\phi}{\varphi}
\newcommand{\commment}[1]{}
\def\aol{\rule[0.5865ex]{1.38ex}{0.1ex}}
\def\pdra{\mbox{$\,>\mkern-8mu\raisebox{-0.065ex}{\aol}\,$}}
\def\pdla{\mbox{\rotatebox[origin=c]{180}{$\,>\mkern-8mu\raisebox{-0.065ex}{\aol}\,$}}}
\def\mANDORatom#1{\hbox{\hbox to 0pt{$#1\TriangleUp$\hss}$#1\TriangleDown$}}
\newcommand{\mcAND}{%
\mathrel{\ooalign{\raisebox{-0.39ex}{$\mbox{\TriangleUp}$}\cr\kern4.2pt{\raisebox{-0.13ex}{$\cdot$}}}}}
\newcommand{\mcand}{%
\mathrel{\ooalign{$\vartriangle$\cr\kern1.99pt{\raisebox{-0.17ex}{$\cdot$}}}}}
\newcommand{\mand}{\vartriangle}
\newcommand{\nAND}{%
\mathrel{\ooalign{$\mbox{\TriangleUp}$\cr\kern0pt$\mbox{\rotatebox[origin=c]{180}{\TriangleUp}}$}}}
\newcommand{\nand}{%
\mathrel{\ooalign{$\vartriangle$\cr\kern0pt$\triangledown$}}}
\newcommand{\mcBAND}{%
\mathrel{\ooalign{\raisebox{-0.39ex}{$\mbox{\FilledTriangleUp}$}\cr\kern4.2pt{\raisebox{-0.13ex}{${\color{white}\cdot}$}}}}}
\newcommand{\mcband}{%
\mathrel{\ooalign{$\blacktriangle$\cr\kern1.99pt{\raisebox{-0.17ex}{${\color{white}\cdot}$}}}}}
\newcommand{\mcRA}{%
\mathrel{\ooalign{
                  \raisebox{-0.3ex}{$\rotatebox[origin=c]{-90}{$\mbox{{\TriangleUp}}$}$}
                                                                            \cr\kern2.7pt{\raisebox{0.2ex}{$\cdot\mkern1.3mu$}}}}}
\newcommand{\mcra}{%
\mathrel{\ooalign{$\,{\vartriangleright\,}$\cr\kern3pt{\raisebox{0ex}{$\cdot$}}}}}
\newcommand{\mcraline}{%
-{\mkern-6mu{\mathrel{\ooalign{$\,{\vartriangleright\,}$\cr\kern3pt{\raisebox{0ex}{$\cdot$}}}}}}}
\newcommand{\mdraline}{%
{\mathrel{\ooalign{$\,{\vartriangleright\,}$\cr\kern3pt{\raisebox{0ex}{$\cdot$}}}}}{\mkern-6mu}-}
\newcommand{\cra}{%
\mathrel{\ooalign{$\,-{\mkern-3mu\vartriangleright\,}$\cr\kern8pt{\raisebox{0ex}{$\cdot$}}}}}
\newcommand{\mcBRA}{%
\mathrel{\ooalign{
                  \raisebox{-0.3ex}{$\rotatebox[origin=c]{-90}{$\mbox{\FilledTriangleUp}$}$}
                                                                            \cr\kern2.7pt{\raisebox{0.2ex}{${\color{white}\cdot}$}}}}}
\newcommand{\mcbra}{%
\mathrel{\ooalign{$\,-{\mkern-3mu\blacktriangleright\,}$\cr\kern8pt{\raisebox{0ex}{$\cdot$}}}}}
\newcommand{\mcLA}{%
\mathrel{\ooalign{
                  \raisebox{-0.3ex}{$\rotatebox[origin=c]{90}{$\mbox{\TriangleUp}$}$}
                                                                                     \cr\kern5.5pt{\raisebox{0.2ex}{$\cdot$}}
                                                                                                                              }}}
\newcommand{\mcla}{%
\mathrel{\ooalign{$\,{\vartriangleleft\,}$\cr\kern5pt{\raisebox{0ex}{$\cdot$}}}}}
\newcommand{\mclaline}{%
-{\mkern-6mu{\mathrel{\ooalign{$\,{\vartriangleleft\,}$\cr\kern5pt{\raisebox{0ex}{$\cdot$}}}}}}}
\newcommand{\mcBLA}{%
\mathrel{\ooalign{
                  \raisebox{-0.3ex}{$\rotatebox[origin=c]{90}{$\mbox{\FilledTriangleUp}$}$}
                                                                                     \cr\kern5.5pt{\raisebox{0.2ex}{${\color{white}\cdot}$}}
                                                                                                                                            }}}
  \numberwithin{equation}{section}
\newcommand{\gI}{%
\mathrel{\ooalign{$\mbox{T}$\cr\kern0pt$\mbox{\rotatebox[origin=c]{180}{T}}$}}}
\newcommand{\apdla}{\pdla'}
\def\aol{\rule[0.5865ex]{1.38ex}{0.1ex}}
\newcommand{\WKnowProxy}[2]{%
  {\mathbin{\ooalign{$#1\circ#2 $\cr\hidewidth
%  {$#1{\scriptscriptstyle{\ast}}#2$}\hidewidth\cr  }}}}
   \raise.155ex\hbox{$#1{\scriptstyle{\ast}}#2$}\hidewidth\cr  }}}}
\newcommand{\BKnowProxy}[2]{%
  {\mathbin{\ooalign{$#1\bullet#2 $\cr\hidewidth
%  {$#1{\scriptscriptstyle{\color{white}{\ast}}}#2$}\hidewidth\cr  }}}}
   \raise.155ex\hbox{$#1{\scriptstyle{\color{white}{\ast}}}#2$}\hidewidth\cr  }}}}
\newcommand{\diamdot}{\Diamond\!\!\!\!\cdot\ }
\newcommand{\diamdotb}{\Diamondblack\!\!\!\!{\color{white}{\cdot\ }}}
\newcommand{\boxdotb}{\blacksquare\!\!\!{\color{white}{\cdot\,\,}}}
\def\aol{\rule[0.5865ex]{1.38ex}{0.1ex}}
\def\pdla{\mbox{\rotatebox[origin=c]{180}{$\,>\mkern-8mu\raisebox{-0.065ex}{\aol}\,$}}}
\def\pdra{\mbox{$\,>\mkern-8mu\raisebox{-0.065ex}{\aol}\,$}}
\newcommand{\br}{\mbox{$\,\vartriangleright\mkern-15mu\rule[0.51ex]{1ex}{0.12ex}\,$}}
\newcommand{\abr}{\mbox{$\blacktriangleright\mkern-15.5mu\textcolor{white}{\rule[0.51ex]{1.2ex}{0.12ex}}$}}
\theoremstyle{plain}
\newtheorem{thm}{Theorem}
\newtheorem{cor}[thm]{Corollary}
\newtheorem{prop}[thm]{Proposition}
\newtheorem{lemma}[thm]{Lemma}
\theoremstyle{definition}
\newtheorem{definition}[thm]{Definition}
\newtheorem{example}[thm]{Example}
\newtheorem{remark}[thm]{Remark}
\title{Flexible categorization for auditing using formal concept analysis and Dempster-Shafer theory}
\author[5,6]{Marcel Boersma}
\author[1]{Krishna Manoorkar\footnote{Krishna Manoorkar is supported by the NWO grant KIVI.2019.001 awarded to Alessandra Palmigiano.}}
\author[1,2]{Alessandra Palmigiano}
\author[1]{Mattia Panettiere}%\footnote{The research of the fifth and sixth author has been made possible by the NWO Vidi grant 016.138.314, the NWO Aspasia grant 015.008.054, and a Delft Technology Fellowship awarded in 2013.}} 
\author[1]{Apostolos Tzimoulis}
\author[3,4]{Nachoem Wijnberg} 
\affil[1]{School of Business and Economics, Vrije Universiteit, Amsterdam, The Netherlands}
\affil[2]{Department of Mathematics and Applied Mathematics, University of Johannesburg, South Africa}
\affil[3]{College of Business and Economics, University of Johannesburg, South Africa}
\affil[4]{Faculty of Economics and Business, University of Amsterdam, The Netherlands}
\affil[5]{Computational Science Lab, University of Amsterdam, Amsterdam, The Netherlands}
\affil[6]{KPMG, Amstelveen, The Netherlands}
\date{}
\begin{document}

\maketitle
\begin{abstract}
   Categorization of business processes is an important part of auditing. Large amounts
of transnational data in auditing can be represented as transactions between financial accounts using
weighted bipartite graphs. We view such bipartite graphs as many-valued formal contexts, which we use to obtain explainable categorization of these business processes in terms of financial accounts involved in a business process by using methods in formal concept analysis. The specific  explainability feature of the methodology introduced in the present paper
provides several advantages over e.g.~non-explainable machine learning techniques, and in fact, it can be taken as a basis for the development of  algorithms which perform the task of clustering on transparent and accountable principles. Here, we focus on obtaining and studying different ways to categorize  according to different extents of interest in different financial accounts, or {\em interrogative agendas},  of various agents or sub-tasks in audit. We use Dempster-Shafer mass functions to represent agendas showing different interest in different set of financial accounts. We propose two new methods
to obtain categorizations from these agendas. We also model some possible deliberation scenarios between agents with different interrogative agendas to reach an aggregated agenda and categorization. The framework
developed in this paper provides a formal ground to obtain and study explainable categorizations from the data represented as bipartite graphs according to the agendas of different agents in an organization (e.g.~an audit firm), and interaction between these through deliberation.\\
    {\em Keywords: Auditing, Categorization, Formal Concept Analysis, Demspter-Shafer theory, Interrogative Agendas} 
\end{abstract}
\section{Introduction}

{\em Financial auditing} is the process of examining and providing an independent, third-party, expert opinion on the truth and fairness of financial information being presented by a company, and the compliance of this information with applicable accounting standards and relevant legislation.
Auditors collect, weight, and combine information in formulating their judgment about the truth/fairness of their clients’ financial statements. At the core of auditing is the {\em qualitative} evaluation of the aggregate of all the relevant data of the financial report of the given firm. Auditors exercise professional judgment in determining the type and extent of information to collect, and in assessing the implications of this information. To form their judgement, auditors employ their training, experience, industry knowledge, and other forms of information-gathering that may not always be uniformly recorded.
A financial audit is broken up into components, corresponding to the individual entities belonging to the consolidated company (e.g.~legal entities, branches, departments), as well as separate business cycles (e.g.~order-to-cash, procure-to-pay, financing, payroll), their related balance sheet and income statement accounts, and the company's assertions about these accounts (e.g.~their completeness, existence, accuracy, valuation, ownership, and their correct presentation in the financial statements).
To collect evidence pertaining to each assertion, auditors choose a mixture of procedures on each component. These procedures range from risk assessment, the testing of design and operating effectiveness of internal controls, analytical procedures, statistical and non-statistical sampling and other detail testing procedures. Procedures may be geared to all aggregation levels, ranging from the entity as a whole to a specific assertion of a specific part of a financial statement account. The objective of the auditor then is to {\em aggregate} all evidence obtained, both confirming and disconfirming, to issue an opinion as to whether the financial statements present fairly, in all material respects, the financial position of the company as of the balance sheet date and the results of its operations and its cash flows for the year then ended in accordance with generally accepted accounting principles in the country where the report is issued. Because precise guidelines for information, collection, and evaluation in auditing do not exist, individual and collective professional judgment plays a key and pervasive role in auditing \cite{joyce1976expert}.

\paragraph{State-of-the-art.} Text-analysis techniques, AI and data-analytic methods have already been very useful to speed up the controlling function of auditors by flagging low-level, individual anomalies (e.g.~entries with keywords of a questionable nature, entries from unauthorized sources, or an unusually high number of journal entry postings just under authorized limit) \cite{kokina2017emergence, Raphael2017RethinkingTA, dickey2019machine, wang2019reflections}. 
However, even in the absence of {\em low-level} anomalies, it is an auditor’s task to detect {\em higher-level} anomalies concerning the {\em coherence} of all information at different aggregation levels of the financial information, when combined together: indeed, one and the same event or action in the life of a company is witnessed by evidence at multiple levels (e.g.~at the overall financial statement level, at the account level, at the transactions stream level, and at the individual assertion level of accounts). Also, certain items of evidence pertain to multiple assertions of an account. For example, confirmation of accounts receivables pertains to the “Existence” and “Valuation” assertions of the accounts receivable balance. The core of an auditor’s work is to provide a higher-level, qualitative judgment on whether all these different pieces of evidence lead to a coherent picture of any given event, and hence to a fair presentation of the company’s financial information. 

\paragraph{Challenges.} The extant research on the formal foundations of the reasoning at the core of auditing \cite{shafer1990bayesian, srivastava1992belief, srivastava1993belief, desai2006analytical, sun2006information, gao2011evidential, srivastava2011introduction, desai2017external, mock2018using} has highlighted several inadequacies of probability theory for representing uncertainty in audit judgments, the most prominent of which are the logical implications stemming from the complementarity of probabilities, and the ensuing difficulty in drawing the distinction—critical to the practice of auditing—between the absence of evidence in support of a statement and the presence of evidence to its contrary. This literature has advocated the use of {\em Dempster-Shafer’s theory} of belief functions \cite{dempster2008upper, shafer1976mathematical} to overcome this problem. However, the extant literature has not yet developed specific formal models for audit on which machine learning algorithms can be designed and tested. This is the first issue which this paper starts  to address. 
Moreover, as pointed out earlier, most current AI techniques focus on detecting low-level anomalies taken in isolation, while there is not yet much work in AI specifically tailored to assist auditors on their higher-level tasks. As we will argue, the contributions of the present paper set the stage for addressing also this second aspect.

\paragraph{Aims and contributions.}
The present paper starts a line of research aimed at developing formal models specifically designed to  analyse and represent the higher-level processing of information of  experienced auditors, and at using this formal understanding as a base to develop data-analytic tools specifically designed to flag higher-level anomalies.

Specifically, we posit that the experienced auditors’ evaluation of evidence is rooted in a process of {\em category-formation}, by which pieces of evidence are clustered together in categories which are possibly very different from the ``natural'' or ``official'' categories with which the evidence is presented in the self-reported financial statement of the given company under examination. These categories provide the context of evaluation in which different and possibly very heterogeneous pieces of evidence are compared with/against each other, and their overall coherence is evaluated via this comparison.\\
{\em Background theory: Formal Concept Analysis. } Accordingly, the formal framework presented in this paper is set within the many-valued counterpart of Formal Concept Analysis \cite{wille1996formal, belohlavek1999fuzzy}, in which (vague) categories are represented as the formal concepts associated with bipartite weighted graphs seen as {\em fuzzy formal contexts} \cite{wille1996formal, belohlavek1999fuzzy}, each consisting of two domains $A$ (of objects) and $X$ (of features), and a weighted relation $I$ between them.

Besides the fact that, mathematically, both bipartite graphs  (whether weighted or not, directed or not) and formal contexts are % essentially the same
isomorphic structures\footnote{Since the objects and features of a formal context can be seen as disjoint vertex sets of a bipartite graph respectively, and the incidence relation between the two sets can be seen as the set of edges.}, they are both used to represent data \cite{wille1996formal, vskopljanac2014formal, kuznetsov2004machine, pavlopoulos2018bipartite, hayes2004bipartite, ravasz2002hierarchical, newman2001scientific, cobb2003application}, and in particular bipartite graphs have been used in  \cite{pavlopoulos2018bipartite, hayes2004bipartite, ravasz2002hierarchical, newman2001scientific, cobb2003application}. Regarding bipartite graphs as formal contexts allows for access to the well-known representation of the  hierarchy of formal concepts associated with each formal context \cite{birkhoff1940lattice}, and hence  provides a formally explicit   categorization process associated with modelling data as bipartite graphs.   %In many situations, the nodes on one domain of a given bipartite graph can be  described in terms of their neighbors in the graph. This provides a natural interpretation of  bipartite graphs as  (fuzzy) formal contexts with one type of nodes being considered as set of  objects $A$ and other type describing the objects as set of features $X$. The edges in the graph can be treated as the (many-valued) relation $I$ describing features of an object. 
 Formal concept analysis  provides:
\begin{enumerate}
    \item  a hierarchical, rather than flat, categorization of objects.
    \item categories that have a double (i.e.~both extensional and intensional) representation, which allows to make them  explainable, in the sense that each category can be effectively reconstructed in terms of its   features. 
    \item  a more structured control of categorization based on the generation of categories from arbitrary subsets of objects or of features.  That is, the we can add or remove objects or features from the categorization in straight forward manner and study or explain resulting categorization in a formal way. 
    \item the basis for a formal framework for addressing and supporting vagueness, epistemic uncertainty, evidential reasoning, and incomplete information \cite{conradie2017toward, conradie2021rough, frittella2020toward}.
\end{enumerate}
Besides these advantages in categorization, this conversion would allow us to access tools in formal concept analysis developed for several applications like knowledge discovery and management, information retrieval, attribute exploration \cite{priss2006formal, qadi2010formal, poelmans2010formal, valtchev2004formal, poelmans2013formal, ganter2012formal, wille1996formal}. 

 Clustering nodes of one or both types in a bipartite graph is an important problem in several fields \cite{zha2001bipartite, xu2013behavior,boersma2018financial,boersma2020reducing, xu2016interactive, gaume2013clustering}, and there have been several approaches  to address this task  \cite{schaeffer2007graph, zha2001bipartite,gaume2013clustering}.  
One particularly relevant example for the present paper concerns clustering of nodes of a bipartite graph which represents the network of financial transactions (i.e.~a {\em financial statements network}). Financial statements networks \cite{boersma2018financial} are bipartite graphs  $G=(A, X, I)$,  where $A$ is the set of business processes, $X$ is the set of financial accounts (e.g.~tax, revenue, trade receivables),  and $I : A \times X\to [-1, 1]$ is  such that, for any business process $a$ and any financial account $x$, the value of $I(a, x)$ is the share of $a$ into $x$, where the (positive or negative) sign of $I(a, x)$ represents whether money is credited into or debited from $x$ in $a$, respectively. By a {\em business process}, here we mean a set of credit and debit activities meant to produce a specific output \cite{boersma2018financial}. For example, all transaction relating to a sale of an object constitute a business process. Given the journal entry data we can map a collection of records to a particular business process. 
Any business process $a$  can then be described in terms of  the values of $I(a, x)$ for any $x\in X$. This description yields a  categorization of business processes which can be of interest in several applications \cite{boersma2018financial, boersma2020reducing}. To make processing and interpreting contexts easier,  many-valued formal contexts $G=(A, X, I)$ as above may be converted into two-valued formal  contexts using conceptual scaling \cite{ganter1989conceptual}.

{\em Interrogative agendas. } As mentioned above, the final outcome of the auditing process is the formation of a qualitative opinion, by expert auditors, on the fairness and completeness of a given firm's financial accounts. Towards the formation of their opinion, not all features have the same weight in the eyes of the auditors: indeed,  different auditors  may have different views about which parts of the financial accounts they consider more relevant to the formation of their opinion. For example, an auditing task may be subdivided among different auditors focusing on different tasks. Auditors doing a specific task may be much more interested in some specific financial accounts than others. For example an auditor may be focusing on the specific task relating more to the 
books of  the tax department, and have more interest in the features like   revenue, tax, and fixed assets while for another auditor focusing on a task for which business processes relating to the  human resources department are of more importance, the features of interest may be other expenses and personnel expenses. In the case, a certain set of features has significantly more relevance to an agent (auditor) we can model this situation by setting this set of features to be the agenda or features of interest of that agent. In many cases, the agenda of an agent  may not be  realistically approximated in such a simple way but might consist of different relevance or importance values assigned to the different set of features. In such cases, we would use Dempster-Shafer mass functions to represent such agendas.

Another reason why features do not have the same weight or importance to an auditor %a given for interest in categorization based on certain features from the viewpoint of expert auditors 
is connected with the different degrees of risk in the functioning of certain departments of a firm. Different departments or groups of transactions may have different perceived degree of risk leading an auditor to assign different importance to different financial accounts involved. 
 %In some situations, different expert auditors may give different importance to different financial accounts involved. For example, some expert may think that the financial account tax and revenue are of much more importance in categorizing transactions than accounts like other expenses. 

Similarly to the different epistemic attitudes entertained by experts auditors, also different procedures used in auditing may focus unequally on different features  (financial accounts). For example, some methods may be more accurate in detecting inconsistencies in revenue data compared to detecting inconsistencies in personal expenses data, or vice-versa. In such situations, categorizations of business processes based on different features may be useful in understanding which methods are more effective for which business processes, choosing samples to be used in different processes and analyzing results in decision-making. This may also be helpful in categorization of errors obtained from such methods, a topic of significant interest in auditing \cite{giriunas2014evaluation, singh2019data}.
In the proposed  framework, the different epistemic attitudes of expert auditors are formalized by the notion of {\em interrogative agenda} (or research agenda\cite{enqvist2012modelling}).
The framework introduced in the present paper may be also used to provide a better identification of the anomalies causing given business processes to be classified as   fraudulent or inconsistent.  Such categorizations may be very useful in understanding source and extent of concerns and in making decision about which additional data may be needed in decision-making. Besides introducing a framework for such categorizations, the present work also discusses methods for formalizing how all these different classification methods interact, which may be useful not only in understanding the interactions between these processes, but also  for pooling their outcomes together in decision-making.

{\em A logical framework. } So far, we have described our first contributions, starting from considering financial statements networks as formal contexts; we have also discussed that this approach provides a natural and structured way to categorize business processes based on different agendas. This approach also allows us to consider other features (e.g.~time of transactions, value, location) which might not be present in the network itself. There have been some attempts in the past to study the formal contexts obtained by restricting to  subsets of features to obtain formal contexts of interest \cite{cole1999scalability}. In this work, we introduce a  logical framework specifically designed to systematically represent and reason about the different ways business processes  can be categorized on the basis of different subsets of features selected on the basis of the different epistemic attitudes of agents (expert auditors), as well as the interaction between and possible aggregation of these various categorizations, by means of a deliberation process. 
 The interaction between agents (auditors), their agendas, objects, and categories can be further augmented with several useful notions, including preference orders, similarities, influences, and dependencies among features.

{\em Preference-aggregation via generalized  Dempster-Shafer theory.  } We extend the logical framework described above to the cases where the different interrogative agendas of the various agents  might induce  different priorities over the set of features  used for categorization. Such priorities can be represented as Dempster-Shafer mass functions on the set of agendas. These mass functions on agendas also induce  mass functions on different categorical hierarchies (concept lattices), and represent  priorities for each categorization based on these agendas. Dempster-Shafer theory has been applied to model reasoning under uncertainty about categorization of objects and features, or for describing preferences for certain categories in a given categorization  \cite{frittella2020toward}. In the same works, a Dempster-Shafer mass function is defined over a given concept lattice  representing evidence for a set of objects or features belonging to a category or preference in that category.
Here, we take a different approach, and use Dempster-Shafer theory to describe the priority or importance of different set of features in  categorization using Dempser-Shafer mass functions. That is, mass functions are used to choose which categorization is relevant to a given task. We then try to formalize deliberation between the agents with different agendas represented by mass functions using aggregation rules in Dempster-Shafer theory.

We use the terms crisp and non-crisp in this paper to talk about agendas given by a set of features, and a Dempster-Shafer mass function on the power-set of features respectively. The  term crisp signifies the fact that model has a fixed set of agendas and thus leads to a single categorization. While, the term non-crisp highlights  that the Dempster-Shafer mass functions give different priority or preference values to different sets of features (agendas) and leads us to a mass function on different possible categorizations.

\paragraph{ Structure of the paper.} In Section \ref{sec:Example}, we describe a small financial statements network and show how different set of features give different categorizations of business process in this small network. In Section \ref{sec:Prelim}, we give preliminaries on formal concept analysis, Dempster-Shafer theory, interrogative agendas, and financial statements networks. In Section \ref{sec:Interrogative agendas, coalitions, and categorization}, we describe our logical framework for reasoning with different interrogative agendas and categorizations obtained from them. In Section \ref{sec:Deliberation and categorization- crisp case}, we use  logical  framework developed in Section \ref{sec:Interrogative agendas, coalitions, and categorization} to model  possible deliberation scenarios among different agents. In Section \ref{sec:Non-crisp interrogative agendas}, we describe non-interrogative agendas given by Dempster-Shafer mass functions and categorizations obtained from these. We also propose  methods to obtain a single crisp categorization approximating this non-crisp categorization. In Section \ref{sec:Deliberation and categorization-non crisp case}, we try to formalize possible deliberation scenarios among different agents having agendas represented by Dempster-Shafer mass functions. Finally, in Section \ref{sec:Deliberation and categorization-non crisp case}, we revisit the  financial statements network described in Section \ref{sec:Examples end} and describe categorizations obtained from it under different crisp and non-crisp agendas. Finally in Section \ref{sec:Conclusion and further directions}, we give our conclusions and mention several directions for future research. 

\section{Example}\label{sec:Example}
In this section, we informally illustrate the ideas  discussed in the introduction by way of a toy example. Consider the financial statements network %(database)
represented in  Table \ref{database table}, with  business processes $\{ a_1, a_2, \ldots, a_{12}\}$ and   financial accounts $\{x_1, x_2, \ldots, x_6\}$ specified as follows:

\begin{center}
    \begin{tabular}{|c|c|c|c|}
    \hline
      $x_1$ & tax & $x_2$ & revenue \\
      \hline
       $x_3$&cost of sales & $x_4$& personnel expenses \\
       \hline
      $x_5$& inventory & $x_6$ &other expenses\\
      \hline
    \end{tabular}
\end{center}
As discussed above,  each cell of the Table \ref{database table} reports the value of the (many-valued) relation $I: \{a_1,\ldots, a_{12}\}\times \{x_1,\ldots, x_6\}\to [-1, 1]$,  which, for any business process $a$ and account $x$, represents the share of $a$ in $x$. %The categorization of transactions based on financial accounts involved in them and their shares in transaction.   This data can be represented by {\em financial statements network} \ref{} a bipartite graph consisting of business processes (transactions) and financial processes representing nodes and weighted edges between a business process node $b$ and financial account $x$ showing share (fractional) of $x$ in $b$.  As discussed in the introduction, a bipartite financial statements network can be viewed as a many-valued formal context with transactions as objects and financial accounts as features. The many-valued incidence relation gives the value of share of a financial account in a business process. To see the many valued formal context corresponding to the network obtained from Table \ref{}, see Table \ref{}. 

Let $j_1$, $j_2$, and $j_3$ be agents with %which consider different set of features as of interest for categorizations i.e.~they have 
different agendas. Specifically, agent $j_1$ is interested in the financial accounts $x_1$  (tax), $x_2$ (revenue), and $x_5$ (inventory), agent $j_2$ in $x_1$  (tax), $x_2$ (revenue), and $x_3$ (cost of sales), while agent $j_3$ is interested in $x_1$  (tax),  and $x_3$ (cost of sales). The various ways of categorizing $\{a_1, a_2, \ldots, a_6\}$ (i.e.~forming concept lattices)  under these different agendas by using interval scaling \footnote{{Interval scaling is one of the methods used commonly for conceptual scaling. For more, see \cite{ganter1989conceptual}.}} with 5 intervals of equal length between $[-1,1]$ are shown in the following diagrams. It is clear that the categorizations obtained differ from each other depending on criterion used. For example, business processes $a_1,  a_3, a_5$ are indistinguishable under the agenda of $j_2$, while $a_2$ forms a singleton category under the agenda of $j_3$, and hence is distinguishable from all other business processes. The business process forming smaller categories may be considered uncommon and may be of further interest in auditing tasks. However, as shown by the above example, this is influenced by the set of features or agenda used for categorization. In Section \ref{sec:Examples end}, we will consider more examples of categorizations obtained from different agendas associated with individual agents, and the agendas obtained as outcomes of processes of deliberation. 
\begin{figure}[h]
\begin{center}
\begin{tikzpicture}
\draw[very thick] (0, 0) -- (-2,1) --(-1,2)--(0,3)--(1,2)--(2,1)--(0,0);
\draw[very thick] (0,0)--(0,1);
\draw[very thick] (-1,2)--(0,1)--(1,2);
\draw[very thick] (0,0)--(2,1);

	%\draw[very thick] (7.5, 5.5) -- (7.5, 4.5);
	%\draw[very thick] (7.5, 2.5) -- (9, 3.5);
	\draw (0,1) node[right] {1,2,3,5};
	\draw (-2,1) node [left] {4};
	\draw (2, 1) node [right] {6};
    \draw (-1,2) node [left] {1,2,3,4,5};
    \draw (1,2) node [right] {1,2,3,5,6};
   \draw (0,3) node [above] {1,2,3,4,5,6};
   	\draw (0,0) node[below] {};
  	\filldraw[black] (0,0) circle (2.5 pt);
  	 \filldraw[black] (-1,2) circle (2.5 pt);
  	\filldraw[black] (-2,1) circle (2.5 pt);
  	\filldraw[black] (0,3) circle (2.5 pt);
  	\filldraw[black] (1,2) circle (2.5 pt);
  	\filldraw[black] (2,1) circle (2.5 pt);
  	\filldraw[black] (0,1) circle (2.5 pt);

\draw[very thick] (6, 0) -- (4,1) --(5,2)--(6,3)--(7,2)--(8,1)--(6,0);
\draw[very thick] (6,0)--(6,1);
\draw[very thick] (5,2)--(6,1)--(7,2);
\draw[very thick] (6,0)--(4,1);
\filldraw[black] (6,0) circle (2.5 pt);
  	 \filldraw[black] (4,1) circle (2.5 pt);
  	\filldraw[black] (5,2) circle (2.5 pt);
  	\filldraw[black] (6,3) circle (2.5 pt);
  	\filldraw[black] (7,2) circle (2.5 pt);
  	\filldraw[black] (8,1) circle (2.5 pt);
  	\filldraw[black] (6,1) circle (2.5 pt);

	%\draw[very thick] (7.5, 5.5) -- (7.5, 4.5);
	%\draw[very thick] (7.5, 2.5) -- (9, 3.5);
	\draw (6,1) node[right] {1,3,5,6};
	\draw (4,1) node [left] {4};
	\draw (8, 1) node [right] {2};
    \draw (5,2) node [left] {1,3,4,5,6};
    \draw (7,2) node [right] {1,2,3,5,6};
   \draw (6,3) node [above] {1,2,3,4,5,6};
   	\draw (6,0) node[below] {};	
\end{tikzpicture}
\end{center}
\caption{Left: categorization obtained from the agenda of $j_1$,  Right: categorization obtained from the agenda of $j_3$.} \label{img:$j_1$}
\end{figure}
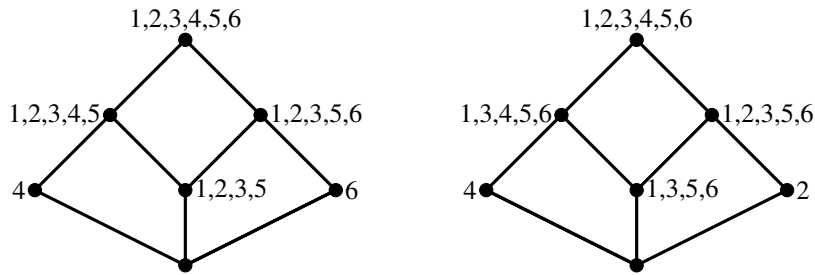

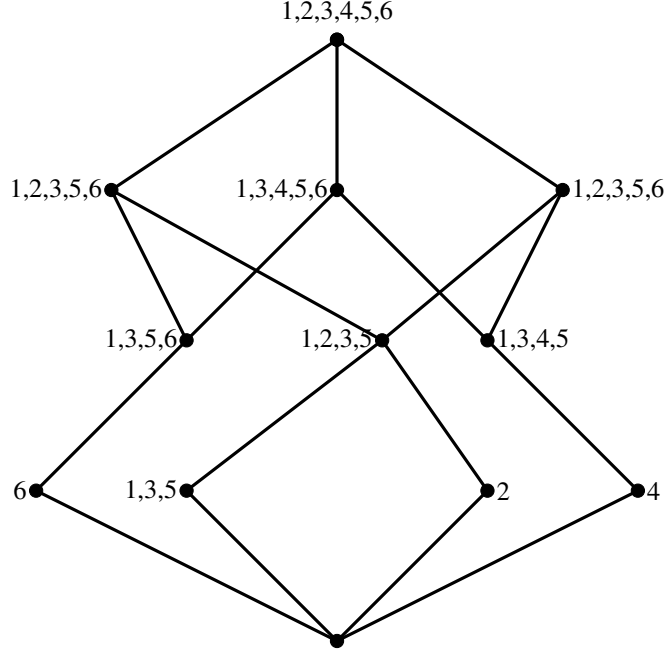
\begin{figure}[!h]

\begin{center}
\begin{tikzpicture}

\draw[very thick] (3,-1)--(0,-3)--(1,-5)--(-1,-7)--(3,-9)--(7,-7)--(5,-5)--(6,-3)--(3,-1);
\draw[very thick] (3,-1)--(3,-3)--(1,-5);
\draw[very thick] (3,-3)--(5,-5);
\draw[very thick] (3,-9)--(1,-7)--(3.6,-5)--(6,-3);
\draw[very thick] (3.6,-5)--(0,-3);
\draw[very thick] (3,-9)--(5,-7)--(3.6,-5);
	 \filldraw[black] (3,-1) circle (2.5 pt);
  	\filldraw[black] (0,-3) circle (2.5 pt);
  	\filldraw[black] (1,-5) circle (2.5 pt);
  	\filldraw[black] (-1,-7) circle (2.5 pt);
  	\filldraw[black] (3,-9) circle (2.5 pt);
  	\filldraw[black] (7,-7) circle (2.5 pt);
  	\filldraw[black] (5,-5) circle (2.5 pt);
  	\filldraw[black] (6,-3) circle (2.5 pt);
  	\filldraw[black] (3,-1) circle (2.5 pt);
  	\filldraw[black] (3,-3) circle (2.5 pt);
  	\filldraw[black] (1,-7) circle (2.5 pt);
  	  	\filldraw[black] (5,-7) circle (2.5 pt);
  		\filldraw[black] (3.6,-5) circle (2.5 pt);

\draw (3,-9) node[below] {};	
\draw(-1,-7) node [left] {6};
\draw(1,-7) node [left] {1,3,5};
\draw(5,-7) node [right] {2};
\draw(7,-7) node [right] {4};
\draw(3.6,-5) node [left] {1,2,3,5};
\draw(5,-5) node [right] {1,3,4,5};
\draw(1,-5) node [left] {1,3,5,6};
\draw(0,-3) node [left] {1,2,3,5,6};
\draw(3,-3) node [left] {1,3,4,5,6};
\draw(6,-3) node [right] {1,2,3,5,6};
\draw(3,-0.9) node[above] {1,2,3,4,5,6};
\end{tikzpicture}
\end{center}
\caption{Categorization obtained from the agenda of $j_2$.}\label{img:$j_2$}

\end{figure}
\newpage
\section{Preliminaries}\label{sec:Prelim}
\paragraph{Formal contexts and their concept lattices.}
A {\em formal context} \cite{ganter2012formal}  is a structure $\mathbb{P} = (A, X, I)$ such that $A$ and $X$ are sets, and $I\subseteq A\times X$ is a binary relation. 
Formal contexts can be thought of as abstract representations of databases, where elements of $A$ and $X$ represent objects and features, respectively, and the relation $I$ records whether a given object has a given feature. 
Every formal context as above induces maps $I^{(1)}: \mathcal{P}(A)\to \mathcal{P}(X)$ and $I^{(0)}: \mathcal{P}(X)\to \mathcal{P}(A)$, respectively defined by the assignments 
\begin{equation}
  I^{(1)}[B]: = 
\{x\in X\mid \forall a(a\in B\Rightarrow aIx)\}\quad\text{ and }\quad 
 I^{(0)}[Y] = 
\{a\in A\mid \forall x(x\in Y\Rightarrow aIx)\}.
\end{equation}
A {\em formal concept} of $\mathbb{P}$ is a pair 
$c = (\val{c}, \descr{c})$ such that $\val{c}\subseteq A$, $\descr{c}\subseteq X$, and $I^{(1)}[\val{c}] = \descr{c}$ and $I^{(0)}\descr{c} = \val{c}$. 
A subset $B \subseteq A$ (resp.\ $Y\subseteq X$) is said to be {\em closed}, or {\em Galois-stable}, if $\mathsf{Cl}_1(B)=I^{(0)}[I^{(1)}[B]]=B$ (resp.\ $\mathsf{Cl}_2(Y)=I^{(1)}[I^{(0)}[Y]]=Y$)\footnote{We will often use $\mathsf{Cl}$ instead of $\mathsf{Cl}_1$ and $\mathsf{Cl}_2$ when it is clear from the context to which closures we are referring to, i.e., when the type of the input is clear from the context.}.
The set of objects $\val{c}$ is  the {\em extension} of the concept $c$, while  the set of features $ \descr{c}$ is  its {\em intension}\footnote{The symbols $\val{c}$ and $\descr{c}$, respectively denoting the extension and the intension of a concept $c$, have been introduced and used in the context of a research line aimed at developing the logical foundations of categorization theory, by regarding formulas as names of categories (formal concepts), and interpreting them as formal concepts arising from  given formal contexts  \cite{conradie2017toward,conradie2021rough,frittella2020toward,conradie2016categories,conradie2019logic}.}. 
The set ${\mathrm{L}}(\mathbb{P})$  of the formal concepts of $\mathbb{P}$ can be partially ordered as follows: for any $c, d\in {\mathrm{L}}(\mathbb{P})$, 
\begin{equation}
c\leq d\quad \mbox{ iff }\quad \val{c}\subseteq \val{d} \quad \mbox{ iff }\quad \descr{d}\subseteq \descr{c}.
\end{equation}
%\redfootnote{Apostolos: Are $L(\mathbb{P})$ and $\mathbb{P}^+$ two notations for the same thing? Do we really need to introduce a notation for the set of concepts?}
With this order, ${\mathrm{L}}(\mathbb{P})$ is a complete lattice, the {\em concept lattice} $\mathbb{P}^+$ of $\mathbb{P}$. As is well known, any complete lattice $\mathbb{L}$ is isomorphic to the concept lattice $\mathbb{P}^+$ of some formal context $\mathbb{P}$ \cite{birkhoff1940lattice}. A  formal context  $\mathbb{P}$ is {\em finite} if its associated concept lattice $\mathbb{P}^+$ is a finite lattice.\footnote{Notice that if $\mathbb{P} = (A, X, I)$ is such that $A$ and $X$ are finite sets, then $\mathbb{P}^+$ is a finite lattice, but the converse is not true in general. For instance, if $\mathbb{P} = (A, X, I)$ gives rise to %is 
a finite lattice, then so does
%is 
$\mathbb{P}' := (A', X, I')$ where $A': = A\cup \mathbb{N}$ and $a'I' x$ iff $a'\in A$ and $aIx$.} 
 \paragraph{Dempster-Shafer theory.} Belief and plausibility functions are one proposal among others to generalise probabilities to situations in which some predicates cannot be assigned subjective probabilities.
In this section, we collect preliminaries on belief and plausibility functions on sets (for more details on imprecise probabilities and Dempster-Shafer theory see \cite{walley1991statistical,yager2008classic}).
%\begin{definition}[belief / plausibility functions]
%\label{def:bel-func:pl-func} 
\paragraph{Belief, plausibility and mass functions.}
\label{def:bel-func}
A {\em belief function} (cf.~\cite[Chapter 1, page 5]{shafer1976mathematical}) on a set $S$ is a map $\bel: \mathcal{P}(S)\to [0,1]$ such that 
$\bel(S)=1$,  and for every $n\in \mathbb{N}$,
%\redfootnote{\textbf{Reviewer 2.} What does the subscript of the sum used in expressions 1 and 2 mean?}
\begin{equation} 
\bel (A_1 \cup . . . \cup A_n)  \geq  
\sum_{\varnothing \neq I \subseteq \{1, . . . , n\}}
(-1)^{|I|+1} \bel \left (\bigcap_{i \in I} A_i \right).
\end{equation}
%If in addition  $\bel(\varnothing)=0$, then $\bel$ is {\em regular}.
A {\em plausibility function on} $S$ is a map $\pl: \mathcal{P}(S)\to [0,1]$ such that 
$\pl(S)=1$,  and for every $n\in \mathbb{N}$,
%\marginnote{complete definitions}
\begin{equation} 
\pl (A_1 \cup A_2 \cup ... \cup A_n)  \leq 
\sum_{\varnothing \neq I \subseteq \{1,2,...,n\} }
(-1)^{|I| +1}\pl 
\left( \bigcap_{i \in I} A_i 
\right).
\end{equation}
Belief and plausibility functions on sets are interchangeable notions: for every belief function $\bel$ as above, the assignment  $X\mapsto 1- \bel(\overline{X})$\footnote{Here $\overline X$ denotes the complement of $X$ with respect to $S$.} defines a plausibility function on $S$, and for every plausibility function $\pl$ as above, the assignment  $X\mapsto 1- \pl(\overline{X})$ defines a belief function on $S$. Let $S$ be some set.
A {\em Dempster-Shafer mass function} is a map $\mass: \mathcal{P}(S)\to [0,1]$ such that 
\begin{equation} 
\sum_{X \subseteq S} \mass (X) = 1.
\end{equation}
A {\em probability mass function}is a map $\mass: \mathcal{P}(S)\to [0,1]$ such that 
\begin{equation} 
\sum_{x \in S} \mass (x) = 1.
\end{equation}

We use mass function to mean a Dempster-Shafer mass function unless stated otherwise. 

On finite sets, belief (resp.~plausibility) functions and mass functions are interchangeable notions:  any mass function $\mass$ as above induces the belief function   $\bel_{\mass}: \mathcal{P}(S)\to [0,1]$ defined as 
\begin{equation} 
\bel_\mass(X) := \sum_{Y \subseteq X} \mass(Y) \qquad \text{ for every } X \subseteq S,
\end{equation}
and conversely, any belief function $\bel$ as above induces the mass function   $\mass_{\bel}: \mathcal{P}(S)\to [0,1]$ defined as We use standard notation and terms from Dempster-Shafer theory which can be found in any common reference for it, for example, see \cite{sentz2002combination,yager2008classic}. 
\begin{equation} 
\mass_\bel (X) := \bel(X) - \sum_{Y \subseteq X} (-1)^{|X \smallsetminus Y|} \bel(Y) \qquad \text{ for every } X \subseteq S.
%m(A) = \sum_{B \subseteq A) (-1)^|B-A| Bel(B).
\end{equation}
\begin{definition}
For any mass function $m:\mathcal{P}(X) \to [0,1]$, its associated {\em quality function} $\mathrm{q}_m$ is given as follows. For any $Y \subseteq X$, 
\[
\mathrm{q}_m(Y) =\sum_{Y \subseteq Z}m(Z).
\]
\end{definition}
\paragraph{Interrogative agendas
and their formal epistemic theory.} \label{prelim: imterrogative agenda} In epistemology and formal philosophy, an epistemic agent’s (or a group of epistemic agents’, e.g.~users’) interrogative agenda (or research agenda \cite{enqvist2012modelling}) indicates the set of questions they are interested in, or what they want to know relative to a certain circumstance (independently of whether they utter the questions explicitly). Interrogative agendas might differ for the same agent in different moments or in different contexts; for instance, my interrogative agenda when I have to decide which car to buy will be different from my interrogative agenda when I listen to a politician’s speech. In each context, interrogative agendas act as cognitive filters that block content which is considered irrelevant by the agent and let through (possibly partial) answers to the agent's interrogative agenda. Only the information the agent considers relevant is actually absorbed (or acted upon) by the agent and used e.g.~in their decision-making, in the formation of their beliefs, etc. 
Interrogative agendas can be organized in hierarchies, and this hierarchical structure serves to establish whether a given interrogative agenda subsumes another, and define different notions of “common ground” among agendas. Deliberation  and negotiation processes can be understood in terms of whether and how decision-makers/negotiators succeed in modifying their own interrogative agendas or those of their counterparts, and the outcomes of these processes can be described in terms of the “common ground” agenda thus reached. 
Also, phenomena such as polarization \cite{myers1976group}, echo chambers \cite{sunstein2001republic} and self-fulfilling prophecies \cite{merton1948self} can be understood in terms of the formation and dynamics of interrogative agendas among networks of agents.

\paragraph{Logical modelling of interrogative agendas.} \label{prelim:Logical modelling} As discussed above, interrogative agendas are in essence (conjunctions of) questions. An influential approach in logic \cite{groenendijk1984studies} represents questions as equivalence relations over a suitable set of possible worlds $W$ (representing the possible states of affairs e.g.~relative to a given situation). The equivalence relations on any set form a general complete lattice $\mathrm{E}(W)$ \cite{birkhoff1940lattice}, an ordered algebra 
(which is ‘general’ in the sense that the distributivity laws $x\wedge(y\vee z = (x\wedge y)\vee(x\wedge z)$ and $x\vee(y\wedge z) = (x\vee y)\wedge (x\vee z)$ do not need to hold in it),
 which formally represents the hierarchical structure of interrogative agendas discussed above. Although the lattices $\mathrm{E}(W)$ are in general not even distributive, they resemble powerset algebras in  some important respects, for instance in their being join-generated and meet-generated by their atoms and co-atoms respectively (for a proof see the Appendix\ref{prop:charact join-irr}).  The following proposition characterizes the set $\jty(\mathrm{E}(W))$ of  atoms and the set $\mty(\mathrm{E}(W))$ of co-atoms of $\mathrm{E}(W)$\footnote{For any lattice $\mathbb{A}$, the symbols $\jty(\mathbb{A})$ and $\mty(\mathbb{A})$ are  standard  for denoting the sets of the completely join- and meet-irreducible  elements of $\mathbb{A}$, respectively \cite{gehrke2004bounded,dunn2005canonical}}.
 
 \begin{prop}
\label{prop:charact meet-irr}
For any set $W$, if $|W|\geq 2$, then 
\begin{enumerate}
    \item $e\in \mty(\mathrm{E}(W))$ iff $e$ is identified by some  partition of the form $\mathcal{E}_{X}: = \{X, W\setminus X\}$ with $\varnothing \subsetneq X \subsetneq W$.
\item $e\in \jty(\mathrm{E}(W))$ iff $e$ is identified by some partition of the form $\mathcal{E}_{xy}: = \{\{x, y\}\}\cup \{\{z\}\mid z\in W\setminus \{x, y\} \}$ with $x, y \in W$ such that $x\neq y$.
 \end{enumerate}
\end{prop}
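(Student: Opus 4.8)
The plan is to exploit the fact, recorded in the text just above the statement, that $\mathrm{E}(W)$ is join-generated by its atoms and meet-generated by its co-atoms. Recall that $\mathrm{E}(W)$, ordered by inclusion of relations, has bottom element the identity relation $\Delta$ (the partition of $W$ into singletons) and top element $W\times W$ (the one-block partition); its meet is intersection of relations, while its join is the equivalence relation generated by the union, i.e.\ the transitive closure of the union. I would treat the two items by the same two-step pattern: first identify the atoms and co-atoms explicitly, then read off the completely join- and meet-irreducible elements.

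For item 2, I would first check that each $\mathcal{E}_{xy}$ (with $x\neq y$) is an atom. Indeed $\mathcal{E}_{xy}$ strictly contains $\Delta$, and any equivalence relation $e$ with $\Delta\subsetneq e\subseteq \mathcal{E}_{xy}$ must contain some off-diagonal pair, which can only be $(x,y)$ or $(y,x)$; hence $e\supseteq \mathcal{E}_{xy}$ and so $e=\mathcal{E}_{xy}$. Conversely, if $e$ is an atom then $e$ contains some pair $(x,y)$ with $x\neq y$, so $\Delta\subsetneq\mathcal{E}_{xy}\subseteq e$, and minimality forces $e=\mathcal{E}_{xy}$. Since $\mathrm{E}(W)$ is join-generated by its atoms, every element is the join of the atoms below it, so applying the definition of complete join-irreducibility to this representation shows that the completely join-irreducible elements are exactly the atoms, i.e.\ exactly the $\mathcal{E}_{xy}$. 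The hypothesis $|W|\geq 2$ is used here to guarantee that atoms exist and that $\Delta\neq W\times W$.

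For item 1, dually, I would check that each $\mathcal{E}_X=\{X,W\setminus X\}$ (with $\varnothing\subsetneq X\subsetneq W$) is a co-atom: it has exactly two blocks, and any coarsening of it is either itself or $W\times W$, so nothing lies strictly between $\mathcal{E}_X$ and the top. Conversely, a co-atom cannot have three or more blocks, since merging two of three blocks would produce an equivalence relation strictly between it and the top; hence a co-atom has exactly two blocks and is of the form $\mathcal{E}_X$. Because $\mathrm{E}(W)$ is meet-generated by its co-atoms, the dual of the argument above shows that the completely meet-irreducible elements are exactly the co-atoms, i.e.\ exactly the $\mathcal{E}_X$.

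The only genuinely delicate points are bookkeeping ones: one must use that the join in $\mathrm{E}(W)$ is the transitive closure of the union rather than the bare union, so that $e=\bigvee\{\mathcal{E}_{xy}\mid (x,y)\in e,\ x\neq y\}$ really holds; and, on the meet side, that two elements lying in distinct $e$-blocks can always be separated by a bipartition $\mathcal{E}_X$ with $X$ a union of $e$-blocks, which is what underlies meet-generation by co-atoms. I expect the main obstacle, if the appendix result on join-/meet-generation is not invoked wholesale, to be the meet-generation step, namely verifying that every $e$ equals the meet of the co-atoms above it; the join-generation and the two covering analyses are routine.
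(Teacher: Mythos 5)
Your proof is correct, but it follows a genuinely different route from the paper's. The paper proves (in its appendix) only item 2, and does so by a direct, per-element contradiction: given a completely join-irreducible $e$ and an off-diagonal pair $(x,y)\in e$, it shows that if the $e$-class of $x$ contained three pairwise distinct elements, or if some other $e$-class were not a singleton, then $e$ would decompose as $e_1\sqcup e_2$ with $e\neq e_1$ and $e\neq e_2$, contradicting join-irreducibility; no generation property of the lattice is invoked. You instead factor the statement through two facts: (i) the atoms (resp.\ co-atoms) of $\mathrm{E}(W)$ are exactly the relations $\mathcal{E}_{xy}$ (resp.\ the bipartitions $\mathcal{E}_X$), established by the covering analyses you give, and (ii) $\mathrm{E}(W)$ is join-generated by its atoms and meet-generated by its co-atoms, after which the identification of the completely join-irreducible (resp.\ completely meet-irreducible) elements with the atoms (resp.\ co-atoms) follows at once from the definition of complete irreducibility. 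Two remarks on this. First, you are right to treat (ii) as something to be proved rather than quoted: the paper's own footnote justifying (ii) points to the appendix proposition, i.e.\ to the very statement at issue, so invoking it wholesale would be circular; your sketches close this gap correctly (the union of the $\mathcal{E}_{xy}$ over the off-diagonal pairs of $e$ is $e$ itself, hence already transitive, so the join adds nothing; and any two points in distinct $e$-blocks are separated by a bipartition $\mathcal{E}_X$ with $X$ an $e$-saturated set, e.g.\ $X=[u]_e$). Second, the trade-off: your route is uniform in the two items and actually supplies a proof of item 1, which the paper never writes down, and it isolates a reusable lattice-theoretic principle (in a complete lattice generated by its atoms/co-atoms, the completely irreducible elements are exactly those generators); the paper's route needs no global generation fact and, as a by-product, establishes the slightly stronger statement that even binary join-irreducibility already forces a non-bottom element of $\mathrm{E}(W)$ to be an atom.
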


 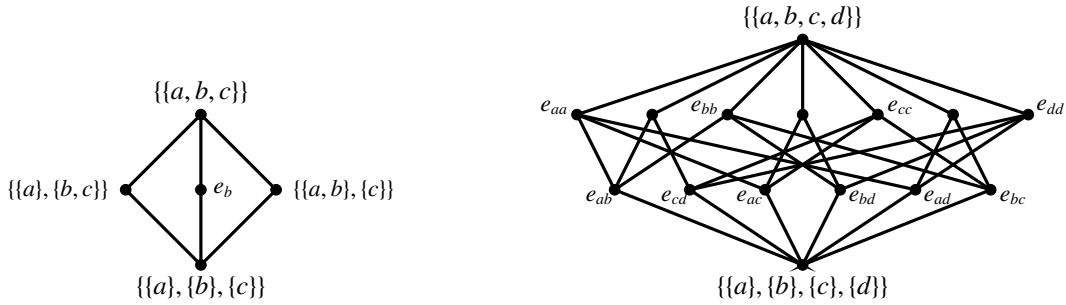
\begin{figure}[h!]
\begin{tikzpicture}
\draw[very thick] (0, -1) -- (0, 1) --
	(-1, 0) -- (0, -1) -- (1, 0) -- (0, 1);
	
	\filldraw[black] (0,-1) circle (2 pt);
	\filldraw[black] (0, 1) circle (2 pt);
	\filldraw[black] (-1, 0) circle (2 pt);
	\filldraw[black] (1, 0) circle (2 pt);
	\filldraw[black] (0, 0) circle (2 pt);
		\draw (0, 1.3) node {$\{\{a, b, c\}\}$};
	\draw (0, -1.3) node {$\{\{a\}, \{b\}, \{c\}\}$};
	\draw (-1.9, 0) node {{\small{$\{\{a\}, \{b, c\}\}$}}};
    \draw (0.3, 0) node {{\small{$e_b$}}};
    \draw (1.9, 0) node {{\small{$\{\{a, b\}, \{c\}\}$}}};

    %%%%% the second one %%%%%%%%%%%%%

    \filldraw[black] (8,2) circle (2 pt);

    \filldraw[black] (6,1) circle (2 pt); % {{ab}, {cd}}
\filldraw[black] (8,1) circle (2 pt); % {{ac}, {bd}}
\filldraw[black] (10,1) circle (2 pt); % {{ad}, {bc}}

  \filldraw[black] (5,1) circle (2 pt); % {{a}, {bcd}}
   \draw (4.7, 1.1) node {{\small{$e_{aa}$}}};
    \filldraw[black] (7,1) circle (2 pt); % {{b}, {acd}}
      \draw (6.7, 1.1) node {{\small{$e_{bb}$}}};
       \filldraw[black] (9,1) circle (2 pt); % {{c}, {abd}}
         \draw (9.3, 1.1) node {{\small{$e_{cc}$}}};
          \filldraw[black] (11,1) circle (2 pt); % {{d}, {abc}}
          \draw (11.3, 1.1) node {{\small{$e_{dd}$}}};
          \draw[very thick] (8, 2) -- (5, 1);
           \draw[very thick] (8, 2) -- (7, 1);
            \draw[very thick] (8, 2) -- (9, 1);
             \draw[very thick] (8, 2) -- (11, 1);

              \draw[very thick] (5, 1) -- (5.5, 0); %{{a}, {bcd}}
              \draw[very thick] (5, 1) -- (7.5, 0);
              \draw[very thick] (5, 1) -- (9.5, 0);
	
              \draw[very thick] (7, 1) -- (5.5, 0); %{{b}, {acd}}
              \draw[very thick] (7, 1) -- (8.5, 0);
              \draw[very thick] (7, 1) -- (10.5, 0);

              \draw[very thick] (9, 1) -- (6.5, 0); %{{c}, {abd}}
              \draw[very thick] (9, 1) -- (7.5, 0);
              \draw[very thick] (9, 1) -- (10.5, 0);

              \draw[very thick] (11, 1) -- (6.5, 0); %{{d}, {abc}}
              \draw[very thick] (11, 1) -- (8.5, 0);
              \draw[very thick] (11, 1) -- (9.5, 0);
\filldraw[black] (5.5,0) circle (2 pt); % ab
 \draw (5.3, -0.1) node {{\small{$e_{ab}$}}};
\filldraw[black] (6.5,0) circle (2 pt); %cd
 \draw (6.3, -0.1) node {{\small{$e_{cd}$}}};
\filldraw[black] (7.5,0) circle (2 pt); %ac
 \draw (7.3, -0.1) node {{\small{$e_{ac}$}}};
\filldraw[black] (8.5,0) circle (2 pt); %bd
 \draw (8.8, -0.1) node {{\small{$e_{bd}$}}};
\filldraw[black] (9.5,0) circle (2 pt); % ad
 \draw (9.8, -0.1) node {{\small{$e_{ad}$}}};
\filldraw[black] (10.5,0) circle (2 pt); %bc
 \draw (10.8, -0.1) node {{\small{$e_{bc}$}}};
\filldraw[black] (8,-1) circle (2 pt);
\draw[very thick] (8, 2) -- (6, 1) --
	(5.5, 0) -- (8, -1) -- (6.5, 0) -- (6, 1);	
\draw[very thick] (8, 2) -- (8, 1) --
	(7.5, 0) -- (8, -1) -- (8.5, 0) -- (8, 1);	
\draw[very thick] (8, 2) -- (10, 1) --
	(9.5, 0) -- (8, -1) -- (10.5, 0) -- (10, 1);	
    	\draw (8, 2.3) node {$\{\{a, b, c, d\}\}$};
    \draw (8, -1.3) node {$\{\{a\}, \{b\}, \{c\}, \{d\} \}$};
\end{tikzpicture}
\caption{\footnotesize{The lattices of equivalence relations on  the three-element set $W := \{a, b, c\}$,  and the four-element set $W: =\{a, b, c, d\}$. In the lattice on the left, $e_b$ corresponds to the partition $\{\{b\}, \{a, c\}\}$. In the lattice on the right,  $e_{xy} = \{\{x\}, \{y\}, W\setminus\{x, y\}\}$ for all $x, y\in \{a, b, c, d\}$, and the unlabelled nodes correspond, from left to right, to the partitions $\{\{a, b\}, \{c, d\}\}$,  $\{\{a, c\}, \{b, d\}\}$, and $\{\{a, d\}, \{b, c\}\}$, respectively.}}
\end{figure}

It is well known that every general lattice is a sublattice of the lattice of equivalence relations on some set \cite{whitman1946lattices}. This immediately implies that the negation-free fragment of classical propositional logic without the distributivity axioms (which we refer to as the basic non-distributive logic) is sound and complete w.r.t.~the class of lattices of equivalence relations. Hence, the basic non-distributive logic can be regarded as the basic logic of interrogative agendas. This basic framework naturally lends itself to be enriched with various kinds of logical operators, such as epistemic operators, which represent the way in which the interrogative agenda of an agent (or a group of agents) is perceived or known by another agent (or group), and dynamic operators, which encode the changes in agents’ interrogative agendas. Of particular interest for the sake of this research line is the possibility of enriching the basic framework with heterogeneous operators, suitable to encode the interaction among different kinds of entities; for instance, operators that associate (groups of) agents $c$ with their (common) interrogative agenda $\Diamond c$, or operators that associate pairs $(e,\phi)$, such that $e$ is an interrogative agenda and $\phi$ is a formula (event), with the formula $e{\pdla}\phi$,  representing the content of $\phi$ ‘filtered through’ the interrogative agenda $e$.  On the basis of these ideas, a fully-fledged formal epistemic theory of the interrogative agendas of social groups and individuals can be developed, and in the following sections we will start building this theory.

As discussed in the introduction, in the present paper we aim at modelling the different ways different agents categorize, based on the different subsets $Y$ of a given set $X$ of features they consider relevant.
Let $u$ be an irrelevant feature which every object under consideration has. Let $X' = X \cup \{u\}$. We identify the features which are not in the given interrogative agenda with the irrelevant feature $u$ via an equivalence relation. This can be interpreted as treating these features as irrelevant. Accordingly, we will model the epistemic stance of an agent $j$ who considers the features in $Y\subseteq X$ relevant as the interrogative agenda (i.e.~the equivalence relation on $X'$) identified by the partition $\{\{x\}\mid x\in Y\}\cup \{X'\setminus Y\}
$. That is, the interrogative agenda of agent $j$ identifies  each feature that $j$ considers relevant only with itself, and identifies all the other features with the irrelevant feature $u$.  Notice that equivalence relations corresponding to partitions of this shape are exactly those that are meet-generated by co-atoms identified by bi-partitions of the form $\{\{x\}, X'\setminus\{x\}\}$, that is, bi-partitions of $X'$ such that one cell is a singleton set with an element from $X$.   In what follows, rather than working with the whole lattice $\mathrm{E}(X')$, we will only work with the sub-meet-semilattice $\mathbb{D}$ of $\mathrm{E}(X')$ which is meet-generated by those elements $m_x\in \mty(\mathrm{E}(X'))$ which are identified by  bi-partitions of the form $\{\{x\}, X'\setminus\{x\}\}$, for each $x\in X$. The following proposition provides an equivalent representation of $\mathbb{D}$, which somewhat simplifies the exposition.

\begin{prop}
For every set $X$, the sub-meet-semilattice $\mathbb{D}$ of $\mathrm{E}(X')$ which is meet-generated by those elements $m_x\in \mty(\mathrm{E}(X'))$ identified by  bi-partitions of the form $\{\{x\}, X'\setminus\{x\}\}$  is order-isomorphic to $(\mathcal{P}(X), \supseteq)$.
\end{prop}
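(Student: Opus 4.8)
The plan is to produce an explicit order-reversing bijection between $(\mathcal{P}(X),\subseteq)$ and $\mathbb{D}$ (ordered by refinement, inherited from $\mathrm{E}(X')$); equivalently, an order isomorphism between $(\mathcal{P}(X),\supseteq)$ and $\mathbb{D}$. The natural candidate sends each $Y\subseteq X$ to the meet of the corresponding generators.

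First I would describe the elements of $\mathbb{D}$ concretely. Meets in $\mathrm{E}(X')$ are intersections of relations, so for $Y\subseteq X$ I would compute $e_Y := \bigwedge_{x\in Y} m_x$. Since $m_x$ is the partition $\{\{x\}, X'\setminus\{x\}\}$, a direct check shows that two distinct elements $a, b\in X'$ are $e_Y$-related precisely when $\{a,b\}\cap Y = \varnothing$; hence $e_Y$ is the partition $\{\{x\}\mid x\in Y\}\cup\{X'\setminus Y\}$, with the empty meet ($Y=\varnothing$) giving the top element $\{X'\}$. A one-line computation gives $e_{Y_1}\wedge e_{Y_2} = e_{Y_1\cup Y_2}$, so $\{e_Y\mid Y\subseteq X\}$ contains every generator and is closed under meets; therefore $\mathbb{D} = \{e_Y\mid Y\subseteq X\}$.

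Next I would verify that $Y\mapsto e_Y$ is a bijection. Surjectivity is the previous step. For injectivity I would recover $Y$ from $e_Y$ as the set of $x\in X$ for which $\{x\}$ is a block of $e_Y$. The one point requiring care, and what I regard as the crux, is the role of the distinguished feature $u$: because the generators are indexed only by $x\in X$, the element $u$ always sits inside the non-singleton block $X'\setminus Y$, the single exception being $Y=X$, where $e_X$ is the discrete partition. Consequently no two distinct $Y\subseteq X$ yield the same partition (the potential collision, in which $X'\setminus Y$ degenerates to a singleton, occurs only at $Y=X$), and this is exactly why the correspondence lands on $\mathcal{P}(X)$ rather than on a larger family.

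Finally I would check order-reversal. For distinct $a,b$, being $e_Y$-related amounts to $\{a,b\}\subseteq X'\setminus Y$, so $e_{Y_1}\leq e_{Y_2}$ (that is, $e_{Y_1}\subseteq e_{Y_2}$ as relations) holds iff $X'\setminus Y_1\subseteq X'\setminus Y_2$, iff $Y_1\supseteq Y_2$; the degenerate case $Y_1=X$ is covered by the fact that $u\notin Y_2$ always. Hence $Y\mapsto e_Y$ is an order isomorphism from $(\mathcal{P}(X),\supseteq)$ onto $\mathbb{D}$, as claimed.
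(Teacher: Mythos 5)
Your proof is correct and follows essentially the same route as the paper's: both send $Y\subseteq X$ to the element $e_Y=\bigsqcap\{m_x\mid x\in Y\}$ identified with the partition $\{\{x\}\mid x\in Y\}\cup\{X'\setminus Y\}$, recover $Y$ from $e_Y$ (you via singleton blocks, the paper via $Y_e=\{x\mid e\leq m_x\}$, which amounts to the same thing), and check that the correspondence reverses order. Your write-up simply makes explicit the details the paper leaves as "straightforward to verify," notably the closure of $\{e_Y\mid Y\subseteq X\}$ under meets and the degenerate case $Y=X$ involving the dummy feature $u$.
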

\begin{proof}
For every $Y\in \mathcal{P}(X)$, let $e_Y\in \mathrm{E}(X')$ be  identified by the partition $\{\{x\}\mid x\in Y\}\cup \{X'\setminus Y\}
$. As discussed above, $e_Y = \bigsqcap\{m_x\mid x\in Y\}$, where for each $x\in Y$, the co-atom $m_x\in \mty(\mathrm{E}(X'))$ is the one identified by the bi-partition  $\{\{x\}, X'\setminus\{x\}\}$. Hence, $e_Y\in \mathbb{D}$. Conversely, for each $e\in \mathbb{D}$, let $Y_e: = \{x\in X'\mid e\leq m_x\}\in \mathcal{P}(X)$. It is straightforward to verify that $e_{Y_e} = e$ and $Y_{e_{Y}} = Y$ for any $e\in \mathbb{D}$ and $Y\in \mathcal{P}(X)$. Moreover, $e_1\leq e_2$ iff $Y_{e_1}\supseteq Y_{e_2}$, as required. %  For every The interrogative agenda denoting features of interest being $x_1$ can be identified with the equivalence relation $R_{X_1} \subseteq X_U \times X_U$ given by \[R_{X_1}=\{(x, x_u) \mid x\not\in X_1\}.\]
%
% As the  meet in the lattice $E(X_U)$ is the intersection of relations (considered as sets) we have 
%\[R_{X_1} \wedge R_{X_2} = \{(x, x_u) \mid x \not\in X_1 \&  x \not\in X_2\} =R_{X_1 \cup X_2}.\]
%The join of two equivalence relations in the lattice $E(X_U)$ is given by the smallest equivalence relation having both of them as a subset. Thus,\[R_{X_1} \vee R_{X_2} = \{(x,x_u) \mid x \not\in X_1 \quad \text{or} \quad   x\not\in X_2\} =R_{X_1 \cap X_2}.\]
%Thus, the lattice of the agendas given by $X_1 \subseteq X$ with the order defined by $X_1 \leq X_2$ iff $X_2 \subseteq X_1$ is isomorphic to the sub-lattice of $E(X_U)$ consisting of the relations of the form $R_Y$ for some $Y \subseteq X$, that is to the lattice $P(X)^\delta$. Thus, this lattice is isomorphic the lattice meet generated by elements of $X$. 
\end{proof} 

\paragraph{Financial statements networks.} \label{financial statements prelim} A {\em financial statements network} is constructed from journal entry data, which describe the change in financial position and that is readily available in all companies. The journal entry records show how much money flows from one set of financial accounts to another set. These entries are generated by their underlying business process, for example, the Sales process. A financial statements network is a bipartite digraph $\mathbb{G}=(A \cup X, E)$  with $X$ as the set of financial account nodes and $A$ as the set of business process nodes and $E \subseteq X \times A$ as the set of directed edges. Clearly, each bipartite digraph $\mathbb{G}=(A \cup X, E)$ as above can be equivalently represented as a formal context $\mathbb{P} = (A, X, E)$. The set of financial accounts can be obtained from the journal entry data. A business process is derived from the journal entry structure. The structure represents the relative amounts debited and credited for each financial account. Although amounts can be different, all journal entries with the same structure are considered equal. A formal definition of a {\em business process} can be written as \cite{boersma2018financial}
\begin{equation} \label{eq:business process definition}
    a: \sum_{x_i \in A }\alpha_ix_i \implies \sum_{y_j \in X} \beta_jy_j
\end{equation}
where $m$ is the number of credited financial accounts, $n$ is the number of debited financial accounts and $\alpha_i$ is the relative amount with respect to the total credited and $\beta_j$  the relative amount with respect to the total debited. The arrow here represents the flow of money between the accounts. The (weighted) edges between nodes $x_i$ or $y_j$ and $B$ are the coefficients $\alpha_i$ and $\beta_j$ from the business process definition in Equation \eqref{eq:business process definition}.

\section{Interrogative agendas, coalitions, and categorization} \label{sec:Interrogative agendas, coalitions, and categorization} 
In this section, we introduce the logical framework we use to formalize different agents and their agendas or features of interest and the categorizations obtained from such agendas. 
\paragraph{Types.} Let $\mathbb{P} = (A,X,I) $ be a  (finite) formal context obtained from a bipartite graph.  Let $C$ be a (finite) set of agents.  Let $\mathbb{C} = (\mathcal{P}(C), \cup,\cap, (\cdot)^\mathrm{c}, \bot,\top)$ and $\mathbb{D}=(\mathcal{P}(X), \sqcup,\sqcap, (\cdot)^\mathrm{c}, \epsilon, \tau)$\footnote{The  operation $(\cdot)^{\color{red}\mathrm{c}}$ denote the complement with respect to both $C$ and $X$. In general, it will be clear from the context which one of the two is used. For any set $B\subseteq C$ (resp.\ $Y\subseteq X$), $\bigcup B$ (resp.\ $\bigsqcup Y$) and $\bigcap B$ (resp.\ $\bigsqcap Y$) are the arbitrary joins (resp.\ meets)  on the  lattices.} be the Boolean algebras join-generated and meet-generated by elements of $C$  and  $X$, respectively. (The explanation for taking the lattice meet-generated by $X$ is given in \ref{prelim:Logical modelling}).
Any agent $j$ and issue $m$ can be identified with an atom (join-generator) of $\mathbb{C}$  and  a co-atom (meet-generator) of $\mathbb{D}$, respectively. That is, agents and issues in our formal model are defined as elements of the sets $C$ and $D$ respectively. 
We interpret any $c \in \mathbb{C}$ as a coalition of agents $j \in C$,  and we read $j \leq c$ as agent $j$ being a member of coalition $c$. Similarly, we interpret  any $d \in \mathbb{D}$ as the interrogative agenda supporting all the issues $m \in X$ such that $m \geq d$. That is, in this formal model coalitions and interrogative agendas are defined to be arbitrary subsets of $C$ and $\mathbb{D}$ respectively. 

From here on, we use $d \in \mathbb{D}$, and the set $Y=\{ m \in X \mid m \geq d\}$ interchangeably to denote an interrogative agenda consisting of all issues in the set $Y$.

\paragraph{Unary heterogeneous connectives. } Consider the following relation:
\[R\subseteq X\times C\quad\quad m R j\; \mbox{ iff\; issue } m \mbox{ is {\em relevant} to agent } j.\]
The relation $R$ induces the  operations $\Diamond, {\rhd}: \mathbb{C}\to \mathbb{D}$ defined as follows: for every agent $j$, let $\Diamond j = {\rhd}j: = \bigsqcap R^{-1}[j]$, where $R^{-1}[j] := \{m\mid m R j\}$. Then, for every $c\in \mathbb{C}$, 
\[\Diamond c: = \bigsqcup \{\Diamond j\mid j\leq c\} \quad\quad {\rhd}c: = \bigsqcap \{{\rhd} j\mid j\leq c\}.\]

Under the intended interpretation of $R$,
for every coalition $c$, the interrogative agenda ${\rhd} c$  denotes  the {\em distributed agenda} of $c$ (i.e.~${\rhd} c$ is the interrogative agenda supporting exactly those issues supported by {\em at least one} member of $c$), while
$\Diamond c$ is the {\em common agenda} of $c$ (i.e.~$\Diamond c$ is the interrogative agenda supporting exactly those issues supported by {\em all} members of $c$). Algebraically: \[{\rhd}c\leq m\quad \mbox{ iff }\quad {\rhd}j\leq m\; \mbox{ for some }\; j\leq c \quad\quad\quad
%Algebraically:\footnote{in the distributive setting, this would correspond to the usual approximation clause}
\Diamond c\nleq m\quad \mbox{ iff }\quad \Diamond j\nleq m\; \mbox{ for some }\; j\leq c.\]
For an example illustrating  common and distributed agendas see Section \ref{subsec:Deliberation in crisp case}. 
\begin{prop}  \label{order}
For all $c_1, c_2\in \mathbb{C}$,
\begin{enumerate}
\item $ \Diamond(c_1\cup c_2) = \Diamond c_1 \sqcup \Diamond c_2$ and $\Diamond \bot = \epsilon$;
\item ${\rhd} \bot = \tau$ and ${\rhd}$ is antitone;
\item  $ {\rhd}(c_1\cup c_2) = {\rhd} c_1 \sqcap {\rhd} c_2$,
\end{enumerate}
where $\epsilon$,  and $\tau$ denote the bottom and top of the Boolean algebra $\mathbb{D}$ respectively.
\end{prop}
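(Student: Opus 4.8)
The plan is to reduce all three items to a single structural fact about the powerset Boolean algebra $\mathbb{C}$: every agent $j$, being an atom of $\mathbb{C}=\mathcal{P}(C)$, is \emph{join-prime}, so that for all $c_1,c_2\in\mathbb{C}$ we have $j\leq c_1\cup c_2$ iff $j\leq c_1$ or $j\leq c_2$. Writing $\mathrm{At}(c):=\{j\mid j\leq c\}$ for the set of atoms below $c$, this says $\mathrm{At}(c_1\cup c_2)=\mathrm{At}(c_1)\cup\mathrm{At}(c_2)$, together with $\mathrm{At}(\bot)=\varnothing$. Since atoms of $\mathcal{P}(C)$ are singletons, this is immediate, but isolating it first turns the remainder into pure manipulation of the indexed joins and meets that define $\Diamond c=\bigsqcup\{\Diamond j\mid j\leq c\}$ and ${\rhd}c=\bigsqcap\{{\rhd}j\mid j\leq c\}$.

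For item (1), I would unfold $\Diamond(c_1\cup c_2)=\bigsqcup\{\Diamond j\mid j\in\mathrm{At}(c_1\cup c_2)\}$, substitute $\mathrm{At}(c_1\cup c_2)=\mathrm{At}(c_1)\cup\mathrm{At}(c_2)$, and then use commutativity, associativity and idempotency of $\bigsqcup$ in $\mathbb{D}$ to split the join over the union of the two index sets as $\bigl(\bigsqcup\{\Diamond j\mid j\in\mathrm{At}(c_1)\}\bigr)\sqcup\bigl(\bigsqcup\{\Diamond j\mid j\in\mathrm{At}(c_2)\}\bigr)=\Diamond c_1\sqcup\Diamond c_2$. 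For $\Diamond\bot=\epsilon$, note $\mathrm{At}(\bot)=\varnothing$, so the defining join is the empty join, which by convention equals the bottom $\epsilon$ of $\mathbb{D}$.

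Item (2) is dual: ${\rhd}\bot=\bigsqcap\varnothing=\tau$ by the empty-meet convention. For antitonicity, if $c_1\leq c_2$ then $\mathrm{At}(c_1)\subseteq\mathrm{At}(c_2)$, and a meet taken over a larger family lies below the meet over any subfamily; hence $\bigsqcap\{{\rhd}j\mid j\in\mathrm{At}(c_2)\}\leq\bigsqcap\{{\rhd}j\mid j\in\mathrm{At}(c_1)\}$, i.e.~${\rhd}c_2\leq{\rhd}c_1$. Item (3) mirrors (1) with $\bigsqcap$ in place of $\bigsqcup$: split the meet over $\mathrm{At}(c_1\cup c_2)=\mathrm{At}(c_1)\cup\mathrm{At}(c_2)$ into ${\rhd}c_1\sqcap{\rhd}c_2$ using associativity of $\bigsqcap$.

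I do not expect a genuinely hard step here; the only points requiring care are the conventions for the empty join and empty meet (which are exactly what force $\Diamond\bot=\epsilon$ and ${\rhd}\bot=\tau$) and keeping the order of $\mathbb{D}$ straight. Because $\mathbb{D}$ carries the reversed-inclusion order, I would argue everything purely lattice-theoretically in terms of $\bigsqcup$, $\bigsqcap$, $\epsilon$, $\tau$ rather than translating back into set union and intersection, so as not to accidentally invert a direction. Join-primeness of atoms is the real engine, and once it is stated the three parts follow by routine rearrangement.
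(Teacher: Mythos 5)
Your proof is correct, and it takes a genuinely different route from the paper's on the two non-trivial identities. The paper proves each equality in items 1 and 3 as a pair of inequalities: the easy direction follows from monotonicity (resp.\ antitonicity) of the index families $\{j \mid j\leq c\}$, and the hard direction is obtained by a testing argument — in item 1 by checking $\leq m$ against upper bounds, and in item 3 by checking against co-atoms $m\in X$, explicitly invoking that such $m$ are completely meet-prime in $\mathbb{D}$ and that atoms $j$ are completely join-prime in $\mathbb{C}$. You instead isolate the single fact $\mathrm{At}(c_1\cup c_2)=\mathrm{At}(c_1)\cup\mathrm{At}(c_2)$ (join-primeness of atoms, plus $\mathrm{At}(\bot)=\varnothing$) and then apply the general complete-lattice identities $\bigsqcup(A\cup B)=\bigsqcup A\sqcup\bigsqcup B$ and $\bigsqcap(A\cup B)=\bigsqcap A\sqcap\bigsqcap B$ to the defining joins and meets, which makes items 1 and 3 literally dual one-line computations. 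What your approach buys is uniformity and slightly greater generality: it needs nothing about $\mathbb{D}$ beyond completeness, whereas the paper's item-3 argument relies on $\mathbb{D}$ being meet-generated by completely meet-prime co-atoms. What the paper's approach buys is that its testing-against-primes pattern is the one reused later (e.g.\ in the analogous proposition for $\pdla$ in Section 4.1, where the operation is indexed by pairs and the direct index-splitting computation is less immediate), so the authors' proof doubles as a template for the rest of the paper. The empty-join and empty-meet conventions you invoke for $\Diamond\bot=\epsilon$ and ${\rhd}\bot=\tau$ are exactly the paper's.
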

\begin{proof}
1. By definition, $\Diamond \bot = \bigsqcup\{\Diamond j\mid j\leq \bot\} = \bigsqcup \varnothing = \bot_{\mathbb{D}} = \epsilon$. If $c_1\leq c_2$ then $\{\Diamond j\mid j\leq c_1\}\subseteq \{\Diamond j\mid j\leq c_2\}$, and hence $\Diamond c_1: = \bigsqcup\{\Diamond j\mid j\leq c_1\}\leq \bigsqcup\{\Diamond j\mid j\leq c_2\}: = \Diamond c_2$. This implies that $\Diamond c_1\sqcup \Diamond c_2 \leq \Diamond (c_1\cup c_2)$. Conversely, $\Diamond c_1 \sqcup \Diamond  c_2\leq m$ iff $\Diamond c_1 \leq m$ and $ \Diamond c_2\leq m$, i.e.~$\bigsqcup\{\Diamond j\mid j\leq c_1\}\leq m$ and $\bigsqcup\{\Diamond j\mid j\leq c_2\}\leq m$, iff $\Diamond j\leq m$ for every $j\leq c_1$ and every $j\leq c_2$, i.e.~$\Diamond(c_1\cup c_2): = \bigsqcup\{\Diamond j\mid j\leq c_1\cup c_2\}\leq m$.

2. By definition, ${\rhd} \bot = \bigsqcap\{{\rhd} j\mid j\leq \bot\} = \bigsqcap \varnothing = \top_{\mathbb{D}} = \tau$. If $c_1\leq c_2$ then $\{{\rhd} j\mid j\leq c_1\}\subseteq \{{\rhd} j\mid j\leq c_2\}$, and hence ${\rhd} c_2: = \bigsqcap\{\Diamond j\mid j\leq c_2\}\leq \bigsqcap\{{\rhd} j\mid j\leq c_1\}: = {\rhd} c_1$. This implies that ${\rhd} (c_1\cup c_2) \leq {\rhd} c_1\sqcap {\rhd} c_2$.

3. By 2, it is enough to show that if $m\in X$ and ${\rhd} (c_1 \cup c_2)\leq m$, then ${\rhd} c_1 \sqcap {\rhd} c_2\leq m$. Since $m$ is completely meet-prime, the assumption $\bigsqcap \{{\rhd}j\mid j\leq c_1\cup c_2\} = {\rhd} (c_1 \cup c_2)\leq m$ implies ${\rhd}j\leq m$ for some $j\leq c_1\cup c_2$. Since $j$ is completely join-prime, $j\leq c_1\cup c_2$ implies $j\leq c_1$ or $j\leq c_2$. In either case, ${\rhd} c_1\sqcap {\rhd}c_2\leq {\rhd}c_i\leq {\rhd}j \leq m$, as required.
\end{proof}
Since  $\mathbb{D}$ is a Boolean algebra, we can define $\Box c: = \neg \Diamond \neg c$. Notice that $\Diamond \neg c$ is the interrogative agenda which supports all the issues that are considered relevant by all the agents out of coalition $c$. Hence, \[\Box c: = \bigsqcap \{m \mid \exists j (j\nleq c \& \Diamond j\nleq m)\}.\] 
Consider the following relation:
\[U\subseteq X \times C \quad\quad m U j \mbox{ iff issue } m \mbox{ is {\em irrelevant } to agent } j.\]
When $\mathbb{D}$ is a Boolean algebra, $R$ and $U$ can be expressed in terms of each other,  but in general this is not the case. We might define two more diamond and right-triangle type operators in terms of $U$. That is, some issues  are positively relevant to any given agent, others which are positively irrelevant and others which are neither. In this case, for any $j\in C$, we assume that $R^{-1}[j]\cap U^{-1}[j] = \varnothing$ .
%
%The relation $U$ induces the following operations $\Box, {\lhd}: \mathbb{C}\to \mathbb{D}$: for every agent $j$, let $\Box j = {\lhd}j: = \bigsqcap U^{-1}[n]$, where $n: = \mathsf{Ag}\setminus\{j\}$ and  $U^{-1}[j] := \{m\mid m R j\}$. Then, for every $c\in \mathbb{C}$,
\[\Diamond_U c: = \bigsqcup \{\Diamond_U j\mid j\leq c\} \quad\quad {\rhd_U}c: = \bigsqcap \{{\rhd_U} j\mid j\leq c\}.\]

 Let $X_1 \subseteq X$ be an interrogative agenda. This agenda induces a context $\mathbb{P}_1=(A,X_1,I_1)$, where $I_1 = I \cap {A} \times X_1$ on $\mathbb{P}$. Formal context $\mathbb{P}_1$ denotes the context of interest for an agent with agenda $X_1$. Let $\mathcal{R}$ denote the set of all  formal contexts induced from $\mathbb{P}$.
 We define an information ordering on all such induced contexts as follows: For any $\mathbb{P}_1, \mathbb{P}_2 \in \mathcal{R}$ , 
\[
\mathbb{P}_1=(A,X_1,I_1) \leq_I \mathbb{P}_2=(A,X_2,I_2) \quad \text{iff} \quad X_1 \subseteq X_2.
\]
\begin{comment}

This ordering also induces an ordering on the corresponding concept lattices given by 
\[
\mathbbit{L}_1 \leq_I \mathbb{L}_2 \quad \text{iff} \quad {\mathbb{L}_1}_+ \leq_I {\mathbb{L}_2}_+,
\]
where ${\mathbb{L}_i}_+=(A,X_i,I_i)$ such that $X_i \subseteq X$, $I_i= I \cap A\times X_i$, and the concept lattice of $\mathbb{P}_i$ is isomorphic to $\mathbb{L}_i$ for $i=1,2$.
\end{comment}

The order $\leq_I$ defines  a lattice on the set of induced formal contexts from $\mathbb{P}$, $\mathbb{R} = (\mathcal{R},\vee_I,\wedge_I) $ as follows.  For any $\mathbb{P}_1=(A,X_1,I \cap A \times X_1)$, and $\mathbb{P}_2=(A,X_2,I \cap A \times X_2))$, 
\[
\mathbb{P}_1 \wedge_I \mathbb{P}_2= (A,X_1 \cap X_2,I_1 \cap I_2)
\]
\[
\mathbb{P}_1 \vee_I \mathbb{P}_2= (A,X_1 \cup X_2, I \cap A \times (X_1 \cup X_2) ).
\]

\begin{prop} \label{prop:order concept preservation}
Let $\mathbb{P}_1$, $\mathbb{P}_2$ be the formal contexts induced from $\mathbb{P}$ such that  $\mathbb{P}_1 \leq_I \mathbb{P}_2$. Then for any $B \subseteq A$ if $B$  is Galois-stable in  set in $\mathbb{P}_1$, then it is Galois-stable in $\mathbb{P}_2$ as well. 
\end{prop}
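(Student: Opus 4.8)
The plan is to compare the two Galois closure operators on objects directly, exploiting the fact that enlarging the agenda can only shrink the closure of a set of objects. Write $I_i^{(1)}$ and $I_i^{(0)}$ for the derivation maps of $\mathbb{P}_i$ (for $i = 1,2$), and let $\mathsf{Cl}^i := I_i^{(0)}\circ I_i^{(1)}$ denote the induced closure operator on $\mathcal{P}(A)$, where the superscript indexes the context. Since each $\mathsf{Cl}^i$ is a closure operator, it is in particular extensive, so $B \subseteq \mathsf{Cl}^2(B)$ holds automatically; thus it suffices to prove the reverse inclusion $\mathsf{Cl}^2(B) \subseteq B$ under the hypothesis $\mathsf{Cl}^1(B) = B$.

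First I would record the two elementary compatibility facts between the contexts. Because $I_i = I \cap (A \times X_i)$, for any $x \in X_i$ and $a \in A$ we have $a I_i x$ iff $a I x$; consequently (a) for $B \subseteq A$ we get $I_1^{(1)}[B] = I_2^{(1)}[B] \cap X_1$, and in particular $I_1^{(1)}[B] \subseteq I_2^{(1)}[B]$ since $X_1 \subseteq X_2$; and (b) for any $Y \subseteq X_1$ the two operators agree, $I_1^{(0)}[Y] = I_2^{(0)}[Y]$, as the defining condition ``$\forall x \in Y\,(a I_i x)$'' does not distinguish $I_1$ from $I_2$ on $X_1$.

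The core step is then a short antitonicity argument. Setting $Y_1 := I_1^{(1)}[B]$ and $Y_2 := I_2^{(1)}[B]$, fact (a) gives $Y_1 \subseteq Y_2$, and since $(\cdot)^{(0)}$ is antitone, $I_2^{(0)}[Y_2] \subseteq I_2^{(0)}[Y_1]$. As $Y_1 \subseteq X_1$, fact (b) lets me rewrite the right-hand side as $I_1^{(0)}[Y_1]$, yielding $\mathsf{Cl}^2(B) = I_2^{(0)}[Y_2] \subseteq I_1^{(0)}[Y_1] = \mathsf{Cl}^1(B)$. Combining this with the hypothesis $\mathsf{Cl}^1(B) = B$ and the extensivity noted above gives $B \subseteq \mathsf{Cl}^2(B) \subseteq \mathsf{Cl}^1(B) = B$, so $\mathsf{Cl}^2(B) = B$ and $B$ is Galois-stable in $\mathbb{P}_2$.

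I do not expect any serious obstacle: the only thing requiring care is the bookkeeping about which context's derivation operator is applied, together with the (standard but easy to misstate) fact that enlarging the feature set reverses inclusions at the level of closures. The intuition worth flagging is monotonicity in the ``wrong'' direction: more features mean finer discrimination of objects, hence smaller closures, which is exactly what forces a set already closed in the coarser context $\mathbb{P}_1$ to remain closed in the finer context $\mathbb{P}_2$.
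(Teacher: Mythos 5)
Your proof is correct and follows essentially the same route as the paper's: both record that $I_1^{(1)}[B] \subseteq I_2^{(1)}[B]$ and that $I_1^{(0)}$ and $I_2^{(0)}$ agree on subsets of $X_1$, then use antitonicity of the object-derivation operator to get $I_2^{(0)}[I_2^{(1)}[B]] \subseteq I_1^{(0)}[I_1^{(1)}[B]] = B$. The only cosmetic difference is that you state the sharper identity $I_1^{(1)}[B] = I_2^{(1)}[B] \cap X_1$, where the paper only needs the inclusion.
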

\begin{proof}
It is enough to show that $ I_2^{(0)}I_2^{(1)}[B] \subseteq B$. As $\mathbb{P}_1 \leq_I \mathbb{P}_2$, we have $X_1 \subseteq X_2$, and $I_1= I_2 \cap A \times X_1$. Thus, $I_1^{(1)}[B] \subseteq I_2^{(1)}[B]$ and for $Z\subseteq X_1$, $I_1^{(0)}[Z]=I_2^{(0)}[Z]$, which together imply: $$I_2^{(0)}[I_2^{(1)}[B]]\subseteq I_2^{(0)}[I_1^{(1)}[B]]= I_1^{(0)}[I_1^{(1)}[B]].$$ By the assumption that $B$ is Galois-stable in  set in $\mathbb{P}_1$, it follows that $I_1^{(0)}[I_1^{(1)}[B]]=B$. Hence $ I_2^{(0)}I_2^{(1)}[B] \subseteq B$. This concludes the proof.
\end{proof}
Thus, a larger agenda (and hence larger context in $\leq_I$ ordering) corresponds to a larger formal context in information ordering, and in turn to a finer categorization. This is consistent with the intuition that the larger agenda means larger information considered for differentiating  objects leading to finer categorization. The following corollary is immediate from the Proposition \ref{prop:order concept preservation}.
\begin{cor}\label{cor:up-set Galois stability}
For any $B \subseteq A$, let $\mathcal{R}_B$ be the set of induced formal contexts  of $\mathbb{P}=(A,X,I) $ for which  $B$ is Galois-stable. Then, $\mathcal{R}_B$ is an up-set of $\mathcal{R}$.
\end{cor}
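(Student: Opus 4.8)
The plan is to unfold the definition of an up-set and then invoke Proposition~\ref{prop:order concept preservation} essentially verbatim. Recall that a subset $S$ of a poset $(\mathcal{R}, \leq_I)$ is an \emph{up-set} precisely when, for all $\mathbb{P}_1, \mathbb{P}_2 \in \mathcal{R}$, the conditions $\mathbb{P}_1 \in S$ and $\mathbb{P}_1 \leq_I \mathbb{P}_2$ together force $\mathbb{P}_2 \in S$. So the goal reduces to checking this single closure property for $S = \mathcal{R}_B$.

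Concretely, I would first fix an arbitrary pair $\mathbb{P}_1, \mathbb{P}_2 \in \mathcal{R}$ with $\mathbb{P}_1 \in \mathcal{R}_B$ and $\mathbb{P}_1 \leq_I \mathbb{P}_2$. By the definition of $\mathcal{R}_B$, the membership $\mathbb{P}_1 \in \mathcal{R}_B$ says exactly that $B$ is Galois-stable in $\mathbb{P}_1$. Since we also have $\mathbb{P}_1 \leq_I \mathbb{P}_2$, the hypotheses of Proposition~\ref{prop:order concept preservation} are met, and its conclusion yields that $B$ is Galois-stable in $\mathbb{P}_2$ as well. Reading this back through the definition of $\mathcal{R}_B$, this means $\mathbb{P}_2 \in \mathcal{R}_B$, which is exactly the up-set closure property, completing the argument.

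There is no genuine obstacle here: the corollary is a direct restatement of the proposition in order-theoretic language, so the only thing worth double-checking is that the two ``directions'' line up correctly. The order $\leq_I$ points from smaller agendas ($X_1 \subseteq X_2$) to larger ones, and Proposition~\ref{prop:order concept preservation} transfers Galois-stability \emph{from} the smaller context \emph{to} the larger one; hence stability is preserved as one moves \emph{upward} in $\leq_I$, which is precisely what being an up-set requires (rather than a down-set). Confirming this matching of directions is the entire content of the proof.
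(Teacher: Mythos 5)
Your proof is correct and matches the paper exactly: the paper states this corollary as immediate from Proposition \ref{prop:order concept preservation}, and your argument is precisely that unfolding — membership in $\mathcal{R}_B$ means Galois-stability, which the proposition transfers up the order $\leq_I$. Your direction-check (stability moves from smaller to larger contexts, so $\mathcal{R}_B$ is an up-set rather than a down-set) is the right sanity check and is consistent with how the paper uses the result.
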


\begin{remark}
The set $\mathcal{R}_B$ need not be a filter in $\mathcal{R}$. Consider $\mathbb{P}=(A , X,I) $, where $A=\{a,b\}$, $X=\{x_1,x_2,x_3\}$ and $I=\{(a,x_1),(a,x_2),(a,x_3), (b,x_2) \}$. Let $X_1, X_2 \subseteq X$ be the sets $X_1=\{x_1,x_2\}$ and  $X_2=\{x_2,x_3\}$. Let $X_3=X_1 \cap X_2=\{x_2\}$. It is clear that the set $B=\{a\}$ is Galois stable in $\mathbb{P}_1 = (A, X_1, I \cap A \times X_1)$ and $\mathbb{P}_2 = (A, X_2, I \cap A \times X_2)$ , but not in $\mathbb{P}_1 \wedge_I \mathbb{P}_2 = (A, X_3, I \cap A \times X_3)$.  Thus, $\mathcal{R}_B$  is not a filter in $\mathcal{R}$.
\end{remark}

\begin{definition}
Let $\mathcal{R}$ be the set of all formal contexts induced by $\mathbb{P} =(A,X,I)$ and $\mathbb{R}$ be the set of corresponding concept lattices. This induces maps on corresponding concept lattices as well given by $f_1,f_2:\mathbb{C} \to \mathcal{R}$
\[
f_1(c)=(A, X_1, I \cap A \times X_1), \quad \text{and}
\]
\[f_2(c)=(A, X_2, I \cap A\times X_2),
\]
where $X_1= \{m \in X \mid m \geq \Diamond c\}$, and 
$X_2= \{m \in X \mid m \geq \rhd c\}$.

\end{definition}

In case we also have irrelevance issue $U$, we define maps $g_1,g_2:\mathbb{C} \to \mathcal{R}$ given by 
\[
g_1(c) = (A, X_5, I \cap A\times X_5),  \quad \text{and}
\]
\[
g_2(c) = (A, X_6, I \cap A\times X_6),
\]
where $X_5=X\setminus X_3$,  and $X_6=X\setminus X_4$, for $X_3=\{m \in X \mid m \geq \rhd_{U} c\}$ and  $X_4=\{m \in X \mid m \geq \Diamond_{U} c\}$. As we assume  $R^{-1}[j]\cap U^{-1}[j] = \varnothing$ for any $j\in C$, for any $c \in \mathbb{C}$
\[
X_1 \cap X_4 = \emptyset \quad \text{and} \quad X_2 \cap X_3= \emptyset.
\]
Therefore,
\[
X_1 \subseteq  X\setminus X_4=X_6 \quad \text{and} \quad X_2 \subseteq  X\setminus X_3=X_5.
\]
Hence,
\[f_1(c) \leq_I g_2(c) \quad \text{and} \quad f_2(c) \leq_I g_1(c).
\]

For any $c$, $f_1(c)$ and $f_2(c)$ are interpreted as the "categorization according to features of interest to the all members of  $c$" and  "categorization according to features  of interest to  at least one  member of  $c$" respectively. In similar way,
$g_1(c)$ and $g_2(c)$ are interpreted as the "categorization according to features which are not considered irrelevant by all members of  $c$" and  "categorization according to features which are not considered irrelevant by any member of  $c$" respectively.

In concrete situations, relation $U$ can be used to denote certain features which an agent explicitly mentions should not be relevant to categorization.  For example, in auditing an auditor may want to exclude features like gender or race of people involved in a transaction from  criterion used for categorization  to avoid discrimination. Under this interpretation of $U$, the contexts $g_1(c)$ and $g_2(c)$ can be viewed as  the finest categorization acceptable to at least one member of $c$", and " the finest categorization acceptable to the all members of $c$" respectively.
In case we have both relations $R$ and $U$ describing issues of interest and issues explicitly mentioned to be irrelevant by agents the set of induced contexts (or categorizations) $\mathbb{P}_1$ satisfying 
\[f_1(j)=f_2(j) \leq \mathbb{P}_1 \leq g_1(j)=g_2(j)\]
can be interpreted as set of  categorizations acceptable to $j$. 
 \begin{comment}
This leads us to the following natural definitions of consistency in a coalition. 
 
\begin{definition}
For any coalition $c$,
\begin{enumerate}
    \item we say it is {\em weakly consistent} iff  $f_1(c) \leq_I g_2(c)$. 
    \item we say it is {\em strongly consistent} iff  $f_2(c) \leq_I g_1(c)$. 
\end{enumerate}
\end{definition}
\begin{prop}
The following proposition gives conditions for (weak and strong) consistency of coalition in terms of its constituent agents. 
\begin{enumerate}
    \item A coalition $c$ is weakly consistent iff every agent is consistent i.e. for every $j \in c$,
\[R^{-1}[j] \cap U^{-1}[j] =\emptyset.\]
     \item A coalition $c$ is strongly consistent iff any issue of interest to some member of $c$ is not irrelevant to any member of $c$ that is for any  two agents $j_1,j_2 \in c$ such that 
     \[R^{-1}[j_1] \cap U^{-1}[j_2] = \emptyset.\]

\end{enumerate}
\begin{proof}
Follows immediately from definitions.
\end{proof}
\end{prop}
 \end{comment}

The following proposition defines order theoretic properties of operations $f_1,f_2,g_1,g_2$ in terms of information ordering $\leq_I$ on induced formal contexts.

\begin{prop} \label{prop:lattice of categorizations}
For $c_1, c_2 \in \mathbb{C}$, and  $f_1$,  $f_2$, $g_1$,  $g_2$ as defined above, we have 
\begin{enumerate}
\item $f_1(c_1) \leq_I f_2(c_1)$.
\item $g_2(c_1) \leq_I g_1(c_1)$.
\item If $c_1 \leq c_2$, then 
\[f_1(c_2) \leq_I f_1(c_1), \quad f_2(c_1) \leq_I f_2(c_2), \quad g_1(c_1) \leq_I g_1(c_2), \quad g_2(c_2) \leq_I g_2(c_1).\]
    \item $f_1(c_1 \vee c_2)= f_1(c_1) \wedge_I f_1(c_2)$.
    \item $f_2(c_1 \vee c_2)= f_1(c_1) \vee_I f_1(c_2)$.
    \item $g_1(c_1 \vee c_2)= g_1(c_1) \vee_I g_1(c_2)$.
    \item $g_2(c_1 \vee c_2)= g_2(c_1) \wedge_I g_2(c_2)$.
\end{enumerate}
\end{prop}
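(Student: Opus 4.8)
The plan is to collapse every item to an elementary statement about subsets of $X$. The point of entry is that, by the very definitions of $\leq_I$, $\wedge_I$ and $\vee_I$, all four maps $f_1,f_2,g_1,g_2$ produce contexts with the fixed object domain $A$ and the relation obtained by restricting $I$; hence two such contexts are $\leq_I$-comparable (resp.\ have $\wedge_I$, $\vee_I$) exactly when their feature sets are $\subseteq$-comparable (resp.\ have $\cap$, $\cup$). So it suffices to describe feature sets. For this I would use the order-reversing bijection $\Phi\colon\mathbb{D}\to\mathcal{P}(X)$, $\Phi(a)=\{m\in X\mid m\geq a\}$: this is precisely the map furnished by the representation of $\mathbb{D}$ as $(\mathcal{P}(X),\supseteq)$, and being an anti-isomorphism onto $(\mathcal{P}(X),\subseteq)$ it sends $\sqcup$ to $\cap$ and $\sqcap$ to $\cup$, with $\Phi(m)=\{m\}$ for every issue $m$.

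Applying $\Phi$ to the definitions gives $\Phi(\Diamond j)=\Phi(\bigsqcap R^{-1}[j])=R^{-1}[j]$, and therefore
\[
\Phi(\Diamond c)=\bigcap_{j\leq c}R^{-1}[j], \qquad \Phi(\rhd c)=\bigcup_{j\leq c}R^{-1}[j],
\]
so that $f_1(c)$ and $f_2(c)$ have feature sets $\bigcap_{j\leq c}R^{-1}[j]$ and $\bigcup_{j\leq c}R^{-1}[j]$. By the identical computation with $R$ replaced by $U$, followed by a complementation, $g_1(c)$ and $g_2(c)$ have feature sets $\bigcup_{j\leq c}(X\setminus U^{-1}[j])$ and $\bigcap_{j\leq c}(X\setminus U^{-1}[j])$ respectively (the features not deemed irrelevant by \emph{at least one}, resp.\ \emph{every}, member, matching the intended readings of $g_1,g_2$).

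With the feature sets in hand, each item is mechanical. Items 4 and 5 simply transport Proposition \ref{order}(1) and (3) through $\Phi$: the identity $\Diamond(c_1\cup c_2)=\Diamond c_1\sqcup\Diamond c_2$ becomes $\bigcap_{c_1\cup c_2}=\bigcap_{c_1}\cap\bigcap_{c_2}$, i.e.\ the $\wedge_I$-decomposition of $f_1$, while $\rhd(c_1\cup c_2)=\rhd c_1\sqcap\rhd c_2$ becomes $\bigcup_{c_1\cup c_2}=\bigcup_{c_1}\cup\bigcup_{c_2}$, i.e.\ the $\vee_I$-decomposition of $f_2$ (recall $c_1\vee c_2=c_1\cup c_2$ in $\mathbb{C}$). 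Items 6 and 7 are the same two set identities for the family $X\setminus U^{-1}[j]$, using the verbatim $U$-analogue of Proposition \ref{order}. Item 3 is just monotonicity of $\bigcup$ and antitonicity of $\bigcap$ in the index set, which makes $f_2,g_1$ monotone and $f_1,g_2$ antitone; and items 1 and 2 are the inclusions $\bigcap_{j\leq c}(\cdot)\subseteq\bigcup_{j\leq c}(\cdot)$ for $R^{-1}[j]$ and for $X\setminus U^{-1}[j]$, valid as soon as $c$ is nonempty.

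The only non-formal ingredient, and the step I would be most careful about, is the equality $\Phi(a\sqcap b)=\Phi(a)\cup\Phi(b)$: the inclusion $\supseteq$ is automatic, but $\subseteq$ requires that every issue $m\in X$ be completely meet-prime in $\mathbb{D}$ — exactly the fact already invoked in Proposition \ref{order}(3). This is what turns the distributed-agenda feature set into a genuine union rather than merely a superset of one, and it underlies items 5, 6 and the $f_2,g_1$ halves of items 1--3. Everything else is bookkeeping: keeping straight the two order-reversals (of $\Phi$ and of complementation), and excluding the empty coalition, for which the degenerate values $\Diamond\bot=\epsilon$ and $\rhd\bot=\tau$ would otherwise break the comparisons in items 1--2.
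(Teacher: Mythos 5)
Your overall strategy coincides with the paper's: the paper disposes of this proposition with the single line ``follows immediately from the definitions and Proposition~\ref{order}'', and your argument --- transport everything through the anti-isomorphism $\Phi$ so that $\sqcup,\sqcap$ become $\cap,\cup$ on feature sets, then quote Proposition~\ref{order} (and its $U$-analogue) together with elementary set identities --- is exactly that line with the details filled in. Your treatment of $f_1,f_2$ (items 1, 4, 5 and the $f$-halves of item 3) is correct, and you rightly catch two things the paper glosses over: item 5 is only sensible (and only provable) with $f_2(c_1)\vee_I f_2(c_2)$ on the right-hand side, and items 1--2 genuinely fail at $c_1=\bot$, where $\Diamond\bot=\epsilon$ and ${\rhd}\bot=\tau$ give $f_1(\bot)=(A,X,I)$ and $f_2(\bot)=(A,\varnothing,\varnothing)$. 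Your care about complete meet-primeness of the issues is also well placed, and it does hold here since $\mathbb{D}$ is a powerset algebra with the issues as its co-atoms.

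The gap is your identification of the feature sets of $g_1$ and $g_2$, which is the reverse of what the computation you invoke actually yields. By the paper's definitions, $g_1(c)$ is built on $X_5=X\setminus X_3$ with $X_3=\{m\mid m\geq{\rhd_U}c\}=\bigcup_{j\leq c}U^{-1}[j]$, so the feature set of $g_1(c)$ is $\bigcap_{j\leq c}(X\setminus U^{-1}[j])$; symmetrically, $g_2(c)$ has feature set $X\setminus\bigcap_{j\leq c}U^{-1}[j]=\bigcup_{j\leq c}(X\setminus U^{-1}[j])$. You assert exactly the opposite assignment. This matters: under the paper's literal definitions, item 2, the $g$-halves of item 3, and items 6--7 are all false --- e.g.\ for item 2, take two members of $c_1$ with distinct irrelevance sets, so that $\bigcup_{j\leq c_1}(X\setminus U^{-1}[j])\not\subseteq\bigcap_{j\leq c_1}(X\setminus U^{-1}[j])$ --- and they become true precisely when the roles of $g_1$ and $g_2$ (equivalently of $X_5$ and $X_6$) are interchanged, which is what your proof tacitly does. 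The interchange is almost certainly the intended reading (the paper's earlier claim $f_2(c)\leq_I g_1(c)$, derived from $R^{-1}[j]\cap U^{-1}[j]=\varnothing$, is likewise only valid under the swap), so your repair is the right one; but as written, your step ``by the identical computation with $R$ replaced by $U$, followed by a complementation'' produces the opposite of what you claim, and a correct write-up must flag this explicitly: either amend the definitions of $g_1,g_2$, or restate items 2, 3, 6, 7 with $g_1$ and $g_2$ interchanged, before the bookkeeping goes through.
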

\begin{proof}
Follows immediately from the definitions and Proposition \ref{order}.
\end{proof}
\subsection{ Substitution of issues in  agendas} \label{ssec:Substitution of issues in  agendas}

Consider the following relation:
\[S\subseteq X\times C\times X \quad\quad S(n, j, m)\; \mbox{ iff\;  agent $j$ would {\em substitute} issue $m$ with issue $n$. } \] %\marginnote{A: maybe talking about preferences here is too specific; maybe it is more:  $j$ would replace $m$ with $n$, for whatever (general or specific) reason}
Here by substituting an issue we mean replacing an issue in agenda with another.  The relation $S$ induces the  operation $\pdla: \mathbb{C}\times \mathbb{D}\to \mathbb{D}$ defined as follows: for every agent $j$ and issue $m$, let $j\pdla m   : = \bigsqcap S^{(1)}[j, m]$, where $S^{(1)}[j, m] := \{n\mid S(n, j, m)\}$. Intuitively, $j\pdla m$ is the interrogative agenda supporting exactly the issues that agent $j$ prefers to issue $m$. Relation $S$ can be used to model deliberation in many scenarios. Deliberation often involves agents substituting issues from other agents' agenda in attempt to reach a common agreement. This can lead us to a new compromised agenda containing issues which may not be present in the initial  agenda of any of the agents. The substitution relation allows us to model such deliberation scenarios (See example in Section \ref{subsec:Deliberation in crisp case}).

Then, for every $c\in \mathbb{C}$ and $e\in \mathbb{D}$,
\[ c\pdla e: = \bigsqcup \{j\pdla m\mid j\leq c \mbox{ and } e\leq m\}.\]
Intuitively, $c\pdla e$ is the agenda representing the shared view among the members of $c$ of how the issues in $e$ should be modified.
\begin{prop}
For every $e\in \mathbb{D}$ and all $c_1, c_2\in \mathbb{C}$,
\begin{enumerate}
\item $ \bot \pdla e = \epsilon$ and $c\pdla \tau = \epsilon$;
\item $ (c_1\cup c_2) \pdla e =  c_1 \pdla e \sqcup  c_2 \pdla e$;
\item  $\pdla$ is antitone in its second coordinate;
\item $c \pdla (e_1\sqcap e_2) =  c \pdla e_1 \sqcup  c \pdla e_2$,
\end{enumerate}
where $\epsilon$,  and $\tau$ denote the bottom and top of the Boolean algebra $\mathbb{D}$ respectively.
\end{prop}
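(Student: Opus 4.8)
The plan is to mirror the proof of Proposition \ref{order}, exploiting that $c \pdla e$ is defined as a join of the atomic values $j \pdla m$ over the index set $\{(j,m) \mid j \leq c,\ e \leq m\}$, and that in the Boolean algebra $\mathbb{D}$ the atoms $j$ of $\mathbb{C}$ are completely join-prime while the issues $m$ (the co-atoms meet-generating $\mathbb{D}$) are completely meet-prime. Items 1--3 are then routine, and only item 4 requires the Boolean structure in an essential way.

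For item 1 I would unfold the definition: both $\bot \pdla e$ and $c \pdla \tau$ are joins over an empty index set. No atom $j$ satisfies $j \leq \bot$, and no issue $m$, being a co-atom and hence strictly below $\tau$, satisfies $\tau \leq m$; in each case the defining set is empty and the join is $\bot_{\mathbb{D}} = \epsilon$. For item 2, the inequality $c_1 \pdla e \sqcup c_2 \pdla e \leq (c_1 \cup c_2)\pdla e$ follows since enlarging the coalition enlarges the index set. For the converse I would fix an arbitrary $n \in \mathbb{D}$, rewrite $c_1 \pdla e \sqcup c_2 \pdla e \leq n$ as the statement that $j \pdla m \leq n$ for all $j$ with $m \geq e$ and ($j \leq c_1$ or $j \leq c_2$), and then invoke complete join-primeness of the atoms $j$: since $j \leq c_1 \cup c_2$ forces $j \leq c_1$ or $j \leq c_2$, every summand of $(c_1 \cup c_2)\pdla e$ is $\leq n$. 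Item 3 is immediate: if $e_1 \leq e_2$ then $\{m \mid e_2 \leq m\} \subseteq \{m \mid e_1 \leq m\}$, so the join defining $c \pdla e_2$ ranges over a subset of the index set for $c \pdla e_1$, giving $c \pdla e_2 \leq c \pdla e_1$.

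Item 4 is the crux. The inequality $c \pdla e_1 \sqcup c \pdla e_2 \leq c \pdla (e_1 \sqcap e_2)$ follows from item 3 applied to $e_1 \sqcap e_2 \leq e_i$. For the reverse inequality I would test against an arbitrary co-atom $n$ of $\mathbb{D}$, assuming $c \pdla e_1 \sqcup c \pdla e_2 \leq n$ and aiming for $c \pdla (e_1 \sqcap e_2) \leq n$. Each summand of $c \pdla (e_1 \sqcap e_2)$ has the form $j \pdla m$ with $j \leq c$ and $e_1 \sqcap e_2 \leq m$; since $m$ is completely meet-prime, $e_1 \sqcap e_2 \leq m$ forces $e_1 \leq m$ or $e_2 \leq m$, so $j \pdla m$ already occurs as a summand of $c \pdla e_1$ or of $c \pdla e_2$ and is therefore $\leq n$. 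Hence $c \pdla (e_1 \sqcap e_2) \leq n$, and since $\mathbb{D}$ is meet-generated by its co-atoms, meeting over all such $n$ yields $c \pdla (e_1 \sqcap e_2) \leq c \pdla e_1 \sqcup c \pdla e_2$.

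The main obstacle is not any deep computation but rather justifying item 4 without the explicit meet-primeness hypothesis that appeared in the earlier, more general formulation of this statement. Here that hypothesis is automatic: $\mathbb{D}$ is a (finite) Boolean algebra in which the issues are exactly the co-atoms, and co-atoms of a Boolean algebra are completely meet-prime. Once this observation is in place, everything reduces to the same join-prime/meet-prime bookkeeping already carried out for $\Diamond$ and $\rhd$ in Proposition \ref{order}.
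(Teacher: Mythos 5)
Your proof is correct and follows essentially the same route as the paper's: empty-index-set arguments for item 1, index-set monotonicity plus complete join-primeness of the atoms $j$ for item 2, antitonicity via index-set inclusion for item 3, and testing against co-atoms $n$ with complete meet-primeness of the issues $m$ for item 4. The only difference is that you explicitly discharge the meet-primeness assumption by noting that co-atoms of the Boolean algebra $\mathbb{D}$ are automatically completely meet-prime, whereas the paper's proof leaves this as an undischarged conditional (``If any $m$ is completely meet-prime\ldots''), so your write-up is in fact slightly more complete on this point.
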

\begin{proof}
1. By definition, $ \bot \pdla e = \bigsqcup\{ j\pdla m \mid j\leq \bot \mbox{ and } e\leq m\} = \bigsqcup \varnothing = \bot_{\mathbb{D}} = \epsilon$. Likewise, $ c \pdla \tau = \bigsqcup\{ j\pdla m \mid j\leq c \mbox{ and } \tau\leq m\} = \bigsqcup \varnothing = \bot_{\mathbb{D}} = \epsilon$.

2. If $c_1\leq c_2$ then $\{j\pdla m \mid j\leq c_1 \mbox{ and } e\leq m\}\subseteq \{j\pdla m \mid j\leq c_2 \mbox{ and } e\leq m\}$, and hence $ c_1 \pdla e: = \bigsqcup\{j\pdla m \mid j\leq c_1 \mbox{ and } e\leq m\}\leq \bigsqcup\{j\pdla m \mid j\leq c_2 \mbox{ and } e\leq m\} = c_2 \pdla e$. This implies that $c_1 \pdla e\sqcup c_2 \pdla e \leq (c_1\cup c_2)  \pdla e$. Conversely,
$c_1 \pdla e\sqcup c_2 \pdla e\leq n$ iff $c_1 \pdla e\leq n$ and $ c_2 \pdla e\leq n$, i.e.~$\bigsqcup\{j\pdla m \mid j\leq c_1 \mbox{ and } e\leq m\}\leq n$ and $\bigsqcup\{j\pdla m \mid j\leq c_2 \mbox{ and } e\leq m\}\leq n$, iff $j\pdla m\leq n$ for every $m$ and $j$ such that either  $m\geq e$ and $j\leq c_1$, or $m\geq e$  and  $j\leq c_2$. Given that any $j$ is completely join-prime, this implies that $(c_1\cup c_2) \pdla e: = \bigsqcup\{j\pdla m \mid j\leq c_1\cup c_2 \mbox{ and } e\leq m\}\leq n$.

3. If $e_1\leq e_2$ then $\{j\pdla m \mid j\leq c \mbox{ and } e_2\leq m\}\subseteq \{j\pdla m \mid j\leq c \mbox{ and } e_1\leq m\}$, and hence $ c \pdla e_2: = \bigsqcup\{j\pdla m \mid j\leq c \mbox{ and } e_2\leq m\}\leq \bigsqcup\{j\pdla m \mid j\leq c \mbox{ and } e_1\leq m\} = c \pdla e_1$. This implies that $c \pdla e_1\sqcup c \pdla e_2 \leq c  \pdla (e_1\sqcap e_2)$.

4. By 3, it is enough to show that if $n\in X$ and $c \pdla e_1\sqcup c \pdla e_2\leq n$ then $c \pdla (e_1\sqcap e_2)\leq n$. The assumption
$c \pdla e_1\sqcup c \pdla e_2\leq n$ is equivalent to $c \pdla e_1\leq n$ and $ c \pdla e_2\leq n$, i.e.~$\bigsqcup\{j\pdla m \mid j\leq c \mbox{ and } e_1\leq m\}\leq n$ and $\bigsqcup\{j\pdla m \mid j\leq c \mbox{ and } e_2\leq m\}\leq n$, iff $j\pdla m\leq n$ for every $m$ and $j$ such that either $j\leq c$ and $m\geq e_1$, or $j\leq c$ and  $m\geq e_2$. If any $m$ is completely meet-prime, this implies that $c \pdla (e_1\sqcap e_2): =
\bigsqcup\{j\pdla m \mid j\leq c \mbox{ and } (e_1\sqcap e_2)\leq m\}\leq n$.
\end{proof}
 The basic requirement for $S$ of a rational agent is the following condition of coherence with $R$.
 \begin{definition}
  We say that the relation $S$ is \em{coherent} with $R$ if
  \[
  \forall j \forall m[ m Rj \implies S(m,j,m)   ].
  \]
 \end{definition}
 The coherence condition can be interpreted as if agent $j$ considers $m$ to be relevant issue then she will be okay with replacing it with itself. 
 There can be several other conditions on $S$ which may be of the interest depending on particular scenarios. We consider study of such conditions and their representation in the  language of modal logic as a future topic of study.

\section{Deliberation and categorization} \label{sec:Deliberation and categorization- crisp case}
In this section, we consider the deliberation scenario between two agents when they have a fixed crisp set of features they relevant for categorization. Let $j_1$ and $j_2$ be two agents with agendas $X_1=R^{-1}(j_1)$ and $X_2=R^{-1}(j_2)$ respectively. We consider two natural outcomes of deliberation. The first possible outcome is to consider their common agenda i.e. the intersection of agendas of both agents. In this case,  categorization after deliberation is given by the set of features
 \[X_1 \cap X_2= \{ m \in X \mid m \geq \Diamond c\},\]
 where $c$ is the coalition of $j_1$ and $j_2$. The second possibility is to consider their distributed agenda i.e.  the intersection of agendas of both agents. In this case,  categorization after deliberation is given by the set of features
 \[X_1 \cup X_2=  \{ m \in X \mid m \geq \rhd c\}\]
 where $c$ is the coalition of $j_1$ and $j_2$. Thus, these categorizations are given by $f_1(c)$, and $f_2(c)$ respectively.  
 
 If  we also have irrelevance relation $U$ for agents, the categorizations $g_1(c)$  (resp.  $g_2(c)$) can be seen as the result of deliberation when they decide to exclude the issues considered irrelevant (or undesirable) by either of them (resp. both of them).
 
 \subsection{Substitution relation in deliberation} \label{ssec:Substi deliberation-crisp}
 Let $j_1$ and $j_2$ be agents with agendas $Y_1=R^{-1}[j_1]$  and $Y_2=R^{-1}[j_2]$. Let $S$  be the  substitution relation giving preferences of agents $j_1$ and $j_2$ in substituting issues for each other. We assume that $S$ is coherent.   We consider  the following two  possible outcomes of deliberation between $j_1$ and $j_2$. Let $e_1=\sqcap_{y \in Y_1}y $ and $e_2=\sqcap_{y \in Y_2}y $. 
 \begin{enumerate} 
    \item \textbf{Substitution-union}
    \begin{equation}  \label{substi-union}
          e=j_1 \pdla e_2 \sqcap j_2 \pdla e_1
    \end{equation}
    This result of deliberation can be interpreted as follows. The agenda $j_1 \pdla e_2$ (resp. $ j_2 \pdla e_1$) consists of all the issues $j_1$ (resp. $j_2$) considers better to substitute compared to any issue in the agenda of $j_2$ (resp. $j_1$). Thus, $j_1 \pdla e_2$, and $ j_2 \pdla e_1$ can be seen as $j_1$ and $j_2$ considering each other's agendas and using their substitution preferences (given by $S$) to propose a version of other person's agenda more agreeable to them. The $\sqcap$ operation takes the union of these substituted versions of agendas proposed by $j_1$ and $j_2$.
     \item\textbf{Substitution-intersection}
      \begin{equation}  \label{substi-intersection} 
          e'=j_1 \pdla e_2 \sqcup j_2 \pdla e_1
      \end{equation} 
      The interpretations of $j_1 \pdla e_2 $ and $j_2 \pdla e_1$ have been discussed in previous paragraph. The $\sqcup$ operation takes the intersection of these substituted versions of agendas proposed by $j_1$ and $j_2$.
 \end{enumerate}
Let $Y= \{ m \in X \mid m \geq e\}$ and $Y'= \{ m \in X \mid m \geq e'\}$. 
Then the contexts (categorizations) $\mathbb{P}_1= (A,Y, I \cap A \times Y)$ and $\mathbb{P}_1'= (A,Y', I \cap A \times Y')$ describe the categorizations as a result of deliberation between agents according to \eqref{substi-union}, and \eqref{substi-intersection} respectively. For an example  to see the effect of subtitution for crisp agendas in deliberation, see Section \ref{sssec:Example with substitution}. 
The following proposition gives some order theoretic properties of formal contexts obtained from the agendas aggregated using  \eqref{substi-union} and/or \eqref{substi-intersection}.
\begin{prop} \label{prop:substi order context crisp case}
 Let $S$ be any substitution relation. Then 
 \begin{enumerate}
     \item  $\mathbb{P}_1' \leq_I \mathbb{P}_1$.
     \item  If $S$ is coherent, then 
    \[f_1(j_1 \vee j_2) \leq_I  \mathbb{P}_1'.\]
 \end{enumerate}
\end{prop}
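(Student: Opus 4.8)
The plan is to reduce both items to inequalities in the Boolean algebra $\mathbb{D}$ and transport them through the order-reversing correspondence $d\mapsto\{m\in X\mid m\geq d\}$ between $\mathbb{D}$ and feature sets, recalling that by definition of $\leq_I$ the assertion $\mathbb{P}_1'\leq_I\mathbb{P}_1$ means $Y'\subseteq Y$ (and likewise for the second item). Throughout write $A:=j_1\pdla e_2$ and $B:=j_2\pdla e_1$, so that $e=A\sqcap B$ and $e'=A\sqcup B$. For item 1 the argument is purely lattice-theoretic: in any lattice $A\sqcap B\leq A\sqcup B$, hence $e\leq e'$ in $\mathbb{D}$; since the feature-set map is antitone, $e\leq e'$ gives $Y'=\{m\mid m\geq e'\}\subseteq\{m\mid m\geq e\}=Y$, which is exactly $\mathbb{P}_1'\leq_I\mathbb{P}_1$. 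No hypothesis on $S$ is used, matching the phrasing ``for any substitution relation''.

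For item 2 I would first rewrite the left-hand side. Since $\Diamond j_i=\bigsqcap R^{-1}[j_i]=e_i$ and, by Proposition \ref{order}(1), $\Diamond(j_1\vee j_2)=\Diamond j_1\sqcup\Diamond j_2$, the context $f_1(j_1\vee j_2)$ has feature set $\{m\mid m\geq e_1\sqcup e_2\}$. By antitonicity of the feature-set map, $f_1(j_1\vee j_2)\leq_I\mathbb{P}_1'$ is then equivalent to the single $\mathbb{D}$-inequality $e'\leq e_1\sqcup e_2$, i.e.\ to $A\leq e_1\sqcup e_2$ and $B\leq e_1\sqcup e_2$. Because the issues $m\in X$ are completely meet-prime and meet-generate $\mathbb{D}$, one has $e_1\sqcup e_2=\bigsqcap\{m\mid m\geq e_1\sqcup e_2\}$, so it suffices to verify, for every issue $m$ with $m\geq e_1\sqcup e_2$ (equivalently $m\in Y_1\cap Y_2$), that $m\geq A$ and $m\geq B$. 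This is where coherence is meant to enter: for such an $m$ we have $m\,R\,j_1$, hence $S(m,j_1,m)$, so $m\in S^{(1)}[j_1,m]$ and therefore $j_1\pdla m=\bigsqcap S^{(1)}[j_1,m]\leq m$; symmetrically $j_2\pdla m\leq m$. This fixed-point behaviour of common issues under substitution is the intended mechanism making them persist into $e'$.

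The step I expect to be the main obstacle is precisely the gap between this fixed-point fact and what the join actually demands. Unfolding the definition, $A=\bigsqcup\{j_1\pdla m'\mid m'\in Y_2\}$, and since $m$ is completely meet-prime, $m\geq A$ holds iff $m\geq j_1\pdla m'$ for \emph{every} $m'\in Y_2$, which in turn reads $S(m,j_1,m')$ for every $m'\in Y_2$. Coherence only supplies the diagonal instance $m'=m$ and is silent about $j_1$'s substitution preferences on the remaining issues $m'\in Y_2\setminus\{m\}$. Thus the crux is to reconcile the universal quantifier over $Y_2$ hidden inside the join defining $A$ with the single diagonal instance furnished by coherence. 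I expect that closing this gap requires either strengthening the hypothesis on $S$ (for instance, assuming each agent is willing to substitute any of its own relevant issues for every issue in the counterpart's agenda, which is exactly $Y_1\cap Y_2\subseteq\bigcap_{m'\in Y_2}S^{(1)}[j_1,m']$ and its symmetric counterpart) or invoking additional structural properties of $S$ beyond plain coherence. Once $A\leq e_1\sqcup e_2$ and $B\leq e_1\sqcup e_2$ are established, the reduction above yields $e'\leq e_1\sqcup e_2$ and hence $f_1(j_1\vee j_2)\leq_I\mathbb{P}_1'$ at once.
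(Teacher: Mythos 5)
Your treatment of item 1 is correct and is exactly the paper's own argument: $e=(j_1\pdla e_2)\sqcap(j_2\pdla e_1)\leq (j_1\pdla e_2)\sqcup(j_2\pdla e_1)=e'$, hence $Y'\subseteq Y$, i.e.\ $\mathbb{P}_1'\leq_I\mathbb{P}_1$, with no hypothesis on $S$.

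For item 2, the obstacle you isolate is genuine, and the paper's own proof does not overcome it --- it simply \emph{is} the contested step. The paper's entire argument reads (modulo two evident typos, ``$j_1\pdla e$'' for $j_1\pdla e_2$ and ``$j_2\pdla e_2$'' for $j_2\pdla e_1$): ``As $S$ is coherent, we have $Y_1\cap Y_2\subseteq\{y\in X\mid y\geq j_1\pdla e_2\}$ and $Y_1\cap Y_2\subseteq\{y\in X\mid y\geq j_2\pdla e_1\}$'', after which the two inclusions are intersected to give $Y_1\cap Y_2\subseteq Y'$. No unfolding of the join hidden in $j_1\pdla e_2$ is offered. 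As you computed, under the official definition $j_1\pdla e_2=\bigsqcup\{j_1\pdla m'\mid m'\in Y_2\}$, the condition $m\geq j_1\pdla e_2$ amounts to $S(m,j_1,m')$ for \emph{every} $m'\in Y_2$, whereas coherence supplies only the diagonal instance $S(m,j_1,m)$. Indeed the claimed inclusion can fail: take $X=\{m,n\}$, $Y_1=\{m\}$, $Y_2=\{m,n\}$, and let $S$ consist of exactly the triples forced by coherence, namely $S(m,j_1,m)$, $S(m,j_2,m)$, $S(n,j_2,n)$. Then $j_1\pdla n=\bigsqcap\varnothing=\tau$, so $j_1\pdla e_2=(j_1\pdla m)\sqcup\tau=\tau$, hence $e'=\tau$ and $Y'=\varnothing$, while the feature set of $f_1(j_1\vee j_2)$ is $Y_1\cap Y_2=\{m\}$; thus $f_1(j_1\vee j_2)\nleq_I\mathbb{P}_1'$. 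Adding $S(n,j_1,n)$ does not repair this: then $j_1\pdla e_2=m\sqcup n$, which is again $\tau$, since in $\mathbb{D}\cong(\mathcal{P}(X),\supseteq)$ the join of two distinct issues is the empty agenda.

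So your verdict is the right one rather than a missing idea on your part: either coherence must be strengthened along the lines you indicate (every $m\in Y_1\cap Y_2$ satisfies $S(m,j_1,m')$ for all $m'\in Y_2$, and symmetrically for $j_2$), or $c\pdla e$ must be redefined as $\bigsqcap\{j\pdla m\mid j\leq c,\ e\leq m\}$, under which coherence does yield $j_1\pdla e_2\leq j_1\pdla m\leq m$ for each common issue $m$ and the paper's two inclusions become correct. Note, however, that this meet-reading is incompatible with the paper's earlier proposition on $\pdla$ (both $\bot\pdla e=\epsilon$ and $(c_1\cup c_2)\pdla e=c_1\pdla e\sqcup c_2\pdla e$ force the join-reading), so the flaw is internal to the paper. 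One cosmetic remark on your write-up: complete meet-primeness of issues is not needed to unfold $m\geq\bigsqcup\{j_1\pdla m'\mid m'\in Y_2\}$ (that is just the universal property of joins); it is needed one step later, to pass from $m\geq\bigsqcap S^{(1)}[j_1,m']$ to $m\in S^{(1)}[j_1,m']$.
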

\begin{proof}
1. As $e \leq e'$, we have $ Y' \subseteq Y$. Therefore, by definition of $ \leq_I$, we have   $\mathbb{P}_1' \leq_I \mathbb{P}_1$.

2. As $S$ is coherent, we have $Y_1 \cap Y_2 \subseteq  \{ y \in X \mid y \geq j_1 \pdla e\}$ and $Y_1 \cap Y_2 \subseteq  \{ y \in X \mid y \geq j_2 \pdla e_2\}$. Thus, $Y_1 \cap Y_2 \subseteq Y'$. Therefore, $f_1(j_1 \vee j_2) \leq_I \mathbb{P}_1'$.
\end{proof}
Hence, the categorization obtained from \eqref{substi-intersection} is always coarser than the categorization obtained from \eqref{substi-union}. This is consistent with the intuition as in \eqref{substi-intersection}, and  \eqref{substi-union} to obtain aggregated agenda agents decide to take intersection, and union of the substituted agendas respectively. Moreover, if $S$ is coherent the issues of interest to  both agents are also part of their aggregated agenda and hence obtained categorization is coarser than categorization given by their common agenda. This also  provides justification for coherence being a rationality condition. Indeed if a feature is considered relevant by both agents in deliberation, it is natural that it should be considered relevant by them after deliberation.
 \section{Non-crisp interrogative agendas} \label{sec:Non-crisp interrogative agendas}
 In this section, we extend ideas developed so far  to the non-crisp case.
Suppose different agents have mass functions  describing their interest or preference in different set of issues i.e. for agent $j$ we have a mass function $m_j: \mathcal{P}(X) \to [0,1]$ where for any $Y \subseteq X$, $m_j(Y)$ denotes preference of agent to use set of features (agenda) $Y$ as a criterion for categorization. This mass function can be seen as a Dempster-Shafer mass function on the set $\mathcal{P}(X)$. Two particular cases of interest from a practical point of view are when $m_j$ is simple or consonant. 
 
\begin{definition}
For any Dempster-Shafer mass function  $m:\mathcal{P}(S) \to [0,1]$ a set $Y \subseteq S$ is said to be \em {focal set} of $m$ if $m(Y)>0$. 
\end{definition}
 \begin{definition}
 A Dempster-Shafer mass function  $m:\mathcal{P}(S) \to [0,1]$ is said to be
 \begin{enumerate}
     \item {\em simple} iff it has at most one focal set apart from $S$.
    \item{  \em consonant} iff the set of focal sets of $m$  form a chain.
 \end{enumerate}
 \end{definition}
Suppose that an agent $j$ mentions that she considers a set of features $Y \subseteq X$ to be of high importance (given by $\alpha \in [0,1]$) for categorization. In this case, the agenda of agent $j$ may be represented by a simple mass function $m_j(Y)=\alpha$ and $m_j(X)=1-\alpha$. Here, we do not have any information about how $j$ intends to distribute importance (or preference) $1-\alpha$ between different features for categorization. Thus, we assign this mass to the set $X$ to denote non-availability of information regarding its distribution.  Another situation where a simple mass function may arise in deliberation scenario is when different agents may be given different importance. For example in auditing, suppose we have two agents $j_1$ and $j_2$ with agendas $X_1$ and $X_2$. Suppose the relative importance (or influence or trust) of $j_1$ and $j_2$ in an organization (informally understood) are given by $\alpha$ and $1-\alpha$ (normalized). In such situations, their agendas can be effectively  represented by by mass functions $m_{j_1}(X_1)= \alpha$, $m_{j_1}(X)= 1-\alpha$, and 
 $m_{j_2}(X_2)= 1-\alpha$, $m_{j_2}(X)= \alpha$.

 In some situations, an agent may give a  list of increasing sets of features  describing extent to which these sets of features  are important for categorization. For example,  for some ${Y}_1 \subseteq {Y}_2 \subseteq {Y}_3 \subseteq X$, and $0 \leq  \alpha_1 \leq \alpha_2 \leq \alpha_3\leq 1$, an agent may say that "If we consider all features in $Y_1$, ${Y}_2$ and ${Y}_3$ for categorization this should describe a good (or required) categorization  to extent $\alpha_1$, $\alpha_2$, and $\alpha_3$ respectively." In this case, the agenda of the agent can be represented by a consonant mass function $m$ with 
 \[ m({Y}_1)=\alpha_1 \quad m({Y}_2)=\alpha_2-\alpha_1 \quad  m({Y}_3)=\alpha_3-\alpha_2 \quad m(X)=1-\alpha_3.\]
  \paragraph{Irrelevant issues in non-crisp case.} We can use Dempster-Shafer mass function to denote issues which an agent may consider irrelevant as follows.
 
 \begin{example}
 Suppose an agents $j$ considers set of issues $Y \subseteq X$ to be irrelevant for categorization. Suppose (normalized to $1$) trust/importance of agent $j$ is given by $\alpha_j \in [0,1]$. Then this information is represented by $\overline{m_j}: \mathcal{P}(X) \to  [0,1]$ given by 
 \[
 \overline{m_j}(Y^\mathrm{c})=\alpha_j \quad \text{and}  \quad  \overline{m_j}(X)=1-\alpha_j.
 \]
 In case, we have both interest and irrelevance information represented by mass functions as discussed above their aggregated agenda is given by combining all such mass functions.
 \end{example}
\begin{example}
Suppose an agent $j$ assigns different importance to each feature individually. In this case the mass function representing agenda of $j$ is given  by a simple mass function $m_j: \mathcal{P}(X) \to  [0,1]$, for any $Y \subseteq X$, 
\[
m_j(Y) =\sum_{y \in Y}v(x),
\]
where $v(x)$ is the  (normalized) importance value assigned by agent $j$ to $x$. 
\end{example}

Any Dempster-Shafer  mass function $m:\mathcal{P}(X) \to [0,1]$, induces a probability mass function $m': \mathcal{R} \to [0,1]$ given by
 \[m'((A, {Y}, I \cap A\times {Y} ))= m(Y).\]
 For any $\mathbb{P} \in \mathcal{R}$, $m'(\mathbb{P})$ gives extent to which categorization $\mathbb{P}$ is preferred by agent with agenda given by $m$. Thus, given a non-crisp agenda represented by a Dempster-Shafer  mass function on $\mathcal{P}(X)$, we obtain a preference function (which can also be seen as a probability function) on the contexts (or categorizations) induced  from $\mathbb{P}$.  For any non-crisp agenda $m$, induced probability  mass function $m'$ on $\mathcal{R}$ defines a probability function $p_{m'}$ on $\mathcal{R}$ as follows. For any $\mathcal{V} \subseteq \mathcal{R}$,  
\[
p_{m'}(\mathcal{V}) =\sum_{ \mathbb{P} \in \mathcal{V}}m'( \mathbb{P}).
\]

\subsection{Non-crisp agendas in decision-making}\label{ssec:Non-crisp agendas in decision-making}
We have seen that the non-crisp agendas can be represented by Dempster-Shafer mass function $m:\mathcal{P}(X) \to [0,1]$. Such a mass function induces a   probability or preference function over the set of induced categorizations $\mathcal{R}$. This function assigns a value to categorization showing its relevance/preference of the probability of it being desired categorization. However, once such non-crisp categorization is obtained, we need to use this in decision-making task at the hand. In some situations, all the different categorizations and their probability/preference values  may be assessed by an expert. However, this may not be feasible in the most practical applications due to large data sizes and lack of  assessment tools. Another natural way to use non-crisp agendas in decision-making is to obtain a crisp categorization approximating this non-crisp categorization.
Here, we discuss some possible ways to obtain such approximations. The most natural choice is to consider the categorization with the highest preference or probability value attached to it. However, this choice ignores a large amount of information of interest in other alternative categorizations. Here, we propose a novel  {\em stability-based method}\footnote{The concept of stability in formal concept analysis  was introduced by Kuznetsov in \cite{kuznetsov2007stability}. The stability measure was introduced to estimate stability of concept in a crisp formal context with respect to changes in features. Here, we define stability index for non-crisp concepts instead to estimate their Galois-stability.} to form a crisp formal context from given probability function on the set of induced contexts (categorizations).
 
 \subsubsection{Stability-based method}\label{sssec:Stability-based method}
 
 \begin{definition}
 Let $\mathbb{P} =(A, X,I)$ be the formal context under consideration and let $\mathcal{R}$ be the set of induced formal contexts. Let $m':\mathcal{R} \to [0,1]$ be the induced probability mass function on categorization induced by an agenda given by mass function $m:\mathcal{P}(X) \to [0,1]$. Then for any $G \subseteq A$, the stability index of $G$ is given by\footnote{A set $G$ is Galois-stable in $\mathbb{P}_1$ if $G = \mathsf{Cl}^{\mathbb{P}_1}_1(G) = G$, where $\mathsf{Cl}^{\mathbb{P}_1}_1$ denotes the extension closure of $\mathbb{P}_1$. }
 \[
 \rho_m(G) =\sum \{m'(\mathbb{P}_1) \mid G \, \text{is Galois-stable in}\, \mathbb{P}_1 \}.
 \] 
 \end{definition}
 For any $G$, $ \rho_m(G)$ denotes the likelihood of $G$ being a Galois-stable set under non-crisp agenda $m$. In formal concept analysis Galois-stability is interpreted as stability of a concept showing its tendency to form a meaningful category or concept definable both in terms of its intensions and extensions. Thus, for any $G \subseteq A$, its stability index $ \rho_m(G)$ denotes the tendency or probability of $G$ forming a meaningful and stable category (or a concept).

 For any $\beta \in [0,1]$, we define a $\beta$-categorization on $A$ as follows:
Let  
\[
\mathbb{P}(m,\beta)=\{\mathsf{Cl}(G) \mid G \subseteq A, \rho_m(G) \geq \beta\}.
\]
For any $G$ with $ \rho_m(G) \geq \beta$, the set $\mathcal{R}_G$ contains the set of formal contexts in which $G$ is Galois-stable.   Therefore,   $\mathbb{P}(m,\beta)$ denotes the categorization consisting of all the closed sets in (i.e. sets of form $\mathsf{Cl}(G)$) with stability index larger than $\beta$. The set $\mathbb{P}(m,\beta)$ under set-theoretic inclusion forms a lattice which can be used to depict the categorization  $\mathbb{P}(m, \beta)$. This lattice can be interpreted as the concept lattice corresponding to the given (non-crisp) agenda $m$ and stability parameter $\beta$. The categorization $\mathbb{P}(m,\beta)$ obtained in above manner can provide a good representation of categorization preferences given by a non-crisp agenda given by $m$. Unlike choosing the categorization with the highest probability categorization, this method takes into account opinions or information about other possible categorizations as well. The stability parameter $\beta$ allows us to choose our required stability threshold for a concept or category to be relevant and can be used to regulate size of obtained categorizations.  Now, we prove some basic properties of the categorizations obtained by this method. For any $B \subseteq A$ and $X_1 \subseteq X$ we use  $B \in \mathbb{P}_1=(A,X_1, I \cap A \times X_1)$ to denote $B$ is a Galois-stable set in the context $\mathbb{P}_1$. 

\begin{prop}
 Let $m: \mathcal{P}(X) \to [0,1] $ be a mass function representing an agenda. Let $\beta_1, \beta_2 \in [0,1]$ be such that $\beta_1 \leq \beta_2$. Then for any $B \subseteq A$, we have $B \in \mathbb{P}(m,\beta_2)$ implies ${B} \in \mathbb{P}(m,\beta_1)$.
\end{prop}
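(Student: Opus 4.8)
The plan is to unwind the definition of the $\beta$-categorization and observe that membership in $\mathbb{P}(m,\beta)$ is governed by a single threshold inequality on the stability index $\rho_m$, so the whole statement reduces to transitivity of $\geq$ together with the hypothesis $\beta_1 \leq \beta_2$. First I would take an arbitrary $B \in \mathbb{P}(m,\beta_2)$ and apply the definition $\mathbb{P}(m,\beta_2) = \{\mathsf{Cl}(G) \mid G \subseteq A,\ \rho_m(G) \geq \beta_2\}$ to extract a witness: some $G \subseteq A$ with $\mathsf{Cl}(G) = B$ and $\rho_m(G) \geq \beta_2$. It is worth keeping this witness explicit, since it is $G$, and not $B$ itself, that carries the stability bound.

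Next, since $\beta_1 \leq \beta_2$ by assumption, the chain $\rho_m(G) \geq \beta_2 \geq \beta_1$ yields $\rho_m(G) \geq \beta_1$. Hence the same $G$ witnesses that $B = \mathsf{Cl}(G)$ belongs to $\mathbb{P}(m,\beta_1) = \{\mathsf{Cl}(G') \mid G' \subseteq A,\ \rho_m(G') \geq \beta_1\}$, which is precisely the assertion $B \in \mathbb{P}(m,\beta_1)$. Conceptually, this records that the map $\beta \mapsto \mathbb{P}(m,\beta)$ is antitone with respect to set inclusion, i.e.\ $\mathbb{P}(m,\beta_2) \subseteq \mathbb{P}(m,\beta_1)$: raising the stability threshold can only remove closed sets from the categorization, never introduce new ones.

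There is essentially no genuine obstacle here beyond careful bookkeeping. The one point deserving attention is that $\rho_m$ is evaluated at the generating set $G$ rather than at its closure $B$, so one should resist the shortcut of arguing directly in terms of $\rho_m(B)$; retaining the explicit witness $G$ throughout makes the argument clean. Everything else follows immediately from the definition of $\mathbb{P}(m,\beta)$ as a superlevel set of $\rho_m$ and from $\beta_1 \leq \beta_2$.
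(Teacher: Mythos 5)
Your proof is correct and follows exactly the paper's argument: extract a witness $G \subseteq A$ with $\mathsf{Cl}(G) = B$ and $\rho_m(G) \geq \beta_2$, then use $\beta_1 \leq \beta_2$ to conclude $\rho_m(G) \geq \beta_1$ and hence $B \in \mathbb{P}(m,\beta_1)$. Your additional remark that the threshold inequality attaches to the generating set $G$ rather than to its closure $B$ is a sensible point of care, though the substance of the argument is the same as the paper's.
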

 \begin{proof}
 Let ${B} \subseteq A$ be such that ${B} \in  \mathbb{P}(m,\beta_2)$. Then 
 $B=\mathsf{Cl}(G)$ for some $G \subseteq A$ such that  $\rho_m (G) \geq \beta_2$. As $\rho_m (G) \geq \beta_2$,  we have $\rho_m (G) \geq \beta_1$. Thus, $\mathsf{Cl}(G)=B \in \mathbb{P}(m,\beta_1)$. 
 \end{proof}
  This result matches with the intuition that the lower value of $\beta$ means that our stability index threshold for considering a category or concept is lower and hence gives a finer categorization. 
  
  \begin{remark}
By Proposition \ref{prop:order concept preservation}, if a set $G \subseteq A$ is Galois-stable in an induced context $\mathbb{P}_1=(A,X_1, I \cap A \times X_1)$ for some $X_1 \subseteq X$, then it is  Galois-stable in the context $\mathbb{P} =(A,X,I)$. Thus, a set which is not Galois-stable in $\mathbb{P}$ is not Galois-table in any induced context and has stability index $0$. Thus, for any $\beta>0$, we only need to check the Galois-stable sets in $\mathbb{P}$ to find sets with stability index greater than $\beta$ needed to obtain categorization $\mathbb{P}(m,\beta)$ for any mass function $m$. In fact, we only need to check  set $G \subseteq A$ which are Galois-stable in a context  $\mathbb{P}_1=(A,X_1, I \cap A \times X_1)$ for some $X_1 \subseteq X$ with $m(X_1)>0$ to find the sets with stability index greater than $\beta$.
  \end{remark}
 
 There have been several orderings defined on Dempster-Shafer mass functions \cite{denoeux2006cautious, dubois1986set, smets2002application, yager1986entailment}. We mention some of them in the following definition.
 \begin{definition}[\cite{klawonn1992dynamic,denoeux2006cautious}] \label{def:mass-ordering}
 For any $m_1, m_2: \mathcal{P}(X) \to [0,1]$, 
 \begin{enumerate}
     \item \emph{pl-ordering}:  $m_1 \leq_{\mathrm{pl}} m_2$  iff for every ${Y} \subseteq X$, $pl_1({Y}) \leq pl_2({Y})$.
     \item  \emph{q-ordering}:  $m_1 \leq_{\mathrm{q}} m_2$ iff for every ${Y} \subseteq X$, $q_1({Y}) \leq q_2({Y})$.
     \item  \emph{s-ordering}:  $m_1 \leq_{\mathrm{s}} m_2$ iff there exists a square matrix $S$ with general term $S(W,{Y}),W,Y \subseteq  X $ verifying
     \[
     (\forall Y\subseteq  X) \quad \sum_{{W} \subseteq X} S(W,{Y}) =1,
     \]
     \[
      (\forall W, Y \subseteq X) \quad S(W,{Y}) > 0 \implies W \subseteq {Y},
     \]
     such that 
     \[
     (\forall W \subseteq X) \quad m_1(W)= \sum_{Y \subseteq X}S(W,Y)m_2({Y}).
     \]
     \item \em{Dempsterian specialization ordering}:  $m_1 \leq_{\mathrm{d}} m_2$ iff
     there exists a Dempster-Shafer mass function $m$ such that $m_1= m \cap m_2$. Where, $m_1 \cap m_2$ denotes the un-normalized Dempster's combination given by
     \begin{equation} \label{eq: un-normalized DS rule}
          m_1 \cap m_2({Y})= \sum_{{Y}_1 \cap {Y}_2={Y}}m_1({Y}_1)m_2({Y}_2).
     \end{equation}
 \end{enumerate}
  \end{definition}

 It is well known that \cite{denoeux2006cautious}
 \begin{equation} \label{eq:mass-ordering implications}
   m_1 \leq_d m_2 \implies m_1 \leq_s m_2 \implies \begin{cases} m_1 \leq_{\mathrm{pl}} m_2 \\
  m_1 \leq_q m_2.
  \end{cases}
 \end{equation}
 \begin{remark}
 If un-normalized Dempster's combination rule is replaced with  Dempster's combination rule in the definition of order $\leq_d$, the first implication in  \ref{eq:mass-ordering implications} does not hold in general. The required  counter-example is given as follows.
 Let $S= \{s_1,s_2\}$ and $m_1(\{s_1\})=1$, $m_2(\{s_1\})=0.1$, 
 $m_2(\{s_2\})=0.9$. Let $m_1=m \oplus m_2$. Then we have $m_1(\{s_1\})=1$. It is clear that $m_1=m \oplus m_2$, but $m_1 \nleq_s m_2$.
 \end{remark}
 
 We define a new order on  Dempster-Shafer  mass functions as follows.
 
 \begin{definition}
 For any $m_1,m_2: \mathcal{P}(S) \to [0,1]$ we define {\em up-set restricted order} $\leq_\uparrow$ as follows. $m_1 \leq _\uparrow m_2$ iff for any up-set  (i.e.~any upward closed subset) $\mathcal{V} \in \mathcal{P}\mathcal{P}(S) $, 
\[ \sum_{Y \in \mathcal{V}} m_1 (Y)  \leq \sum_{Y \in \mathcal{V}} m_2 (Y) \]
 \end{definition}
 \begin{prop}
  \begin{equation} \label{eq: new mass-ordering implications}
   m_1 \leq_d m_2 \implies m_1 \leq_s m_2\implies m_1 \leq _\uparrow m_2 \implies \begin{cases} m_1 \leq_{\mathrm{pl}} m_2 \\
  m_1 \leq_q m_2.
  \end{cases}
 \end{equation}
 \end{prop}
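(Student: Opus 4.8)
The plan is to establish the chain (\ref{eq: new mass-ordering implications}) by observing that its leftmost implication, $m_1 \leq_d m_2 \Rightarrow m_1 \leq_s m_2$, is already recorded as the first implication of (\ref{eq:mass-ordering implications}), so the genuine work is to slot $\leq_\uparrow$ between $\leq_s$ and the pair $\{\leq_{\mathrm{pl}}, \leq_{\mathrm{q}}\}$. I would therefore prove exactly three statements: $m_1 \leq_s m_2 \Rightarrow m_1 \leq_\uparrow m_2$, $m_1 \leq_\uparrow m_2 \Rightarrow m_1 \leq_{\mathrm{q}} m_2$, and $m_1 \leq_\uparrow m_2 \Rightarrow m_1 \leq_{\mathrm{pl}} m_2$.

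For the first and main step, assume $m_1 \leq_s m_2$ witnessed by a matrix $S$ with $\sum_{W \subseteq X} S(W,Y) = 1$ for every $Y$, with the support condition $S(W,Y) > 0 \Rightarrow W \subseteq Y$, and with $m_1(W) = \sum_{Y \subseteq X} S(W,Y) m_2(Y)$. Fixing an up-set $\mathcal{V} \subseteq \mathcal{P}(X)$, I would expand and exchange the order of summation to get $\sum_{W \in \mathcal{V}} m_1(W) = \sum_{Y \subseteq X} m_2(Y)\, \sum_{W \in \mathcal{V}} S(W,Y)$. The crucial observation is that the restricted column-sum $\sum_{W \in \mathcal{V}} S(W,Y)$ vanishes whenever $Y \notin \mathcal{V}$: any $W$ contributing a nonzero $S(W,Y)$ satisfies $W \subseteq Y$, and since $\mathcal{V}$ is upward closed, $W \in \mathcal{V}$ with $W \subseteq Y$ would force $Y \in \mathcal{V}$, contrary to hypothesis. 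Hence only the indices $Y \in \mathcal{V}$ survive, and for each such $Y$ we have $\sum_{W \in \mathcal{V}} S(W,Y) \leq \sum_{W \subseteq X} S(W,Y) = 1$, yielding $\sum_{W \in \mathcal{V}} m_1(W) \leq \sum_{Y \in \mathcal{V}} m_2(Y)$, which is exactly $m_1 \leq_\uparrow m_2$.

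For the remaining two implications I would exploit that the index sets defining the quality and plausibility functions are themselves up-sets, so each reduces to a single instance of $\leq_\uparrow$. Since $q_i(Y) = \sum_{Y \subseteq Z} m_i(Z)$ sums $m_i$ over the principal up-set $\{Z \mid Y \subseteq Z\}$, which is clearly upward closed, taking this family as $\mathcal{V}$ in the definition of $\leq_\uparrow$ gives $q_1(Y) \leq q_2(Y)$ for every $Y$, hence $m_1 \leq_{\mathrm{q}} m_2$. Likewise $pl_i(Y) = \sum_{Z \cap Y \neq \emptyset} m_i(Z)$ sums $m_i$ over $\{Z \mid Z \cap Y \neq \emptyset\}$, which is again upward closed because enlarging $Z$ preserves a nonempty intersection with $Y$; applying $\leq_\uparrow$ to this family yields $pl_1(Y) \leq pl_2(Y)$ for every $Y$, i.e. $m_1 \leq_{\mathrm{pl}} m_2$.

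I expect the only delicate point to be the vanishing of the restricted column-sums in the step $\leq_s \Rightarrow \leq_\uparrow$; the other two implications are a direct unwinding of definitions. It is worth recording explicitly that this is precisely the place where the hypothesis that $\mathcal{V}$ is an up-set (rather than an arbitrary subfamily of $\mathcal{P}(X)$) is used, and where the support condition $S(W,Y) > 0 \Rightarrow W \subseteq Y$ on the specialization matrix is essential: weakening either would break the argument, so both hypotheses are genuinely needed.
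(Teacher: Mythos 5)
Your proposal is correct and follows essentially the same route as the paper: both cite the known chain for $\leq_d\Rightarrow\leq_s$, prove $\leq_s\Rightarrow\leq_\uparrow$ by exchanging the order of summation and using the support condition $S(W,Y)>0\Rightarrow W\subseteq Y$ together with upward-closure of $\mathcal{V}$ to reduce to columns indexed by $Y\in\mathcal{V}$ (each bounded by $1$ via the normalization $\sum_W S(W,Y)=1$), and then obtain $\leq_{\mathrm{q}}$ and $\leq_{\mathrm{pl}}$ by observing that $\{Z\mid Y\subseteq Z\}$ and $\{Z\mid Z\cap Y\neq\emptyset\}$ are up-sets. The only cosmetic difference is bookkeeping: you swap sums first and then kill the $Y\notin\mathcal{V}$ columns, while the paper first restricts $m_1(W)$ to supersets of $W$ and then swaps; the mathematical content is identical.
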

 \begin{proof}
 By property \eqref{eq: new mass-ordering implications} we only need to prove the implications involving $\leq _\uparrow$. 
 
 1. \[ m_1 \leq_s m_2\implies m_1 \leq _\uparrow m_2\]
 Suppose $ m_1 \leq_s m_2$. Then by defintion of $\leq_s$ there exists a square matrix $S$ with general term $S(W,{Y}),W,Y \subseteq  X $ verifying
  \[
      S(W,{Y}) \geq 0 \implies W \subseteq {Y}, \quad  \forall W,Y \subseteq X,
     \]
     such that 
     \[
     m_1(W)= \sum_{Y \subseteq X}S(W,Y)m_2({Y}), \quad  \forall W \subseteq X.
     \]
Notice preliminarily that, by the above, $m_1(W)=\sum_{W\subseteq Y}S(W,Y)m_2({Y})$.
Let $\mathcal{V} \in \mathcal{P}\mathcal{P}(S) $ be an up-set. Then $$\sum_{W\in\mathcal{V}}m_1(W)=\sum_{W\in\mathcal{V}} \sum_{W\subseteq Y}S(W,Y)m_2({Y})=\sum_{Y\in\mathcal{V}}\sum_{W\subseteq Y\ \&\ W\in\mathcal{V}}S(W,Y)m_2(Y)\leq\sum_{Y\in\mathcal{V}}m_2(Y),$$
the last inequality following from the fact that $\sum_{{W} \subseteq X} S(W,{Y}) =1$.  That is, $m_1 \leq_\uparrow m_2$.

2. \begin{equation*} 
 m_1 \leq _\uparrow m_2 \implies \begin{cases} m_1 \leq_{\mathrm{pl}} m_2 \\
  m_1 \leq_q m_2.
  \end{cases}
 \end{equation*}
It follows immediately from the fact that sets $\{Z \subseteq  X \mid Y \subseteq Z\}$,  and $\{Z \subseteq  X \mid Y \cap  Z \neq \emptyset \}$ are up-sets. 
 \end{proof}
  \begin{lemma} \label{lem: rho-order}
Suppose $m_1$, $m_2$ are agendas such that $m_1 \leq_\uparrow m_2$. Then, for any $G\subseteq A$, $\rho_{m_1}(G) \leq \rho_{m_2}(G)$. 
 \end{lemma}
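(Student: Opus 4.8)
The plan is to collapse the whole statement into a single application of the hypothesis $m_1 \leq_\uparrow m_2$, after recognizing that the contexts contributing to $\rho_m(G)$ form an up-set of feature-sets.

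First I would set up the bijective correspondence between the set $\mathcal{R}$ of induced contexts and the powerset $\mathcal{P}(X)$, sending $Y \subseteq X$ to the context $(A, Y, I \cap A \times Y)$; distinct subsets yield distinct contexts, so this is a bijection. By the definition of the induced probability mass function, $m'((A, Y, I \cap A \times Y)) = m(Y)$, so under this bijection $m'$ on $\mathcal{R}$ is exactly $m$ on $\mathcal{P}(X)$, and the information order $\leq_I$ on $\mathcal{R}$ becomes set inclusion $\subseteq$ on $\mathcal{P}(X)$.

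Next, for the fixed $G \subseteq A$, I would consider
$$\mathcal{V}_G := \{Y \subseteq X \mid G \text{ is Galois-stable in } (A, Y, I \cap A \times Y)\}.$$
By Corollary \ref{cor:up-set Galois stability} (itself a consequence of Proposition \ref{prop:order concept preservation}), the family $\mathcal{R}_G$ of induced contexts in which $G$ is Galois-stable is an up-set of $(\mathcal{R}, \leq_I)$; transporting along the bijection, $\mathcal{V}_G$ is an up-set of $(\mathcal{P}(X), \subseteq)$. Unwinding the definition of the stability index then gives, for each $i \in \{1,2\}$,
$$\rho_{m_i}(G) = \sum\{m_i'(\mathbb{P}_1) \mid G \text{ Galois-stable in } \mathbb{P}_1\} = \sum_{Y \in \mathcal{V}_G} m_i(Y).$$

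Finally, since $\mathcal{V}_G$ is an up-set in $\mathcal{P}(X)$, I would invoke the hypothesis $m_1 \leq_\uparrow m_2$ with $\mathcal{V} := \mathcal{V}_G$ to conclude $\sum_{Y \in \mathcal{V}_G} m_1(Y) \leq \sum_{Y \in \mathcal{V}_G} m_2(Y)$, that is $\rho_{m_1}(G) \leq \rho_{m_2}(G)$, as required. The only real bookkeeping — and the nearest thing to an obstacle — is checking that the ``up-set of contexts'' delivered by Corollary \ref{cor:up-set Galois stability} is precisely the ``up-set of feature-sets'' to which the order $\leq_\uparrow$ refers; once the identifications $\mathcal{R} \cong \mathcal{P}(X)$ and $m' \leftrightarrow m$ are fixed, the rest is a direct substitution into the definition of $\leq_\uparrow$.
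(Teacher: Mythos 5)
Your proposal is correct and is essentially the paper's own proof: both rest on Corollary \ref{cor:up-set Galois stability} to see that the contexts in which $G$ is Galois-stable form an up-set, and then apply the defining inequality of $\leq_\uparrow$ to that up-set. The only difference is presentational — the paper transfers the order $\leq_\uparrow$ to the induced mass functions on $\mathcal{R}$, whereas you transport the up-set back to $\mathcal{P}(X)$ via the bijection $Y \mapsto (A, Y, I \cap A \times Y)$ — which is the same identification read in opposite directions.
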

 As $m_1 \leq_\uparrow m_2$, we have $m_1' \leq_\uparrow m_2'$, where $m_1'$ and $m_2'$ are mass functions induced on $\mathcal{R}$ by $m_1$ and $m_2$ respectively.
 Let $\mathcal{U} \subseteq \mathcal{R}$ be any  up-set in $ \mathcal{P} (\mathcal{R})$.  Then 
 \[ \sum_{\mathbb{P} \in \mathcal{U}} m_1' (\mathbb{P} )  \leq \sum_{\mathbb{P} \in \mathcal{U}} m_2' (\mathbb{P} ). \]
    That is, for any up-set $\mathcal{U}$ in $\mathcal{P}$,
    $p_1' (\mathcal{U}) \leq p_2'(\mathcal{U})$. Let $\mathcal{R}_G$ be the set of formal contexts in which $G$ is Galois-stable.
    As  $\mathcal{R}_G$ is an up-set by Corollary \ref{cor:up-set Galois stability}, we have 
    \[
       \rho_{m_1}(G)= p_{m_1'}(\mathcal{R}_G) \leq p_{m_2'}(\mathcal{R}_G)= \rho_{m_2}({G} ).
    \]

 \begin{prop}\label{prop:specification-categorization} 
  Let $m_1,m_2: \mathcal{P}(X) \to [0,1] $ be the mass functions defining two agendas. If $m_1 \leq_\uparrow m_2$, then for any  fixed $\beta \in [0,1]$, and ${B} \subseteq A$, if ${B} \in \mathbb{P}(m_1,\beta)$, then ${B} \in \mathbb{P}(m_2,\beta)$.
 \end{prop}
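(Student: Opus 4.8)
The plan is to reduce the statement directly to Lemma~\ref{lem: rho-order}, which carries essentially all the weight. First I would unwind the hypothesis $B \in \mathbb{P}(m_1,\beta)$ using the definition of $\mathbb{P}(m_1,\beta)$: this says precisely that there exists a set $G \subseteq A$ with $\rho_{m_1}(G) \geq \beta$ and $B = \mathsf{Cl}(G)$, where the closure $\mathsf{Cl}$ is computed in the fixed ambient context $\mathbb{P} = (A,X,I)$ and therefore does not depend on which agenda is under consideration.

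Next I would apply Lemma~\ref{lem: rho-order} to this particular $G$. Since the hypothesis of the proposition is exactly $m_1 \leq_\uparrow m_2$, the lemma yields $\rho_{m_1}(G) \leq \rho_{m_2}(G)$. Chaining the two inequalities gives $\rho_{m_2}(G) \geq \rho_{m_1}(G) \geq \beta$. Thus the same $G$ that witnessed membership on the $m_1$ side now satisfies the defining condition of $\mathbb{P}(m_2,\beta)$.

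Finally, because the closure operator is agenda-independent, the very same $G$ witnesses $B = \mathsf{Cl}(G) \in \mathbb{P}(m_2,\beta)$, which is the desired conclusion. This mirrors the pattern of the earlier monotonicity-in-$\beta$ proposition almost verbatim; the only structural difference is that here monotonicity is exploited in the agenda (via $\leq_\uparrow$) rather than in the threshold.

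I do not expect any genuine obstacle: the single substantive ingredient is Lemma~\ref{lem: rho-order}, whose own proof rests on the induced probability functions satisfying $m_1' \leq_\uparrow m_2'$ together with $\mathcal{R}_G$ being an up-set by Corollary~\ref{cor:up-set Galois stability}. The one point deserving a moment of care is to record explicitly that both the closure operator $\mathsf{Cl}$ and the witnessing set $G$ are shared between $\mathbb{P}(m_1,\beta)$ and $\mathbb{P}(m_2,\beta)$, so that monotonicity of $\rho$ in the agenda transfers directly to membership of the closed set $B$.
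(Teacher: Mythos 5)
Your proposal is correct and follows essentially the same route as the paper's own proof: extract a witness $G$ with $\rho_{m_1}(G)\geq\beta$ and $\mathsf{Cl}(G)=B$, apply Lemma \ref{lem: rho-order} to obtain $\rho_{m_2}(G)\geq\rho_{m_1}(G)\geq\beta$, and conclude $B=\mathsf{Cl}(G)\in\mathbb{P}(m_2,\beta)$. Your explicit remark that $\mathsf{Cl}$ is computed in the ambient context and hence is agenda-independent is a point the paper leaves implicit, but it is exactly right and changes nothing in the argument.
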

 \begin{proof}
 Let $B \subseteq A$ be such that  ${B} \in  \mathbb{P}(m_1,\beta)$. Then,   there exists $G \subseteq A$ such that $\rho_{m_1} (G) \geq \beta$ and $\mathsf{Cl}(G)=B$. By Lemma \ref{lem: rho-order}, we have $\rho_{m_2} (G) \geq \beta$. Thus, $\mathsf{Cl}(G)=B \in \mathbb{P}(m_2,\beta)$. 
 \end{proof}
Therefore, if $m_1 \leq _\uparrow m_2$, for any fixed stability parameter $\beta \in [0,1]$, the categorization obtained from $m_1$ by the stability-based method  is coarser than the one obtained from $m_2$. As a smaller mass function in $ \leq _\uparrow $  order corresponds to a more specific agenda i.e. less amount of information being considered in categorization, it is reasonable that this gives a coarser categorization than a larger mass function in $ \leq _\uparrow $ order.
The following Corollary is an immediate implication of Proposition \ref{prop:specification-categorization}, and property \eqref{eq: new mass-ordering implications}.
 \begin{cor} \label{cor:Dempster order}
 Suppose $m_1$, $m_2$ are agendas such that $m_1 \leq_s m_2$ (or $m_1 \leq_d m_2$). Then, for any $G \subseteq A$, $\rho_{m_1}(G) \leq \rho_{m_2}(G)$. Moreover, for any ${B} \subseteq A$, ${B} \in \mathbb{P}(m_1,\beta)$ implies  ${B} \in \mathbb{P}(m_2,\beta)$.
 \end{cor}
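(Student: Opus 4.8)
The plan is to observe that this is an immediate corollary obtained by chaining together the implications already established, so no new work beyond invoking prior results is required. The key observation is that the hypothesis of the corollary sits at the top of the implication chain in \eqref{eq: new mass-ordering implications}, namely $m_1 \leq_d m_2 \implies m_1 \leq_s m_2 \implies m_1 \leq_\uparrow m_2$, while both Lemma \ref{lem: rho-order} and Proposition \ref{prop:specification-categorization} take $m_1 \leq_\uparrow m_2$ as their hypothesis. Thus the entire content of the proof is to descend from the stronger specialization/specification orders down to the up-set restricted order $\leq_\uparrow$, and then cite the two results that have already been phrased in terms of $\leq_\uparrow$.

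Concretely, I would first note that if $m_1 \leq_d m_2$, then by the first implication of \eqref{eq: new mass-ordering implications} we have $m_1 \leq_s m_2$, and in either case the second implication of \eqref{eq: new mass-ordering implications} yields $m_1 \leq_\uparrow m_2$. Once $m_1 \leq_\uparrow m_2$ is in hand, Lemma \ref{lem: rho-order} immediately gives $\rho_{m_1}(G) \leq \rho_{m_2}(G)$ for every $G \subseteq A$, establishing the first assertion. For the second assertion, I would apply Proposition \ref{prop:specification-categorization} verbatim: with $m_1 \leq_\uparrow m_2$ and any fixed $\beta \in [0,1]$, membership $B \in \mathbb{P}(m_1,\beta)$ forces $B \in \mathbb{P}(m_2,\beta)$.

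There is essentially no obstacle here, since the corollary is purely a matter of transitivity of implications together with two lemmas whose hypotheses are weaker than those assumed. The only point requiring any care is making sure the two cases of the hypothesis ($\leq_s$ and $\leq_d$) are both correctly routed through \eqref{eq: new mass-ordering implications} to reach $\leq_\uparrow$; once that routing is made explicit, the rest is a direct appeal to Lemma \ref{lem: rho-order} and Proposition \ref{prop:specification-categorization}.
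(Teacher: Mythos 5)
Your proposal is correct and matches the paper's own argument exactly: the paper likewise treats this as an immediate consequence of the implication chain in \eqref{eq: new mass-ordering implications} (routing $\leq_d$ and $\leq_s$ down to $\leq_\uparrow$) combined with Lemma \ref{lem: rho-order} and Proposition \ref{prop:specification-categorization}. Nothing is missing; your version merely makes the routing explicit, which the paper leaves implicit.
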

 
 \subsubsection{Methods via transformation to probability} \label{sssec:Methods via transformation to probability} 
 Let $m:\mathcal{P}(X) \to [0,1]$ be the mass function describing an agenda. We can use methods in Dempster-Shafer theory  to transform mass functions into probability functions to estimate importance of each feature in categorization (see Section \ref{ssec:Importance of different features in these categorizations} for an example). Two well known methods for transforming a mass function to a probability function are  plausibility transform \cite{cobb2006plausibility} and pignistic transformation \cite{klawonn1992dynamic, smets2005decision}.
 \begin{definition}
Let  $m_1:\mathcal{P}({S}) \to [0,1]$ be any mass function. 
\begin{enumerate}
    \item The {\em pignistic transformation} of $m$ is $\mathrm{bet}_p:\mathcal{P}({S}) \to [0,1]$\footnote{The notation $\mathrm{bet}$ comes from Smets introduced in \cite{smets1990constructing}. The 'bet' stands for bets as motivation behind transformation. The word pignistic comes from Latin word 'pignus' meaning bets.} given by the following. For any $s \in {S}$,
    \[
    \mathrm{bet}_p(m)(s)= \sum_{s \in {Y}} \frac{m( {Y}) }{ |{Y}|}.
    \]
    \item  The {\em plausibility transformation} of $m$ is $ \mathrm{Pl}\_P:\mathcal{P}({S}) \to [0,1]$ given by the following. For any $s \in {S}$,
    \[
    \mathrm{Pl\_P}(m)(s)= \frac{\pl(s)}{\sum_{s \in {S}}\pl(s)}.
    \]
\end{enumerate}
 \end{definition}
 Both $\mathrm{bet}_p(m)$ and $\mathrm{Pl\_P}(m)$ can be used to estimate the importance of different features in categorizations. Thus, this approach allows us to estimate  importance of individual features in agendas (and categorization given by them) obtained possibly from complex deliberation process. These estimated importance values also provide an alternative method for categorization when we are only interested in flat categorization i.e. partition of objects. The importance  values of features  can be used as weights in computing proximity or dissimilarity between different objects based on features shared and not shared between them. The  dissimilarity or proximity data obtained in such a way can be used to cluster objects based on several machine learning techniques \cite{jain1999data,jain1988algorithms}. For more detailed properties of these transformations and comparative study refer to \cite{cobb2003comparison}.  These methods can only provide a flat categorization or clustering, unlike stability-based method which provides a hierarchical categorization.

 \section{Deliberation and categorization in the non-crisp case}\label{sec:Deliberation and categorization-non crisp case}
 In this section, we try to model deliberation scenarios when agendas of agents are given by Dempster-Shafer mass functions as discussed in previous section.  Dempster-Shafer theory has been used for aggregating preferences of different agents \cite{sentz2002combination,frittella2020toward}. In previous section we have discussed that the mass functions describing non-crisp agenda can be interpreted as a priority or preference function for agents describing the priority assigned by an agent to a categorization. Thus, we believe that Dempster-Shafer based preference aggregation is a reasonable way to model aggregation of these agendas through deliberation.
 
 Let us consider two agents $j_1$ and $j_2$ with their agendas given by mass functions $m_1$ and $m_2$ on $\mathcal{P}(X)$. 
 
 \textbf{Common agenda --} Given two mass functions $m_1$ and $m_2$ representing agendas of two agents $j_1$ and $j_2$, their common agenda is given by their Dempster-Shafer combimation \cite{shafer1976mathematical} $m_1 \oplus m_2$  given as follows. For any ${Y} \subseteq X$, $Y \neq \emptyset$, 
 \begin{equation}\label{eq:DS-rule}
      m_1 \oplus m_2 ({Y}) = \frac{\sum_{{Y}_1 \cap {Y}_2= {Y}} m_1({Y}_1)m_2({Y}_2)}{\sum_{{Y}_1 \cap {Y}_2 \neq \emptyset}m_1({Y}_1)m_2({Y}_2)}.
 \end{equation}
 and $m_1 \oplus m_2 (\emptyset) =0$. 
 
 \textbf{Distributed agenda --} Given two mass functions $m_1$ and $m_2$ representing agendas of two agents $j_1$ and $j_2$, their distributed agenda is given by mass function $m_1 \cup_m m_2$ given as follows. For any ${Y} \subseteq X$,
 \begin{equation}\label{eq:inverse DS-rule} 
   m_1 \cup m_2 ({Y}) = \sum_{{Y}_1 \cup_m {Y}_2= {Y}} m_1({Y}_1)m_2(Y_2).
 \end{equation}
 $m_1 \cup_m m_2$ is indeed a  mass function as $\sum_{Y \subseteq X} =\sum_{Y \subseteq X} \sum_{{Y}_1 \cup {Y}_2= {Y}} m_1({Y}_1)m_2(Y_2) = \sum_{{Y}_1} \subseteq X m_1(Y_1) \sum_{{Y}_2} \subseteq X m_2(Y_2)  =1$. The mass functions $m_1 \oplus m_2 $ and $m_1 \cup_m m_2$ can be considered to represent the common (normalized to ignore  conflicts) and distributed agendas of $j_1$ and $j_2$ respectively. Justification for this interpretation is as follows. If the agendas of agents $j_1$ and $j_2$ are given by $Y_1$ and $Y_2$, then their common (resp. distributed) agenda is given by $Y_1 \cap Y_2$  (resp.  $Y_1 \cup Y_2$). If we assume agents $j_1$ and $j_2$ are independent the value $m_1({Y}_1)m_2(Y_2)$ can be considered as preference (or evidence) for $Y_1$ being the agenda of $j_1$ and $Y_2$ of $j_2$ simultaneously. Thus, we attach  the mass $m_1({Y}_1)m_2(Y_2)$  to  $Y_1 \cap Y_2$   (resp. $Y_1 \cup Y_2$) in case of taking common (resp.  distributed) agenda. The normalization in \eqref{eq:DS-rule} 
 allows us to ignore the completely contradictory agendas (i.e. agendas with no intersection), thus giving more weight to issues which have consensus of the agents. An example of a basic deliberation scenario involving non-crisp agendas is shown in Section \ref{ssec:Deliberation in non-crisp case}. 
 \begin{remark}
 Note that in the case of crisp agendas i.e. in case $m_1$ and $m_2$ have only one focal element $m_1 \oplus m_2$ and $m_1 \cup_m m_2$ also have only one focal element,  the corresponding categorizations with mass $1$ are $f_1(j_1 \vee j_2)$ and $f_2(j_1 \vee j_2)$ respectively. 
 \end{remark}
 
  \begin{lemma} \label{lem: combination-ordering relation} 
 For any mass functions $m_1,m_2: \mathcal{P}(S) \to [0,1]$, we have 
 \begin{enumerate}
     \item $m_1 \cap m_2 \leq_s m_1, m_2$.
 \item   $  m_1, m_2 \leq_\uparrow m_1 \cup_m m_2$.

 \end{enumerate}
 \begin{proof}
1. We show that $m_1 \cap m_2 \leq_s m_1$. The proof for $m_2$ is similar.
The proof follows by setting
\[
S(W,Y)= \sum \{m_2(Y') \mid Y' \cap Y = W  \}.
\]
It is straightforward to check that $S$ satisfies all the required conditions in Definition \ref{def:mass-ordering}.

2. For any $Y \subseteq X$, the mass $m_1(Y)$ is transferred completely to the sets larger than or equal to $Y$ in performing operation $\cup$. Thus, any mass attached by $m_1$ to any set in an up-set $\mathcal{V}$ remains in $\mathcal{V}$ in $m_1 \cup_m m_2$. In particular, for every $Y\in\mathcal{V}$ and every $Y'\subseteq X$, $Y\cup Y'\in\mathcal{V}$. Therefore, 
 \[\sum_{Y \in \mathcal{V}} m_1 (Y) =\sum_{Y \in \mathcal{V}}(\sum_{Y'\subseteq X}m_2(Y'))m_1(Y)  \leq \sum_{Y \in \mathcal{V}}\sum_{Y_1\cup Y_2=Y}m_1(Y_1)m_2(Y_2)= \sum_{Y \in \mathcal{V}} (m_1 \cup_m m_2) (Y).\]
The proof for $m_2$ is obtained in identical manner. 
 \end{proof}
 \end{lemma}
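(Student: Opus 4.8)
The plan is to prove the two parts separately, each reducing to a direct manipulation of the defining combination formulas. For part (1), I would establish $m_1 \cap m_2 \leq_s m_1$, the case of $m_2$ being symmetric, by exhibiting an explicit transfer matrix that depends only on $m_2$. The natural candidate is $S(W,Y) := \sum\{m_2(Y') \mid Y' \cap Y = W\}$, which redistributes the mass of $m_1$ according to how each focal set gets shrunk upon intersection with the focal sets of $m_2$. First I would check the support condition: if $S(W,Y) > 0$ then some $Y'$ satisfies $Y' \cap Y = W$, forcing $W = Y' \cap Y \subseteq Y$. Next I would verify the column-stochastic condition $\sum_W S(W,Y) = 1$; this holds because, for fixed $Y$, each $Y' \subseteq X$ contributes its mass $m_2(Y')$ to exactly one cell, namely the one indexed by $W = Y' \cap Y$, so the column sum collapses to $\sum_{Y'} m_2(Y') = 1$. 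Finally I would confirm the reconstruction identity $\sum_Y S(W,Y) m_1(Y) = \sum_{Y_1 \cap Y_2 = W} m_1(Y_1) m_2(Y_2) = (m_1 \cap m_2)(W)$ by unfolding the definition of $S$ and re-indexing the resulting double sum over pairs $(Y_1, Y_2)$ with $Y_1 \cap Y_2 = W$.

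For part (2), I would prove $m_1 \leq_\uparrow m_1 \cup_m m_2$, again with $m_2$ symmetric, using the intuition that union-combination only moves mass upward in the subset lattice, so mass lying inside an up-set cannot escape it. Fixing an up-set $\mathcal{V} \subseteq \mathcal{P}(S)$, I would expand the definition of $\cup_m$ and swap the order of summation to obtain $\sum_{Y \in \mathcal{V}} (m_1 \cup_m m_2)(Y) = \sum_{Y_1 \cup Y_2 \in \mathcal{V}} m_1(Y_1) m_2(Y_2)$. The key observation is that whenever $Y_1 \in \mathcal{V}$, the up-set property gives $Y_1 \cup Y_2 \supseteq Y_1 \in \mathcal{V}$, hence $Y_1 \cup Y_2 \in \mathcal{V}$; thus the index set $\{(Y_1, Y_2) \mid Y_1 \in \mathcal{V}\}$ is contained in $\{(Y_1, Y_2) \mid Y_1 \cup Y_2 \in \mathcal{V}\}$. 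Restricting the nonnegative double sum to this smaller index set and marginalizing out $Y_2$ via $\sum_{Y_2 \subseteq X} m_2(Y_2) = 1$ then yields $\sum_{Y \in \mathcal{V}} (m_1 \cup_m m_2)(Y) \geq \sum_{Y_1 \in \mathcal{V}} m_1(Y_1)$, which is exactly the inequality defining $m_1 \leq_\uparrow m_1 \cup_m m_2$.

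The main obstacle I anticipate is in part (1): one has to guess the correct transfer matrix and, crucially, check that a single matrix $S$ works for all $W$ simultaneously while depending only on $m_2$, not on $W$ or on $m_1$. Once the candidate $S(W,Y) := \sum\{m_2(Y') \mid Y' \cap Y = W\}$ is in hand, all three verifications are routine, but recognizing that the column-stochastic condition follows from the fact that the map $Y' \mapsto Y' \cap Y$ partitions the focal mass of $m_2$ is the one genuinely nontrivial step. Part (2) is comparatively transparent; the only care needed is the set-inclusion of index sets together with the observation that $\mathcal{V}$ being upward closed is precisely what keeps $Y_1 \cup Y_2$ inside $\mathcal{V}$.
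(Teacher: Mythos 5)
Your proposal is correct and takes essentially the same route as the paper's proof: part (1) uses the identical transfer matrix $S(W,Y)=\sum\{m_2(Y')\mid Y'\cap Y=W\}$, and part (2) is the same up-set argument (the paper phrases it as each product $m_1(Y)m_2(Y')$ with $Y\in\mathcal{V}$ landing on the set $Y\cup Y'\in\mathcal{V}$, which is your index-set inclusion in disguise). The only difference is that you spell out the three verifications for the $\leq_s$ matrix, which the paper leaves as ``straightforward to check.''
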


 The Proposition \ref{prop:specification-categorization}, and Corollary \ref{cor:Dempster order} can be considered as generalization of the Proposition \ref{prop:order concept preservation} to the non-crisp case. 
 The following corollary follows  from   Lemma \ref{lem: combination-ordering relation}, property \eqref{eq: new mass-ordering implications},  and Proposition \ref{prop:specification-categorization} immediately.
 \begin{cor} \label{cor:intersection-union masses}
 Suppose $m_1$ and $m_2$ are any two mass functions representing two non-crisp agendas. Then for any $\beta \in [0,1]$, and any $B \subseteq A$, 
 \begin{enumerate}
     \item  If $B \in \mathbb{P}(m_1 \cap m_2,\beta)$, then ${B} \in  \mathbb{P}(m_1,\beta)$, and   ${B} \in \mathbb{P}(m_2,\beta)$.
     \item  If   ${B} \in \mathbb{P}(m_1,\beta)$  or    ${B} \in \mathbb{P}(m_2,\beta)$, then $B \in \mathbb{P}(m_1 \cup_m m_2,\beta)$.
 \end{enumerate}
 \end{cor}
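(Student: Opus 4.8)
The plan is to chain together the three results that the corollary's preamble already advertises, keeping careful track of the directions of the various mass orderings. For part 1, the starting point is item 1 of Lemma~\ref{lem: combination-ordering relation}, which gives $m_1 \cap m_2 \leq_s m_1$ and $m_1 \cap m_2 \leq_s m_2$. I would then invoke the implication chain \eqref{eq: new mass-ordering implications}, whose relevant link here is $\leq_s\;\Rightarrow\;\leq_\uparrow$, to upgrade these to $m_1 \cap m_2 \leq_\uparrow m_1$ and $m_1 \cap m_2 \leq_\uparrow m_2$. With both inequalities phrased in the up-set restricted order, Proposition~\ref{prop:specification-categorization} applies verbatim with the smaller mass function taken to be $m_1\cap m_2$: it yields that $B \in \mathbb{P}(m_1 \cap m_2,\beta)$ implies $B \in \mathbb{P}(m_1,\beta)$, and running the same argument with $m_2$ in place of $m_1$ gives $B \in \mathbb{P}(m_2,\beta)$ as well.

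For part 2 the argument is even more direct, since item 2 of Lemma~\ref{lem: combination-ordering relation} already delivers the needed inequalities in the $\leq_\uparrow$ order, namely $m_1 \leq_\uparrow m_1 \cup_m m_2$ and $m_2 \leq_\uparrow m_1 \cup_m m_2$; no intermediate conversion through \eqref{eq: new mass-ordering implications} is required. Applying Proposition~\ref{prop:specification-categorization} once with $(m_1,\, m_1\cup_m m_2)$ and once with $(m_2,\, m_1\cup_m m_2)$ shows respectively that $B \in \mathbb{P}(m_1,\beta)$ implies $B \in \mathbb{P}(m_1 \cup_m m_2,\beta)$ and that $B \in \mathbb{P}(m_2,\beta)$ implies $B \in \mathbb{P}(m_1 \cup_m m_2,\beta)$. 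Taking the disjunction of the two hypotheses then gives exactly the stated conclusion.

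There is essentially no hard step here: the corollary is a bookkeeping consequence of the monotonicity of $\beta$-categorization along $\leq_\uparrow$ (Proposition~\ref{prop:specification-categorization}) together with the two placements of the combined mass functions in the ordering supplied by Lemma~\ref{lem: combination-ordering relation}. The only point that warrants a moment's care is orientation: the un-normalized Dempster combination $m_1 \cap m_2$ sits \emph{below} each $m_i$, so the $\beta$-categorization it induces is coarser and its Galois-stable sets propagate \emph{upward} to those of $m_1$ and $m_2$; whereas $m_1 \cup_m m_2$ sits \emph{above} each $m_i$, so stable sets of each $m_i$ propagate upward to it. As long as Proposition~\ref{prop:specification-categorization} is applied with the correct mass function in the role of the $\leq_\uparrow$-smaller one, both implications follow immediately, and I would present the proof in two short sentences per part rather than reproving any of the cited facts.
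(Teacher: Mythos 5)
Your proof is correct and follows exactly the route the paper intends: the paper derives this corollary ``immediately'' from Lemma~\ref{lem: combination-ordering relation}, the implication chain \eqref{eq: new mass-ordering implications}, and Proposition~\ref{prop:specification-categorization}, which is precisely the chaining you spell out. Your orientation bookkeeping---upgrading $\leq_s$ to $\leq_\uparrow$ via \eqref{eq: new mass-ordering implications} in part~1, and noting that part~2 needs no such conversion since Lemma~\ref{lem: combination-ordering relation}(2) already gives $\leq_\uparrow$---is exactly the detail the paper leaves implicit.
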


 Thus, for any non-crisp agendas $m_1$ and $m_2$ applying operation $\cap$ (resp. $\cup$) leads to a categorization coarser (resp. finer) than either of the categorizations $m_1$ or $m_2$ for any fixed stability parameter $\beta \in [0, 1]$. Hence,  Corollary \ref{cor:intersection-union masses}  can be seen  as a generalization of  the Proposition \ref{prop:lattice of categorizations} to the non-crisp case. This is again consistent with the idea that operations $\cap$ and $\cup$ give non-crisp versions of common and distributed agenda respectively. 
 
  \begin{remark} 
 The Lemma \ref{lem: rho-order}, and Proposition \ref{prop:specification-categorization} do not extend to q-ordering or pl-ordering. Let $X=\{y_1, y_2,y_3\}$, and mass functions $m_1$ and $m_2$ be as follows.
 \[
 m_1(\{y_1,y_3\})=0.3, \quad m_1(\{y_2,y_3\})=0.3, \quad m_1(\{y_1,y_2,y_3\})=0.2, \quad m_1(\{y_3\})=0.2 \quad \text{and}
 \]
 \[
 m_2(\{y_1,y_3\})=0.1, \quad m_1(\{y_2,y_3\})=0.1, \quad m_1(\{y_1,y_2,y_3\})=0.5, \quad m_1(\{y_3\})=0.3.
 \]
 It is clear that $m_1 \leq _q m_2$ and $m_1 \leq _{pl} m_2$. However, for the up-set $\mathcal{V} =\{\{y_1,y_3\}, \{y_2,y_3\}, \{y_1,y_2,y_3\}\}$ in $\mathcal{P}(X)$, we have
  \[\sum_{Y \in \mathcal{V}} m_2 (Y)  < \sum_{Y \in \mathcal{V}} m_1  (Y).\] 
 \end{remark}
 \subsection{Substitution relation in deliberation in the non-crisp case} \label{ssec:Substi deliberation non-crisp}
 
 Let $S \subseteq X \times A \times X$ be a  substitution relation giving substitution preference between different issues for involved agents. Let $m_1$ and $m_2$ be the mass functions representing agendas of $j_1$ and $j_2$ respectively.
 Here, we consider the following two possible outcomes of deliberation in this situation. For any $Z\subseteq X$, let $e_Z=\sqcap_{y \in Z}y $.
 \begin{equation} \label{eq:substi-disjunctive}
     m_1 \cup_S m_2(Y) = \sum_{Y_1 \cup Y_2=Y}m_1(Z_1)m_2(Z_2),
 \end{equation}
 For any $Y \subseteq X$, $Y \neq \emptyset$
 \begin{equation} \label{eq:substi-conjunctive}
     m_1 \oplus_S m_2(Y) =\frac{ \sum_{Y_1 \cap Y_2=Y}m_1(Z_1)m_2(Z_2)}{\sum_{Y_1 \cap Y_2\neq \emptyset}m_1(Z_1)m_2(Z_2)},
 \end{equation}
 where $Y_1= \{x \in Y \mid x \geq j_2\pdla e_{Z_1} \}$, and $Y_2= \{x \in Y \mid x \geq j_1 \pdla e_{Z_2} \}$ and $ m_1 \oplus_S m_2(\emptyset) = 0$. The denominator term $m_1(Z_1)m_2(Z_2){\sum_{Y_1 \cap Y_2\neq \emptyset}m_1(Z_1)m_2(Z_2)}$ is a normalization term and hence a well-defined mass function. The proofs that these mass functions are well-defined is similar to the proofs of the fact that combinations given by $\oplus$, $\cup$ and $\cap$ are well-defined. The interpretations of $Y_1$ and $Y_2$ were described in the discussion following \eqref{substi-union}. These combination rules can be seen as counterparts of \eqref{eq:DS-rule}, and \eqref{eq:inverse DS-rule} where the agendas of agents are replaced by 
 agendas after substitution carried out by other agent i.e. mass $m_1(Z_1)m_2(Z_2)$ (and possibly normalized) is assigned to  $Y_1 \cup Y_2$ (resp. $Y_1 \cap Y_2$) instead of $Z_1 \cup Z_2$ (resp. $Z_1 \cap Z_2$). The normalization in \eqref{eq:DS-rule} allows us to ignore the agendas which give  contradictions after substitutions (i.e. agendas $Z_1$, $Z_2$ such that that $Y_1 \cap Y_2 =\emptyset$ for $Y_1$ and $Y_2$ as defined above), thus giving more weight to the features which have consensus of the agents. An un-normalized version of \eqref{eq:substi-conjunctive} is given by 
 \begin{equation} \label{eq: unnormalized substi-conjunctive}
     m_1 \cap_S m_2 = \sum_{Y_1 \cap Y_2=Y}m_1(Z_1)m_2(Z_2).
 \end{equation}
 For an illustrative example of substitution of agendas in non-crisp case, see Section \ref{ssec:Example with substitution non-crisp}.
 \begin{remark}
 In case of crisp agendas, the aggregation rules \eqref{eq:substi-disjunctive} and \eqref{eq: unnormalized substi-conjunctive}  reduce to the aggregations given by substitution-union and substitution-intersection in Section \ref{ssec:Substi deliberation-crisp} respectively. 
 \end{remark}
 
  \begin{lemma}\label{lem:substitution aggregation rule mass ordering}
 Let $m_1, m_2: \mathcal{P}(X) \to [0,1]$ be the mass functions representing two agendas. Let $S$ be any  substitution preference relation. Then

     \[ m_1 \cap_S m_2 \leq_\uparrow m_1 \cup_S m_2.\]

 \end{lemma}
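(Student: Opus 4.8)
The plan is to unfold both combination rules as sums indexed by the pairs of source agendas $(Z_1, Z_2)$, and to notice that the two rules distribute the \emph{same} product masses $m_1(Z_1)m_2(Z_2)$, differing only in which target set receives each product. For a fixed pair $(Z_1, Z_2)$, let $Y_1$ and $Y_2$ be the substituted feature sets determined by $Z_1$ and $Z_2$ as in the definitions preceding the lemma (so $Y_1$ is the feature set of $j_2 \pdla e_{Z_1}$ and $Y_2$ that of $j_1 \pdla e_{Z_2}$). By \eqref{eq: unnormalized substi-conjunctive} the rule $\cap_S$ assigns $m_1(Z_1)m_2(Z_2)$ to $Y_1 \cap Y_2$, while by \eqref{eq:substi-disjunctive} the rule $\cup_S$ assigns the same quantity to $Y_1 \cup Y_2$.

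I would then fix an arbitrary up-set $\mathcal{V} \subseteq \mathcal{P}(X)$ and interchange the order of summation, grouping each defining double sum by source pairs rather than by target set, to obtain
\[
\sum_{Y \in \mathcal{V}} (m_1 \cap_S m_2)(Y) = \sum_{\substack{(Z_1, Z_2):\\ Y_1 \cap Y_2 \in \mathcal{V}}} m_1(Z_1)m_2(Z_2), \qquad \sum_{Y \in \mathcal{V}} (m_1 \cup_S m_2)(Y) = \sum_{\substack{(Z_1, Z_2):\\ Y_1 \cup Y_2 \in \mathcal{V}}} m_1(Z_1)m_2(Z_2).
\]
The decisive step is then purely set-theoretic: since $Y_1 \cap Y_2 \subseteq Y_1 \cup Y_2$ always holds and $\mathcal{V}$ is upward closed, $Y_1 \cap Y_2 \in \mathcal{V}$ forces $Y_1 \cup Y_2 \in \mathcal{V}$. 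Hence the set of pairs contributing to the left-hand sum is contained in the set of pairs contributing to the right-hand sum, and as every summand $m_1(Z_1)m_2(Z_2)$ is non-negative, the inequality $\sum_{Y \in \mathcal{V}}(m_1 \cap_S m_2)(Y) \leq \sum_{Y \in \mathcal{V}}(m_1 \cup_S m_2)(Y)$ follows. As $\mathcal{V}$ was arbitrary, this is exactly $m_1 \cap_S m_2 \leq_\uparrow m_1 \cup_S m_2$. The argument mirrors and generalizes the mass-transfer reasoning used in the proof of Lemma \ref{lem: combination-ordering relation}(2).

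I expect the only real obstacle to be bookkeeping rather than substance: one must verify that the reindexing in the display is legitimate, i.e.~that in each rule every pair $(Z_1, Z_2)$ contributes its product to exactly one target set, so that no mass is double-counted or lost when passing from a sum over $Y \in \mathcal{V}$ to a sum over pairs. This is immediate once one observes that $Y_1 \cap Y_2$ and $Y_1 \cup Y_2$ are single, well-defined subsets of $X$ for each $(Z_1, Z_2)$. It is also worth recording that the comparison involves the \emph{un-normalized} rule $\cap_S$ of \eqref{eq: unnormalized substi-conjunctive} rather than its normalized variant $\oplus_S$, so that no rescaling intervenes and both sides are genuine mass functions whose total masses factor as $(\sum_{Z_1} m_1(Z_1))(\sum_{Z_2} m_2(Z_2)) = 1$.
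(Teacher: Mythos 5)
Your proposal is correct and follows essentially the same route as the paper's own proof: both rest on the observation that each pair $(Z_1,Z_2)$ contributes the same product mass $m_1(Z_1)m_2(Z_2)$ under either rule, assigned to $Y_1\cap Y_2$ versus $Y_1\cup Y_2$, so up-closure of $\mathcal{V}$ together with $Y_1\cap Y_2\subseteq Y_1\cup Y_2$ yields the inequality. Your version merely makes explicit the reindexing over source pairs and the use of the un-normalized rule $\cap_S$, which the paper leaves implicit.
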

 \begin{proof}
For any  $Y_1, Y_2 \subseteq X$, $Y_1 \cap Y_2 \subseteq Y_1 \cup Y_2$. Therefore, for any up-set $\mathcal{V}$, if $Y_1\cap Y_2\in\mathcal{V}$ then $Y_1\cup Y_2\in\mathcal{V}$. Since, 
 \[
 m_1 \cap_S m_2 (Y) = \sum_{Y_1 \cap Y_2 =Y} m_1(Z_1)m_2(Z_2)
 \]
 and 
 \[ m_1 \cup_S m_2(Y) = \sum_{Y_1 \cup Y_2=Y}m_1(Z_1)m_2(Z_2),\]
 it immediately follows that  
 \[\sum_{Y \in \mathcal{V}} (m_1 \cap_S m_2)(Y)  \leq \sum_{Y \in \mathcal{V}} (m_1 \cup_S m_2) (Y)\]
as any mass attached to a set $Y$ in $m_1 \cap_S m_2$ is attached to a larger set (in inclusion order) in $m_1 \cup_S m_2$.
 \end{proof}
 The following Corollary follows immediately from Lemma \ref{lem:substitution aggregation rule mass ordering}, and Proposition \ref{prop:specification-categorization}.
 \begin{cor} \label{cor:substi formal context ordering non-crisp}
 Let $m_1$ and $m_2$ be the mass functions representing two agendas  Let $S$ be any  substitution  relation. Then for any $\beta \in [0,1]$, 
\[ B \in \mathbb{P}(m_1 \cap_S m_2,\beta), \quad \text{implies} \quad B \in \mathbb{P}(m_1 \cup_S m_2,\beta).\]
 \end{cor}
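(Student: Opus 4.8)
The plan is to obtain the statement as a direct composition of the two preceding results, with essentially no additional computation required. The key preliminary observation is that $m_1 \cap_S m_2$ and $m_1 \cup_S m_2$ are themselves genuine mass functions on $\mathcal{P}(X)$ (as recorded in the discussion surrounding \eqref{eq:substi-disjunctive} and \eqref{eq: unnormalized substi-conjunctive}), so that they are legitimate inputs to Proposition \ref{prop:specification-categorization}.

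First I would invoke Lemma \ref{lem:substitution aggregation rule mass ordering} to record the $\leq_\uparrow$-comparison
\[
m_1 \cap_S m_2 \leq_\uparrow m_1 \cup_S m_2.
\]
Then I would apply Proposition \ref{prop:specification-categorization} with the pair $(m_1 \cap_S m_2,\ m_1 \cup_S m_2)$ playing the role of $(m_1, m_2)$ in that proposition. For the fixed stability parameter $\beta \in [0,1]$, the proposition then yields directly that $B \in \mathbb{P}(m_1 \cap_S m_2, \beta)$ implies $B \in \mathbb{P}(m_1 \cup_S m_2, \beta)$ for every $B \subseteq A$, which is exactly the claim.

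There is no genuine obstacle at the level of the corollary itself: all of the real content has already been discharged upstream. The substantive fact—that passing from $\cap_S$ to $\cup_S$ transfers mass only \emph{upward} in the inclusion order, since $Y_1 \cap Y_2 \subseteq Y_1 \cup Y_2$ and up-sets are closed upward—is precisely what Lemma \ref{lem:substitution aggregation rule mass ordering} establishes, while the monotonicity of the stability-based categorization $\mathbb{P}(\cdot,\beta)$ in the $\leq_\uparrow$-order is precisely what Proposition \ref{prop:specification-categorization} establishes. The corollary is therefore a formal consequence of chaining these two monotonicity statements, and the proof reduces to the single sentence identifying which result is applied to which pair of mass functions.
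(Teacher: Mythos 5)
Your proposal is correct and follows exactly the paper's own route: the paper also derives this corollary immediately by chaining Lemma \ref{lem:substitution aggregation rule mass ordering} (giving $m_1 \cap_S m_2 \leq_\uparrow m_1 \cup_S m_2$) with the monotonicity of $\mathbb{P}(\cdot,\beta)$ under $\leq_\uparrow$ from Proposition \ref{prop:specification-categorization}. Nothing is missing; your preliminary remark that both combinations are genuine mass functions is a sensible (if implicit in the paper) sanity check.
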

 \begin{remark}
 The Corollary \ref{cor:substi formal context ordering non-crisp} can be seen as a generalization of Proposition \ref{prop:substi order context crisp case} to the non-crisp case.
 \end{remark}
 The Corollary \ref{cor:substi formal context ordering non-crisp} says that the categorization given by $\cup_S$  is finer than the one given by $\cap_S$. This is expected as the operations $\cup_S$, and $\cap_S$ give non-crisp versions of taking union and intersection of substituted agendas respectively. Hence the agenda given by $m_1 \cup_S m_2$  considers more information than the one given by  $m_1 \cap_S m_2$ and hence gives finer categorization.

 We have described all the deliberation scenarios so far  as occurring between two agents. However, same combination methods can apply to coalitions by combining their common or distributed agenda (depending on choice of the agents) similar to the agendas of individual agents. In such cases we may have additional possibilities of taking common or distributed agendas of a coalition as their agenda. Ignoring this increase in possibilities, the deliberation between coalitions can be treated in a manner similar to the deliberation between individual agents

 \section{Examples}\label{sec:Examples end}
  We consider a small financial statements network consisting of 12 business processes  given by $a_1-a_{12}$ and 6 financial accounts  $x_1$-tax, $x_2$- revenue, $x_3$-cost of sales, $x_4$-personnel expenses, $x_5$-inventory, $x_6$-other expenses. For the details of each business process refer to database in appendix \ref{sec:dataset}. As financial statements network is a weighted-bipartite graph it can be viewed  as a many-valued formal contexts with business processes as objects and financial accounts as features. To see many-valued formal context obtained from the database in the Table \ref{database table} refer to the Table \ref{many valued context table}.
  We can use different types of conceptual scaling methods to obtain a single valued formal context from a many-valued context. Here, we use scaling with interval scaling which divides interval $[-1,1]$ into $s$ equal intervals (these are the possible weights for any edges of the financial statements network). Several other conceptual scaling methods may be appropriate depending on particular application. We do not go into details of scaling methods in this paper as it is not focus of our work. We refer to \cite{ganter1989conceptual} for more details on different conceptual scaling methods, their properties and applications.

 \subsection{Deliberation in crisp case} \label{subsec:Deliberation in crisp case}
 Let us recall the example from Section \ref{sec:Example} with a set of business processes $A=\{a_{i} \mid 1\leq i \leq 12\}$ and financial accounts $X_0= \{x_i \mid 1 \leq i \leq 6\}$. The database describing these business process is given in Table \ref{database table} and the corresponding many-valued context is given in \ref{many valued context table}. As mentioned in the introduction we use conceptual scaling to convert this many-valued context into a crisp formal context. Here,
 we use interval scaling dividing the range $[-1,1]$ into $s$ equal intervals for every attribute (i.e. financial account). Thus, the set of features for resultant crisp formal context is given by $X= \{x_{ik}\mid 1 \leq i \leq 6, 1 \leq k \leq s\}$, where a business process $a$ has feature $x_{ik}$ iff $ -1+ 2(k-1)/s \leq I(a,x_{i})  \leq -1+ 2k/s $. The categorization(concept lattice) corresponding to this formal context when $s=5$ is shown in the Figure \ref{fig:lattice 10}. This is the categorization considering to all features (financial accounts) in the context and thus contains all the information given by the context (after scaling). Therefore, this is the finest categorization obtainable by any of our proposed methods from this context. It can indeed be seen that this categorization is finer than any categorization we come across in all the categorization examples arising from different agendas (crisp or non-crisp) for this context.

 Let $C= \{j_1,j_2,j_3\}$. Interests of the agents in $C$ can be represented by the following relation $R \subseteq X \times C$. 
  \[
  R= \{(j_1,x_{1k}), (j_1,x_{2k}), (j_1,x_{5k}),(j_2,x_{1k}), (j_2,x_{2k}), (j_2,x_{3k}),  (j_3,x_{1k}), (j_3,x_{3k})                                                           \},
  \]
  where $ 1 \leq k \leq s$. The categorizations (concept lattices) obtained from the agendas of $j_1$, $j_2$, and $j_3$ are shown in the Figures  \ref{fig:lattice 1}, \ref{fig:lattice 3}, and \ref{fig:lattice 5} respectively.

  Suppose agents $j_1$, $j_2$ and $j_3$ deliberate on how to combine their agendas to get an aggregated agenda for categorization. We consider the following  two possible deliberation scenarios.

  (i) \textbf{Agents may decide to agree on their common agenda.} In this case, the aggregated agenda (after deliberation) is given by $x_{ik}$ where $i=1$ and $1 \leq k \leq s$. The resultant categorization in this case is given by $ \mathbb{P}_1= f_1(j_1 \vee j_2 \vee j_3)$. It is clear to see that this resultant categorization (context) $ \mathbb{P}_1$ is coarser than each of the categorizations $f_2(j_1)$,  $f_2(j_2)$, and  $f_2(j_3)$. Thus, this categorization takes into account only the features or information considered relevant by every  agent in deliberation. The categorization (concept lattice) obtained from this agenda is shown in the  Figure \ref{fig:lattice 7}.
  
  (ii)\textbf{ Agents may decide to agree on their distributed agenda.} In this case, the aggregated agenda (after deliberation) is given by $x_{ik}$ where $i=1,2,3,5$ and $1 \leq k \leq s$. The resultant categorization is this case is given by $ \mathbb{P}_1= f_2(j_1 \vee j_2 \vee j_3)$. It is clear to see that this resultant categorization (context) $ \mathbb{P}_1$ is finer than each of the categorizations $f_2(j_1)$,  $f_2(j_2)$, and  $f_2(j_3)$. Thus, this categorization takes into consideration all the information considered relevant  by any of the agents involved in deliberation. The categorization (concept lattice) obtained from this agenda is shown in the Figure \ref{fig:lattice 2}.

  \subsubsection{Example with substitution} \label{sssec:Example with substitution}
  We consider the following scenario of deliberation between $j_1$ and $j_2$. Suppose we have the following  substitution relation $S \subseteq X \times C \times X$, giving preferences of agents in substituting one issue with another 
  \begin{align*}
       S &=\{(x_{1k},j_1,x_{1k} ), (x_{2k},j_1,x_{2k} ), (x_{5k},j_1,x_{5k} ), (x_{3k},j_1,x_{6k} )  (x_{3k},j_2,x_{3k} ),(x_{1k},j_2,x_{1k} ),\\ 
       &(x_{2k},j_2,x_{2k} ), (x_{5k},j_2,x_{6k} ), (x_{5k},j_2,x_{5k} ) \},
  \end{align*}
  where $1 \leq k \leq s$.
In this case, if the  deliberation occurs  according to  substitution-union rule \eqref{substi-union}, the resultant  agenda  after deliberation is  $Y_1= \{x_{1k}, x_{2k},x_{5k}, x_{6k} \}$, where $1 \leq k \leq s$. Notice that the features $x_{6k}$, $1 \leq k \leq s$, (other expenses) were not present in the agenda of either of two agents, however it is present in the aggregated agenda. This corresponds to agents deliberating to choose feature which is not initial preference for either of them but both can compromise on. Preference-substitution relation allows us to model such situations in deliberation. The categorization (concept lattice) obtained from this agenda is shown in the Figure \ref{fig:lattice 9}.

In this case, if the  deliberation occurs  according to  substitution-intersection rule \eqref{substi-intersection}, the resultant  agenda  after deliberation is  $Y_2= \{x_{1k}, x_{2k}, x_{6k} \}$, where $1 \leq k \leq s$. The categorization (concept lattice) obtained from this agenda is shown in Figure \ref{fig:lattice 7}. It is clear from the concept lattices in the Figures \ref{fig:lattice 7} and \ref{fig:lattice 9} that the categorization obtained from \eqref{substi-union} is coarser than the categorization obtained from \eqref{substi-intersection}, as implied by the Proposition \ref{prop:substi order context crisp case}.
Note that the business process $a_7$ is distinguished completely from other processes with features in $Y_1$ but not in $Y_2$. Thus, if this is categorization is used to find abnormalities (considering very small categories as abnormalities) $a_7$ is likely to be flagged in the categorization with the agenda  $Y_1$ but not with the agenda  $Y_2$. In a similar way, if this categorization is used for choosing diverse sample for further processing, the process $a_7$ has much more likelihood of being chosen when agenda is $Y_1$, than $Y_2$.

 \subsection{Deliberation in non-crisp case} \label{ssec:Deliberation in non-crisp case}
 Let $j_1$, $j_2$, and $j_3$ be three agents with different possibly non-crisp agendas. Suppose the agenda of agent $j_1$ is given by mass function $m_1$ 
 \[m_1(\{x_{1k}\}) =0.6, \quad   m_1(X)=0.4,\]
 
 for $1 \leq k \leq s$. This can be considered $j_1$ assigning preference or importance  $0.6$ to the feature tax  and $0.4$ to the set of all the features i.e. preference for categorization based on tax alone describes categorization intended by $j_1$ to extent  $0.6$, while categorization based on all features in $X$ describes intended categorization fully. 
 Suppose the agenda of agent $j_2$ is given by mass function $m_2$ 
 \[m_2(\{x_{1k}\}) =0.5, \quad   m_2(\{x_{1k},x_{2k}\} )=0.3, \quad m_2(X)=0.2, \]
 for $1\leq k \leq s$.
 This can be considered as $j_2$ saying tax alone describes the categorization intended by $j_2$ to extent $0.5$, tax and revenue together describe the categorization intended by $j_2$ to extent $0.8$ and categorization based on all features in $X$ describes intended categorization fully. 
 Suppose agent $j_3$ has crisp agenda  $Y= \{x_{1k}, x_{2k}, x_{6k} \}$, $1\leq k \leq s$, and the relative importance of these agents involved in deliberation is $1:1:0.9$. In this case, the agendas of $j_1$, $j_2$, and $j_3$ adjusted for the relative importance are given by $m_1$, $m_2$, and $m_3$ respectively, where $m_3$ is given by 
 \[m_3(Y)=0.9, \quad \text{and} \quad m_3(X)=0.1.\]

The most likely categorization (i.e. categorization with the highest induced mass) for $m_1$ and $m_2$ is same and is shown in the  Figure \ref{fig:lattice 6}, while the most likely categorization according to $m_3$ is shown in the  Figure \ref{fig:lattice 7}.
 The categorizations by stability-based method for the agendas given by mass functions $m_1$, $m_2$, and $m_3$,  when $\beta=0.5$ and $s=5$  are shown in the Figures \ref{fig:lattice 6}, \ref{fig:lattice 4}, and \ref{fig:lattice 7} respectively.
 
 We again consider two possible outcomes of deliberations resulting in the aggregated agendas $m=m_1 \oplus m_2 \oplus m_3$ and $m'= m_1 \cup m_2 \cup  m_3$. The  mass functions $m$ and  $m'$ are given by 
 \[
 m(\{x_{1k}\})= 0.8, \quad   m(\{x_{1k},x_{2k}\} )=0.12, \quad  m(\{x_{1k},x_{2k},x_{6k}\} )=0.072, \quad m(X)=0.008.
 \]
 \[
  m'(\{x_{1k},x_{2k},x_{6k}\} )=0.432, \quad m'(X)=0.568.
 \]
The most likely categorizations (categorization with the highest mass)
for  the agendas $m$, and $m'$ are shown in the Figures \ref{fig:lattice 6}, and \ref{fig:lattice 10} respectively. The categorization  by stability-based method for the  agendas  $m$ and $m'$, when  $\beta=0.5$ and $s=5$ are shown in the figures \ref{fig:lattice 6}, and \ref{fig:lattice 10} respectively. As there is no conflict between $m_1$, $m_2$, and $m_3$, we have $m_1 \oplus m_2\oplus m_3=m_1 \cap m_2 \cap m_3$. Thus, as implied by the Corollary \ref{cor:intersection-union masses}, the categorization given by $m$ (resp.~$m'$) is coarser (resp. finer) than the any of the  categorizations given by $m_1$, $m_2$, or $m_3$. This can indeed be seen in the concept lattices shown.

 \subsection{Example with substitution}\label{ssec:Example with substitution non-crisp}
 We consider the following scenario of deliberation between $j_1$ and $j_2$. Suppose we have a substitution relation $S \subseteq X \times C \times X$, giving preferences of agents in substituting one issue with another. Let
  \begin{align*}
       S &=\{(x_{1k},j_1,x_{1k} ), (x_{2k},j_1,x_{6k} ),  
       (x_{1k},j_2,x_{1k} ), (x_{2k},j_2,x_{2k} ),   (x_{1k},j_2,x_{6k} ), (x_{6k},j_2,x_{6k} )\},
  \end{align*}
  where $1 \leq k \leq s$. In this case, the agendas  resulting frm the  deliberation according to \eqref{eq:substi-disjunctive} and \eqref{eq:substi-conjunctive} are $m_s=m_1 \oplus_s m_2$  and  $m_s'=m_1 \cup_s m_2$  given by 
  \[
 m_s(\{x_{1k},x_{6k}\})=0.6, \quad   m_s(\{x_{1k},x_{2k},x_{6k}\})=0.4
  \]
  \[
  m_s'(\{x_{1k}\})=0.5, \quad  m_s'(\{x_{1k},x_{6k}\})=0.5,
  \]
where $1 \leq k \leq s$. For the agenda given by $m_{s}$ there is no unique most likely categorization. The most likely categorization (categorization with the highest mass)for the agenda given by $m_s$ is shown in the Figure \ref{fig:lattice 8}. 
The categorization obtained by stability-based method for the agendas  $m_s$ and  $m_s'$, when  $\beta=0.5$ and $s=5$ is same and is shown in the Figure \ref{fig:lattice 8}. Note than for the mass functions $m_1$, and $m_2$ as above, we have $m_1 \oplus_s m_2= m_1 \cap_s m_2$. Even though, we obtain the same categorization when $\beta=0.5$, for the value of $\beta \in (0,0.4]$ the categorization obtained from  $m_s'$ would be finer than the categorization obtained from $m_s$ by the stability-based method as implied by the Corollary \ref{cor:substi formal context ordering non-crisp}.

 \subsection{Importance of different features in these categorizations} \label{ssec:Importance of different features in these categorizations}
 In this section, we give the estimated values of importance of different features (financial accounts) via pignistic and pluasibility transformations for all the the non-crisp agendas (mass functions) mentioned in different examples throughout this section. 
 \begin{table}[h]
     \centering
     \begin{tabular}{|c|c|c|c|c|c|c|}
     \hline
        Agenda & $x_1$&$x_2$& $x_3$&$x_4$&$x_5$&$x_6$\\
    \hline
          $m_1$ & 0.67 & 0.067&0.067&0.067&0.067&0.067\\
    \hline 
          $m_2$ & 0.683 & 0.183&0.033&0.033&0.033&0.033\\
    \hline 
          $m_3$ & 0.317 & 0.317&0.017&0.017&0.017&0.317\\
    \hline 
          $m$ & 0.885 & 0.085&0.001&0.001&0.001&0.025\\
    \hline 
          $m'$ & 0.239 & 0.239&0.095&0.095&0.095&0.239\\
    \hline 
          $m_s$ & 0.433 & 0.133&0&0&0&0.433\\
    \hline 
         $m_s'$ & 0.75 & 0&0&0&0&0.25\\
    \hline 
     \end{tabular}
     \caption{Importance estimates via pignistic transformation}
     \label{tab:pignistic}
 \end{table} 
  
 \begin{table}[h]
     \centering
     \begin{tabular}{|c|c|c|c|c|c|c|}
     \hline
        Agenda & $x_1$&$x_2$& $x_3$&$x_4$&$x_5$&$x_6$\\
    \hline
          $m_1$ & 0.333 & 0.133&0.133&0.133&0.133&0.133\\
    \hline 
          $m_2$ & 0.435 & 0.217 &0.087&0.087&0.087&0.087\\
    \hline 
          $m_3$ & 0.303 & 0.303&0.030&0.030&0.030&0.303\\
    \hline 
          $m$ & 0.767 & 0.153&0.006&0.006&0.006&0.061\\
    \hline 
          $m'$ & 0.213 & 0.213&0.121&0.121&0.121&0.213\\
    \hline 
          $m_s$ & 0.417 & 0.167&0&0&0&0.417\\
    \hline 
         $m_s'$ & 0.667 & 0&0&0&0&0.333\\
    \hline 
     \end{tabular}
     \caption{Importance estimates via plausibility transformation}
     \label{tab:plauisbility}
 \end{table} 
   It is clear from the tables that, by both estimation methods, the estimated importance of features differs significantly with the rules used for deliberation. For example, importance of revenue in agenda $m$ obtained by taking the common agenda of $j_1$, $j_2$, and $j_3$ is significantly higher than importance in  agenda $m'$ obtained by taking the distributed agenda of $j_1$, $j_2$, and $j_3$. These different importance values would result in significant changes in clustering obtained using these values in proximity or dissimilarity based methods as discussed in \ref{sssec:Methods via transformation to probability}. For example, business processes $a_4$ and $a_5$ have share $0.75$ and $0.95$ of revenue respectively. They are much more likely to be put into different clusters in a clustering obtained from $m$ than a clustering obtained from $m'$. These changes can have significant impacts on the performance of these methods in a given clustering task. The method proposed in this paper  uses Dempster-Shafer theory and transformations to probability functions to get estimated importance values and uses them in clustering tasks can be useful in  many applications where the different experts assign different importance to different features and might have to deliberate with each other. 
  
 \section{Conclusion and further directions} \label{sec:Conclusion and further directions}
 \paragraph{Main contributions.} The contributions of this paper are motivated by the problem of categorizing business processes for auditing purposes, in a way that facilitates the identification of anomalies. Building on the insight that different ways of categorizing might lead to widely different results, in this paper, we investigate the space of possible categorizations of business processes, seen as nodes of one type  in a bipartite graph. Formally, we regard (possibly weighted) bipartite graphs as (possibly many-valued) formal contexts.  This interpretation provides us with a way to obtain hierarchical and explainable categorization of the nodes in the bipartite graph. The structure of formal contexts allows us to have much more control over the features used for categorization. Thus, we explore the space of the possible categorizations of a given set of business processes in terms of the interrogative agendas of a given set of agents. We thus obtain categorizations useful to multiple agents with different features of interest (agendas). We use notions from modal logic to represent the interaction between different agents, agendas, and categorizations. We make some  observations about the interaction between these concepts. We then go on to discuss  possible scenarios involving deliberation of different agents in deciding relevant features and make  observations about possible outcomes (i.e.~categorizations obtained from deliberation).
 
 We generalize these results to a setting where the edges of a bipartite graph can be weighted, and so is the extent to which given agents consider given issues relevant, by using Dempster-Shafer mass functions to denote the many-valued (non-crisp) agendas of different agents involved. We also discuss  methods for obtaining a crisp (i.e.~two-valued) categorization from a given many-valued categorization, namely, we discuss the {\em stability-based method} and the {\em probability transformation  based methods}, and make some observations regarding  these methods.  We also  generalize different possible deliberation scenarios to the non-crisp case using Dempster-Shafer aggregation rules. Finally, we discuss some examples applying these ideas to the problem of categorizing business processes  from a financial statements network when different agents may have different financial accounts of interest (i.e.~different agendas) in both a crisp and a no-crisp setting.
 
 This paper initiates a new line  of research combining modal logic for  describing and reasoning about agendas and interactions, formal concept analysis for modelling explainable categorizations, and Dempster-Sha theory, for representing and computing uncertainties and many-valued priorities in categorizing  nodes in bipartite graphs regarded as formal contexts. We believe that this contribution can be applied to a much wider range of problems than those involving financial transactions, and that it lays the groundwork of a framework for modelling categorizations in business organizations or societal institutions involving  many different agents with different interest  interacting and categorizing a set of objects. Below, we discuss  some  directions for future research.
 
 \paragraph{Applying the present framework to the design categorization algorithms for auditing tasks.}
 As discussed in the Introduction and Examples sections, the  methods developed in this paper can be used to obtain explainable categorizations of business processes given by different agendas. The ensuing categorizations  are explainable, which facilitates their use and assessment  by expert auditors. These categorizations may be used to flag out possibly abnormal business processes for further checking. For example, business processes forming small categories may be seen as abnormal as they are categorized separately from  other business processes. Another way to recognize possibly abnormal business processes is to recognize abnormal members of given categories. This may be achieved via clustering business processes inside a category based on shares of different financial accounts in a business process. As discussed before, which business processes are flagged out may depend strongly on the features  and their relative importance in categorization. Thus, business processes which are abnormal across different categorizations may have a higher likelihood of being inconsistent (or at least uncommon). In future work, we intend to apply these methods to  categorization tasks for detecting abnormal business processes and other applications in auditing, and we intend to asses these methods in comparison to current techniques.  
 
 \paragraph{Modelling evidence collection from different sources relating to a company in an audit.}
 In this paper, we have used interrogative agendas to model the features of interest for different auditors or auditing sub-tasks. We study the categorizations based on these agendas,  possible deliberation scenarios be different auditors/audit methods or sub-tasks, and the categorizations resulting out of these deliberations. The same mathematical model can be flipped by using it to model the scenario of an audit team receiving  data from an organization with different departments or different records having different areas of focus. These different entities may have very different categorizations of business processes, financial transactions or other data entries based on their agendas. In such case, an audit team needs to collect and  aggregate data from these different sources having possibly different categorization of data. The logical framework developed in this paper can be used to model such situations and formalizing it. Such a formalization can be useful in  aggregation and assessment of gathered evidence from different sources in a company.

 \paragraph{Further applications in different tasks and fields. %computational techniques and other applications.}
 }
 As discussed above, in this work we focus on modelling the interaction between different agents with different agendas and categorizations based on these agendas. %In future, we would like to 
 We believe that these formal framework can be 
 integrated in areas of research in e.g.~data-mining, information retrieval, attribute exploration,  knowledge management where formal concept analysis is already being successfully applied \cite{priss2006formal, qadi2010formal, poelmans2010formal, valtchev2004formal, poelmans2013formal, ganter2012formal, wille1996formal}.  One common problem in many applications of formal concept analysis is to reduce the size of concept lattices. The problem is due to the propensity of concept lattices to blow up in size with large data sets. Several approaches have been used in past to solve this problem \cite{cole1999scalability, singh2017concepts, dias2010reducing}. As a future direction, we intend to combine these methods with our work on categorizations based on different agendas in crisp and non-crisp cases and use these in different applications in auditing but also in other fields involving categorization 
 and social interactions like linguistics, politics and markets. 
 \paragraph{Extending to uncertain or incomplete formal contexts.}
 There have been several extensions of formal contexts dealing with situations involving incomplete or uncertain information \cite{belohlavek1999fuzzy,frittella2020toward, conradie2021rough, yao2017interval, li2013incomplete}. In our previous work, \cite{frittella2020toward}, we have used Dempster-Shafer theory to talk about evidence/belief about objects or features belonging to a category and preferences for different categories. In this work, we have focused more on uncertainty or preference functions on different categorizations of a set of objects based on different crisp or non-crisp agendas. It would be interesting to look more into using these frameworks and interaction between them. In real situations, both the problems of choosing relevant categories and assigning objects or features to a category may involve uncertainty or incomplete information and may interact with each other. It would be interesting to elaborate more on these interactions using formal frameworks developed in these papers. 
 
 \paragraph{Different modal axioms to model interaction between agents and their interrogative agendas.}
 In many practical situations, interactions between $R$, $U$ and $S$ in deliberation may satisfy additional conditions. Many of these additional conditions can be axiomatised using modal logic. We may also add additional relations and modal operators to model influences between different agents or dependencies between issues. These additional operators may satisfy some interactions axioms between themselves and the ones already considered here. Another topic of interest is to determine whether some property in this structure is modally definable or not. This leads us to investigate some general characterization of modal definability, in the style of e.g.~the Goldblatt-Thomason theorem, for such logics. Some advances in this direction have been made already \cite{conradie2018goldblatt, goldblatt2018canonical}. 
 \paragraph{Considering different combination rules and methods of interpreting non-crisp agendas.}
 In this work we have  used some basic rules from Dempster-Shafer and propose some natural methods to model aggregation of different agendas involved in deliberation. How to define different combination rules and their advantages and disadvantages has been a large topic of research in Dempster-Shafer theory \cite{sentz2002combination}. In future work, we would like to investigate which different rules may be of interest in different deliberation scenarios. 
 In this work, we have proposed stability index to combine different categorizations with different preferences into one. It would be interesting to look into other ways of aggregating these categorizations. 
 
  \bibliography{ref}

\begin{thebibliography}{10}

\bibitem{alam2016interactive}
Mehwish Alam, Thi Nhu~Nguyen Le, and Amedeo Napoli.
\newblock Interactive exploration over concept lattices with latviz.
\newblock In {\em European Conference on Artificial Intelligence, Software Demo
  Track.}, 2016.

\bibitem{alam2016latviz}
Mehwish Alam, Thi Nhu~Nguyen Le, and Amedeo Napoli.
\newblock Latviz: A new practical tool for performing interactive exploration
  over concept lattices.
\newblock In {\em CLA 2016-Thirteenth International Conference on Concept
  Lattices and Their Applications}, 2016.

\bibitem{alam2016steps}
Mehwish Alam, Thi Nhu~Nguyen Le, and Amedeo Napoli.
\newblock Steps towards interactive formal concept analysis with latviz.
\newblock In {\em Proceedings of the 5th International Workshop" What can FCA
  do for Artificial Intelligence"? co-located with the European Conference on
  Artificial Intelligence ECAI 2016}, 2016.

\bibitem{belohlavek1999fuzzy}
Radim B{\^e}lohl{\'a}vek.
\newblock Fuzzy galois connections.
\newblock {\em Mathematical Logic Quarterly}, 45(4):497--504, 1999.

\bibitem{birkhoff1940lattice}
Garrett Birkhoff.
\newblock {\em Lattice theory}, volume~25.
\newblock American Mathematical Soc., 1940.

\bibitem{boersma2020reducing}
M~Boersma, A~Maliutin, S~Sourabh, LA~Hoogduin, and D~Kandhai.
\newblock Reducing the complexity of financial networks using network
  embeddings.
\newblock {\em Scientific reports}, 10(1):1--15, 2020.

\bibitem{boersma2018financial}
Marcel Boersma, Sumit Sourabh, and Lucas Hoogduin.
\newblock Financial statement networks: an application of network theory in
  audit.
\newblock {\em Journal of Network Theory in Finance}, 2018.

\bibitem{cobb2003comparison}
Barry~R Cobb and Prakash~P Shenoy.
\newblock A comparison of methods for transforming belief function models to
  probability models.
\newblock In {\em European Conference on Symbolic and Quantitative Approaches
  to Reasoning and Uncertainty}, pages 255--266. Springer, 2003.

\bibitem{cobb2006plausibility}
Barry~R Cobb and Prakash~P Shenoy.
\newblock On the plausibility transformation method for translating belief
  function models to probability models.
\newblock {\em International journal of approximate reasoning}, 41(3):314--330,
  2006.

\bibitem{cobb2003application}
George~W Cobb and Yung-Pin Chen.
\newblock An application of markov chain monte carlo to community ecology.
\newblock {\em The American Mathematical Monthly}, 110(4):265--288, 2003.

\bibitem{cole1999scalability}
Richard Cole and Peter~W Eklund.
\newblock Scalability in formal concept analysis.
\newblock {\em Computational Intelligence}, 15(1):11--27, 1999.

\bibitem{conradie2021rough}
Willem Conradie, Sabine Frittella, Krishna Manoorkar, Sajad Nazari, Alessandra
  Palmigiano, Apostolos Tzimoulis, and Nachoem~M Wijnberg.
\newblock Rough concepts.
\newblock {\em Information Sciences}, 561:371--413, 2021.

\bibitem{conradie2017toward}
Willem Conradie, Sabine Frittella, A~Palmigiano, M~Piazzai, A~Tzimoulis, and
  Nachoem~M Wijnberg.
\newblock Toward an epistemic-logical theory of categorization.
\newblock {\em Electronic Proceedings in Theoretical Computer Science, EPTCS},
  251, 2017.

\bibitem{conradie2016categories}
Willem Conradie, Sabine Frittella, Alessandra Palmigiano, Michele Piazzai,
  Apostolos Tzimoulis, and Nachoem~M Wijnberg.
\newblock Categories: how i learned to stop worrying and love two sorts.
\newblock In {\em International Workshop on Logic, Language, Information, and
  Computation}, pages 145--164. Springer, 2016.

\bibitem{conradie2019logic}
Willem Conradie, Alessandra Palmigiano, Claudette Robinson, Apostolos
  Tzimoulis, and Nachoem~M Wijnberg.
\newblock The logic of vague categories.
\newblock {\em arXiv preprint arXiv:1908.04816}, 2019.

\bibitem{conradie2018goldblatt}
Willem Conradie, Alessandra Palmigiano, and Apostolos Tzimoulis.
\newblock Goldblatt-thomason for le-logics.
\newblock {\em arXiv preprint arXiv:1809.08225}, 2018.

\bibitem{dempster2008upper}
Arthur~P Dempster.
\newblock Upper and lower probabilities induced by a multivalued mapping.
\newblock In {\em Classic works of the Dempster-Shafer theory of belief
  functions}, pages 57--72. Springer, 2008.

\bibitem{denoeux2006cautious}
Thierry Den{\oe}ux.
\newblock The cautious rule of combination for belief functions and some
  extensions.
\newblock In {\em 2006 9th International Conference on Information Fusion},
  pages 1--8. IEEE, 2006.

\bibitem{desai2017external}
Renu Desai, Vikram Desai, Theresa Libby, and Rajendra~P Srivastava.
\newblock External auditors' evaluation of the internal audit function: An
  empirical investigation.
\newblock {\em International Journal of Accounting Information Systems},
  24:1--14, 2017.

\bibitem{desai2006analytical}
Vikram Desai, Robin~W Roberts, and Rajendra~P Srivastava.
\newblock An analytical model for external auditor evaluation of the internal
  audit function using belief functions.
\newblock {\em Available at SSRN 938183}, 2006.

\bibitem{dias2010reducing}
Sergio~M Dias and Newton Vieira.
\newblock Reducing the size of concept lattices: The jbos approach.
\newblock In {\em Cla}, volume 672, pages 80--91, 2010.

\bibitem{dickey2019machine}
Gabe Dickey, S~Blanke, and L~Seaton.
\newblock Machine learning in auditing.
\newblock {\em The CPA Journal}, pages 16--21, 2019.

\bibitem{dubois1986set}
Didier Dubois and Henri Prade.
\newblock A set-theoretic view of belief functions logical operations and
  approximations by fuzzy sets.
\newblock {\em International Journal Of General System}, 12(3):193--226, 1986.

\bibitem{dunn2005canonical}
J~Michael Dunn, Mai Gehrke, and Alessandra Palmigiano.
\newblock Canonical extensions and relational completeness of some
  substructural logics.
\newblock {\em The Journal of Symbolic Logic}, 70(3):713--740, 2005.

\bibitem{enqvist2012modelling}
Sebastian Enqvist.
\newblock Modelling epistemic actions in interrogative belief revision.
\newblock {\em Journal of Logic and Computation}, 22(6):1335--1365, 2012.

\bibitem{frittella2020toward}
Sabine Frittella, Krishna Manoorkar, Alessandra Palmigiano, Apostolos
  Tzimoulis, and Nachoem Wijnberg.
\newblock Toward a dempster-shafer theory of concepts.
\newblock {\em International Journal of Approximate Reasoning}, 125:14--25,
  2020.

\bibitem{ganter1989conceptual}
Bernhard Ganter and Rudolf Wille.
\newblock Conceptual scaling.
\newblock In {\em Applications of combinatorics and graph theory to the
  biological and social sciences}, pages 139--167. Springer, 1989.

\bibitem{ganter2012formal}
Bernhard Ganter and Rudolf Wille.
\newblock {\em Formal concept analysis: mathematical foundations}.
\newblock Springer Science \& Business Media, 2012.

\bibitem{gao2011evidential}
Lei Gao, Theodore~J Mock, and Rajendra~P Srivastava.
\newblock An evidential reasoning approach to fraud risk assessment under
  dempster-shafer theory: A general framework.
\newblock In {\em 2011 44th Hawaii International Conference on System
  Sciences}, pages 1--10. IEEE, 2011.

\bibitem{gaume2013clustering}
Bruno Gaume, Emmanuel Navarro, and Henri Prade.
\newblock Clustering bipartite graphs in terms of approximate formal concepts
  and sub-contexts.
\newblock {\em International Journal of Computational Intelligence Systems},
  6(6):1125--1142, 2013.

\bibitem{gehrke2004bounded}
Mai Gehrke and Bjarni J{\'o}nsson.
\newblock Bounded distributive lattice expansions.
\newblock {\em Mathematica Scandinavica}, pages 13--45, 2004.

\bibitem{giriunas2014evaluation}
Lukas Giri{\=u}nas and Jonas Mackevi{\v{c}}ius.
\newblock Evaluation of frauds in public sector.
\newblock {\em Entrepreneurship and sustainability issues}, 1:143--150, 2014.

\bibitem{goldblatt2018canonical}
Robert Goldblatt.
\newblock Canonical extensions and ultraproducts of polarities.
\newblock {\em Algebra universalis}, 79(4):1--28, 2018.

\bibitem{groenendijk1984studies}
Jeroen Antonius~Gerardus Groenendijk and Martin Johan~Bastiaan Stokhof.
\newblock {\em Studies on the Semantics of Questions and the Pragmatics of
  Answers}.
\newblock PhD thesis, Univ. Amsterdam, 1984.

\bibitem{hayes2004bipartite}
Jonathan Hayes and Claudio Gutierrez.
\newblock Bipartite graphs as intermediate model for rdf.
\newblock In {\em International Semantic Web Conference}, pages 47--61.
  Springer, 2004.

\bibitem{jain1988algorithms}
Anil~K Jain and Richard~C Dubes.
\newblock {\em Algorithms for clustering data}.
\newblock Prentice-Hall, Inc., 1988.

\bibitem{jain1999data}
Anil~K Jain, M~Narasimha Murty, and Patrick~J Flynn.
\newblock Data clustering: a review.
\newblock {\em ACM computing surveys (CSUR)}, 31(3):264--323, 1999.

\bibitem{joyce1976expert}
Edward~J Joyce.
\newblock Expert judgment in audit program planning.
\newblock {\em Journal of Accounting Research}, 14:29--60, 1976.

\bibitem{klawonn1992dynamic}
Frank Klawonn and Philippe Smets.
\newblock The dynamic of belief in the transferable belief model and
  specialization-generalization matrices.
\newblock In {\em Uncertainty in artificial intelligence}, pages 130--137.
  Elsevier, 1992.

\bibitem{kokina2017emergence}
Julia Kokina and Thomas~H Davenport.
\newblock The emergence of artificial intelligence: How automation is changing
  auditing.
\newblock {\em Journal of emerging technologies in accounting}, 14(1):115--122,
  2017.

\bibitem{kuznetsov2004machine}
Sergei~O Kuznetsov.
\newblock Machine learning and formal concept analysis.
\newblock In {\em International Conference on Formal Concept Analysis}, pages
  287--312. Springer, 2004.

\bibitem{kuznetsov2007stability}
Sergei~O Kuznetsov.
\newblock On stability of a formal concept.
\newblock {\em Annals of Mathematics and Artificial Intelligence},
  49(1):101--115, 2007.

\bibitem{Latviz}
LatViz.
\newblock Lattice visualization.
\newblock \url{latviz.loria.fr}, 2016.

\bibitem{li2013incomplete}
Jinhai Li, Changlin Mei, and Yuejin Lv.
\newblock Incomplete decision contexts: approximate concept construction, rule
  acquisition and knowledge reduction.
\newblock {\em International Journal of Approximate Reasoning}, 54(1):149--165,
  2013.

\bibitem{merton1948self}
Robert~K Merton.
\newblock The self-fulfilling prophecy.
\newblock {\em The antioch review}, 8(2):193--210, 1948.

\bibitem{mock2018using}
Theodore~J Mock, Srinivasan~C Ragothaman, and Rajendra~P Srivastava.
\newblock Using evidential reasoning technology to enhance the audit quality
  assurance inspection process.
\newblock {\em Journal of Emerging Technologies in Accounting}, 15(1):29--43,
  2018.

\bibitem{myers1976group}
David~G Myers and Helmut Lamm.
\newblock The group polarization phenomenon.
\newblock {\em Psychological bulletin}, 83(4):602, 1976.

\bibitem{newman2001scientific}
Mark~EJ Newman.
\newblock Scientific collaboration networks. ii. shortest paths, weighted
  networks, and centrality.
\newblock {\em Physical review E}, 64(1):016132, 2001.

\bibitem{pavlopoulos2018bipartite}
Georgios~A Pavlopoulos, Panagiota~I Kontou, Athanasia Pavlopoulou, Costas
  Bouyioukos, Evripides Markou, and Pantelis~G Bagos.
\newblock Bipartite graphs in systems biology and medicine: a survey of methods
  and applications.
\newblock {\em GigaScience}, 7(4):giy014, 2018.

\bibitem{poelmans2010formal}
Jonas Poelmans, Paul Elzinga, Stijn Viaene, and Guido Dedene.
\newblock Formal concept analysis in knowledge discovery: a survey.
\newblock In {\em International conference on conceptual structures}, pages
  139--153. Springer, 2010.

\bibitem{poelmans2013formal}
Jonas Poelmans, Sergei~O Kuznetsov, Dmitry~I Ignatov, and Guido Dedene.
\newblock Formal concept analysis in knowledge processing: A survey on models
  and techniques.
\newblock {\em Expert systems with applications}, 40(16):6601--6623, 2013.

\bibitem{priss2006formal}
Uta Priss.
\newblock Formal concept analysis in information science.
\newblock {\em Annu. Rev. Inf. Sci. Technol.}, 40(1):521--543, 2006.

\bibitem{qadi2010formal}
Abderrahim~El Qadi, Driss Aboutajedine, and Yassine Ennouary.
\newblock Formal concept analysis for information retrieval.
\newblock {\em arXiv preprint arXiv:1003.1494}, 2010.

\bibitem{Raphael2017RethinkingTA}
John~R. Raphael.
\newblock Rethinking the audit: Innovation is transforming how audits are
  conducted - and even what it means to be an auditor.
\newblock {\em Journal of accountancy}, 223:28, 2017.

\bibitem{ravasz2002hierarchical}
Erzs{\'e}bet Ravasz, Anna~Lisa Somera, Dale~A Mongru, Zolt{\'a}n~N Oltvai, and
  A-L Barab{\'a}si.
\newblock Hierarchical organization of modularity in metabolic networks.
\newblock {\em science}, 297(5586):1551--1555, 2002.

\bibitem{schaeffer2007graph}
Satu~Elisa Schaeffer.
\newblock Graph clustering.
\newblock {\em Computer science review}, 1(1):27--64, 2007.

\bibitem{sentz2002combination}
Kari Sentz, Scott Ferson, et~al.
\newblock {\em Combination of evidence in Dempster-Shafer theory}, volume 4015.
\newblock Citeseer, 2002.

\bibitem{shafer1976mathematical}
Glenn Shafer.
\newblock {\em A mathematical theory of evidence}.
\newblock Princeton university press, 1976.

\bibitem{shafer1990bayesian}
Glenn Shafer and Rajendra Srivastava.
\newblock The bayesian and belief-function formalisms: A general perspective
  for auditing.
\newblock {\em Auditing: A Journal of practice and Theory},
  9(Supplement):110--148, 1990.

\bibitem{singh2019data}
Nitin Singh, Kee-hung Lai, Markus Vejvar, and TC~Edwin Cheng.
\newblock Data-driven auditing: A predictive modeling approach to fraud
  detection and classification.
\newblock {\em Journal of Corporate Accounting \& Finance}, 30(3):64--82, 2019.

\bibitem{singh2017concepts}
Prem~Kumar Singh, Aswani~Kumar Cherukuri, and Jinhai Li.
\newblock Concepts reduction in formal concept analysis with fuzzy setting
  using shannon entropy.
\newblock {\em International Journal of Machine Learning and Cybernetics},
  8(1):179--189, 2017.

\bibitem{vskopljanac2014formal}
Frano {\v{S}}kopljanac-Ma{\v{c}}ina and Bruno Bla{\v{s}}kovi{\'c}.
\newblock Formal concept analysis--overview and applications.
\newblock {\em Procedia Engineering}, 69:1258--1267, 2014.

\bibitem{smets1990constructing}
Philippe Smets.
\newblock Constructing the pignistic probability function in a context of
  uncertainty.
\newblock In {\em Machine intelligence and pattern recognition}, volume~10,
  pages 29--39. Elsevier, 1990.

\bibitem{smets2002application}
Philippe Smets.
\newblock The application of the matrix calculus to belief functions.
\newblock {\em International Journal of Approximate Reasoning}, 31(1-2):1--30,
  2002.

\bibitem{smets2005decision}
Philippe Smets.
\newblock Decision making in the tbm: the necessity of the pignistic
  transformation.
\newblock {\em International journal of approximate reasoning}, 38(2):133--147,
  2005.

\bibitem{srivastava1993belief}
Rajendra~P Srivastava.
\newblock Belief functions and audit decisions.
\newblock {\em Auditors Report}, 17(1):8--12, 1993.

\bibitem{srivastava2011introduction}
Rajendra~P Srivastava.
\newblock An introduction to evidential reasoning for decision making under
  uncertainty: Bayesian and belief function perspectives.
\newblock {\em International Journal of Accounting Information Systems},
  12(2):126--135, 2011.

\bibitem{srivastava1992belief}
Rajendra~P Srivastava and Glenn~R Shafer.
\newblock Belief-function formulas for audit risk.
\newblock {\em Accounting Review}, pages 249--283, 1992.

\bibitem{sun2006information}
Lili Sun, Rajendra~P Srivastava, and Theodore~J Mock.
\newblock An information systems security risk assessment model under the
  dempster-shafer theory of belief functions.
\newblock {\em Journal of Management Information Systems}, 22(4):109--142,
  2006.

\bibitem{sunstein2001republic}
Cass Sunstein.
\newblock Republic. com princeton.
\newblock {\em Telhami, Shibley: 2010 Arab Public Opinion Poll (conducted by
  the University of}, 2001.

\bibitem{valtchev2004formal}
Petko Valtchev, Rokia Missaoui, and Robert Godin.
\newblock Formal concept analysis for knowledge discovery and data mining: The
  new challenges.
\newblock In {\em International conference on formal concept analysis}, pages
  352--371. Springer, 2004.

\bibitem{walley1991statistical}
Peter Walley.
\newblock {\em Statistical Reasoning with Imprecise Probabilities}.
\newblock Chapman \& Hall, 1991.

\bibitem{wang2019reflections}
Xiaohong Wang.
\newblock Reflections on the application of ai in auditing practice in the
  context of big data.
\newblock In {\em The International Conference on Cyber Security Intelligence
  and Analytics}, pages 1262--1267. Springer, 2019.

\bibitem{whitman1946lattices}
Philip~M Whitman.
\newblock Lattices, equivalence relations, and subgroups.
\newblock {\em Bulletin of the American Mathematical Society}, 52(6):507--522,
  1946.

\bibitem{wille1996formal}
Rudolf Wille and Bernhard Ganter.
\newblock Formal concept analysis, 1996.

\bibitem{xu2013behavior}
Kuai Xu, Feng Wang, and Lin Gu.
\newblock Behavior analysis of internet traffic via bipartite graphs and
  one-mode projections.
\newblock {\em IEEE/ACM Transactions on Networking}, 22(3):931--942, 2013.

\bibitem{xu2016interactive}
Panpan Xu, Nan Cao, Huamin Qu, and John Stasko.
\newblock Interactive visual co-cluster analysis of bipartite graphs.
\newblock In {\em 2016 IEEE Pacific Visualization Symposium (PacificVis)},
  pages 32--39. IEEE, 2016.

\bibitem{yager1986entailment}
Ronald~R Yager.
\newblock The entailment principle for dempster—shafer granules.
\newblock {\em International Journal of Intelligent Systems}, 1(4):247--262,
  1986.

\bibitem{yager2008classic}
Ronald~R Yager and Liping Liu.
\newblock {\em Classic works of the Dempster-Shafer theory of belief
  functions}, volume 219.
\newblock Springer, 2008.

\bibitem{yao2017interval}
Yiyu Yao.
\newblock Interval sets and three-way concept analysis in incomplete contexts.
\newblock {\em International Journal of Machine Learning and Cybernetics},
  8(1):3--20, 2017.

\bibitem{zha2001bipartite}
Hongyuan Zha, Xiaofeng He, Chris Ding, Horst Simon, and Ming Gu.
\newblock Bipartite graph partitioning and data clustering.
\newblock In {\em Proceedings of the tenth international conference on
  Information and knowledge management}, pages 25--32, 2001.

\end{thebibliography}
\bibliographystyle{plain}

\section*{Declaration of interest and disclaimer:} The authors report no conflicts of interest, and declare that they have no relevant or material financial interests related to the research in this paper. The authors alone are responsible for the content and writing of the paper, and the views expressed here are their personal views and do not necessarily reflect the position of their employer.

 \section{Appendix}
 \begin{prop}
\label{prop:charact join-irr}
For any set $W$, if $|W|\geq 2$, then  $e\in \jty(\mathrm{E}(W))$ iff $e$ is identified by some partition of the form $\mathcal{E}_{xy}: = \{\{x, y\}\}\cup \{\{z\}\mid z\in W\setminus \{x, y\} \}$ with $x, y \in W$ such that $x\neq y$.
\end{prop}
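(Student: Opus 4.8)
The plan is to identify the set $\jty(\mathrm{E}(W))$ of completely join-irreducible elements of $\mathrm{E}(W)$ with its atoms, and then to observe that the atoms are precisely the relations $\mathcal{E}_{xy}$. Recall that $\mathrm{E}(W)$ is a complete lattice in which arbitrary meets are intersections of relations, the bottom element $\mathbf{0}$ is the identity relation $\Delta_W=\{(w,w)\mid w\in W\}$ (the discrete partition), and the join $\bigvee_i e_i$ of a family is the equivalence closure of $\bigcup_i e_i$. Throughout I write $\mathcal{E}_{xy}$ (for $x\neq y$) both for the partition $\{\{x,y\}\}\cup\{\{z\}\mid z\in W\setminus\{x,y\}\}$ and for the equivalence relation $\Delta_W\cup\{(x,y),(y,x)\}$ it identifies; the hypothesis $|W|\geq 2$ is used only to guarantee that some such $\mathcal{E}_{xy}$ exists, i.e.\ that $\mathbf{0}\neq\mathbf{1}$.

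First I would establish the decomposition lemma: for every $e\in\mathrm{E}(W)$,
\[
e=\bigvee\{\mathcal{E}_{xy}\mid x\,e\,y \text{ and } x\neq y\}.
\]
This is the crux of the $(\Rightarrow)$ direction and the only step requiring a small computation about how joins are taken. The point is that $\bigcup\{\mathcal{E}_{xy}\mid x\,e\,y,\ x\neq y\}=\Delta_W\cup\{(x,y)\mid x\,e\,y\}=e$ as a set of pairs (using reflexivity of $e$), and since $e$ is already an equivalence relation its closure is itself; hence the lattice join on the right equals $e$.

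Next I would treat the two directions. For $(\Leftarrow)$ I would check that each $\mathcal{E}_{xy}$ is an atom: any $f\in\mathrm{E}(W)$ with $\Delta_W\subseteq f\subseteq\mathcal{E}_{xy}$ must be $\Delta_W$ or $\mathcal{E}_{xy}$, since $\mathcal{E}_{xy}\setminus\Delta_W$ is the single symmetric pair $\{(x,y),(y,x)\}$. Atoms are always completely join-irreducible: if $\mathcal{E}_{xy}=\bigvee S$, then every $s\in S$ lies below the atom, so $s\in\{\mathbf{0},\mathcal{E}_{xy}\}$; as $x\neq y$ forces $\mathcal{E}_{xy}\neq\mathbf{0}$, not all $s$ can equal $\mathbf{0}$, whence $\mathcal{E}_{xy}\in S$. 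For $(\Rightarrow)$, let $e$ be completely join-irreducible; then $e\neq\mathbf{0}$ (otherwise $e=\bigvee\varnothing$ would force $e\in\varnothing$), so the index set in the decomposition lemma is nonempty, and applying complete join-irreducibility to that decomposition yields $e=\mathcal{E}_{xy}$ for some $x\neq y$, as required.

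The only genuine subtlety, and what I would be most careful about, is that the relevant notion is \emph{complete} join-irreducibility, so the decomposition lemma must be stated and used with an arbitrary (possibly infinite) index set; this is exactly what lets the argument run for infinite $W$ with no appeal to atomisticity of finite partition lattices. The second point to handle cleanly is that the lattice join is the equivalence closure of the union rather than the union itself, which is why the decomposition lemma needs the brief verification that $\bigcup\{\mathcal{E}_{xy}\mid x\,e\,y\}$ is already transitively closed.
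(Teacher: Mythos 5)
Your proof is correct, but it takes a genuinely different route from the paper's. The paper proves the hard direction ($e$ completely join-irreducible $\Rightarrow$ $e=\mathcal{E}_{xy}$) by contradiction: it picks $x\neq y$ with $(x,y)\in e$ and then shows (a) the $e$-class of $x$ cannot contain three pairwise distinct elements and (b) every other class is a singleton, in each case by splitting the offending class in two different ways and exhibiting $e$ as a \emph{binary} join $e_1\sqcup e_2$ with $e\neq e_1$ and $e\neq e_2$. You instead prove an atomistic decomposition lemma --- every $e\in\mathrm{E}(W)$ is the complete join of the atoms $\mathcal{E}_{xy}$ with $x\,e\,y$, $x\neq y$ --- after which both directions fall out formally: atoms are completely join-irreducible, and a completely join-irreducible element must occur in its own atomic decomposition. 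The two proofs agree on the easy direction (each $\mathcal{E}_{xy}$ is an atom, hence completely join-irreducible). As for what each buys: your route is shorter, avoids the case analysis, and establishes the stronger structural fact that $\mathrm{E}(W)$ is atomistic; the paper's route, because its contradictions use only binary joins, actually shows that even \emph{finitely} join-irreducible elements of $\mathrm{E}(W)$ are atoms, which is formally more information than the stated proposition. Both arguments work uniformly for infinite $W$, and your handling of the empty-join case ($e\neq\mathbf{0}$ since $\mathbf{0}=\bigvee\varnothing$) corresponds exactly to the paper's observation that $e\neq\epsilon$.
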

\begin{proof}
By construction, any $e$ corresponding to some $\mathcal{E}_{xy}$ as above is an atom, and hence a completely join-irreducible element of $\mathrm{E}(W)$. Conversely,  let $e$ be a completely join-irreducible element of $\mathrm{E}(W)$. Then $e\neq \epsilon$, hence some $x, y\in W$ exist such that $x\neq y$ and $(x, y)\in e$. To show that $e$ is identified by the partition $\mathcal{E}_{xy}: = \{\{x, y\}\}\cup \{\{z\}\mid z\in W\setminus \{x, y\} \}$, we need to show that (a) the $e$-equivalence class $X$ of $x$ cannot contain  three pairwise distinct elements, and (b) for any $z\in W\setminus \{x, y\}$, the $e$-equivalence class $Z$ of $z$ is a singleton. As to (a), assume for contradiction that $X$ contains  three pairwise distinct elements $x, y, z$. Hence, $X$ is both the union of two disjoint subsets $X_1$ and $X_2$ such that $x\in X_1$ and $y, z\in X_2$ and is the union of two disjoint subsets $Y_1$ and $Y_2$ such that $y\in Y_1$ and $x, z\in Y_2$. Then  let $e_1, e_2\in \mathrm{E}(W)$ be respectively identified by the  partitions $\mathcal{E}_1: = \{\{X_1, X_2\}\cup \{[w]_e\mid w\in W\setminus X\}$ and $\mathcal{E}_2: = \{Y_1, Y_2\}\cup \{[w]_e\mid w\in W\setminus X\}$. By construction,  $e = e_1\sqcup e_2$; however, $e\neq e_1$ and $e\neq e_2$, which contradicts the assumption that $e$ is completely join-irreducible. As to (b), by (a) and the assumptions, $X$ and $Z$ are disjoint and $X$ contains two distinct elements. If $Z$ contains some $z'\in W$ such that $z\neq z'$,  then $Z$ is  the union of two disjoint subsets $Z_1$ and $Z_2$ such that $z\in Z_1$ and $z'\in Z_2$. Then  let $e_1, e_2\in \mathrm{E}(W)$ be respectively identified by the  partitions $\mathcal{E}_1: = \{\{Z_1, Z_2\}\cup \{[w]_e\mid w\in W\setminus Z\}$ and $\mathcal{E}_2: = \{\{x\}, \{y\}\}\cup \{[w]_e\mid w\in W\setminus X\}$. By construction,  $e = e_1\sqcup e_2$; however, $e\neq e_1$ and $e\neq e_2$, which contradicts the assumption that $e$ is completely join-irreducible.
\end{proof}

 \subsection{A \ Financial statements network example} \label{sec:dataset}
 The following table is a small  database showing different transactions and financial accounts used to obtain a small financial statements network considered in examples.

 \begin{longtable}{|c|c|l|c|} 
 \caption{A small database with 12 business processes and 6 financial accounts. We use same TID to denote all credit and debit activities relating to a single business process.}\label{database table}\\
 \hline
 \textbf{ID} & \textbf{TID} & \textbf{FA name} &  \textbf{Value}\\
 \hline
 1 & 1& revenue & -100\\
 \hline
 2& 1& cost of sales& +100\\
 \hline
 3 & 2& revenue & -400\\
 \hline
 4 & 2& personal expenses  & +400\\
 \hline
5 & 3& other expenses  & +125\\
\hline
 6 & 3& cost of sales  & +375\\
 \hline
 7 & 3& revenue & -500\\
 \hline
 8 & 4& tax  & -125\\
 \hline
 9 & 4& cost of sales  & +500\\
 \hline
 10 & 4& revenue & -375\\
 \hline
  11 & 5& tax  & -10\\
  \hline
 12 & 5& cost of sales  & +200\\
 \hline
 13 & 5& revenue & -190\\
 \hline
 14 & 6& other expenses  & +50\\
 \hline
 15 & 6& cost of sales  & +450\\
 \hline
 16 & 6& inventory & -500\\
 \hline
 17 & 7 & cost of sales  & +400\\
 \hline
 18 & 7 & revenue  & -300\\
 \hline
 19 & 7 & inventory  & -100\\
 \hline
 20 & 8& revenue & -150\\
 \hline
 21& 8& cost of sales & +150\\
 \hline
 22 & 9& revenue & -250\\
 \hline
 23& 9& cost of sales &+250\\
 \hline
  24 & 10& tax & -250\\
  \hline
 24& 10& personal expenses & +250\\
 \hline
  26 & 11& revenue & -250\\
  \hline
 27& 11& personal expenses & +175\\
 \hline
 28& 11& other  expenses & +75\\
 \hline
  29 & 12& revenue & -250\\
  \hline
  30 & 12& tax & -50\\
  \hline
 31& 12& personal expenses  &+150\\
 \hline
 32& 12& other  expenses  &+150\\
 \hline
 \end{longtable}
The following table gives shows the many valued context obtained by interpreting the business processes as objects and financial accounts as features. The value of incidence relation denotes the share of given financial account in given business process.

\begin{longtable}{|c|l|c|}
   \caption{The formal context obtained from transaction database given in Table \ref{database table}. $I(a,x)=0$ for any $(a,x)$ pair not present in the table.} \label{many valued context table}\\
 \hline 
 \textbf{Business process} \textbf{(a)}& \textbf{Financial account}\textbf{ (x)}& \textbf{Share of value} \textbf{(I(a,x))}\\
  \hline
   1 ($a_1$)& revenue ($x_2$)  & -1\\
   \hline
  1 ($a_1$)& cost of sales ($x_3$)& +1\\
  \hline
  2 ($a_2$)& revenue ($x_2$)& -1\\
  \hline
 2 ($a_2$) & personal expenses  ($x_4$)& +1\\
 \hline
 3 ($a_3$)& other expenses ($x_6$) & +0.25\\
 \hline
  3 ($a_3$)& cost of sales ($x_3$) & +0.75\\
  \hline
 3 ($a_3$) & revenue ($x_2$)& -1\\
 \hline
 4 ($a_4$) & tax  ($x_1$) & -0.25\\
 \hline
 4 ($a_4$)& cost of sales ($x_3$) & +1\\
 \hline
 4 ($a_4$)& revenue ($x_2$) & -0.75\\
 \hline
 5 ($a_5$)& tax  ($x_1$) & -0.05\\
 \hline
 5 ($a_5$)& cost of sales ($x_3$) & +1\\
 \hline
 5 ($a_5$)& revenue ($x_2$)& -0.95\\
 \hline
 6 ($a_6$)& other expenses ($x_6$) & +0.1\\
 \hline
6 ($a_6$) & cost of sales ($x_3$)   & +0.9\\
\hline
 6 ($a_6$)& inventory ($x_5$)& -1\\
 \hline
 7 ($a_7$)& cost of sales ($x_3$) & +1\\
 \hline
  7 ($a_7$)& inventory ($x_5$) & -0.25\\
  \hline
 7 ($a_7$)& revenue  ($x_2$)& -0.75\\
 \hline
 8 ($a_8$) & revenue ($x_2$)& -1\\
 \hline
  8 ($a_8$)& cost of sales ($x_3$)& +1\\
  \hline
 9 ($a_9$)& revenue ($x_2$)& -1\\
 \hline
9 ($a_9$)& other expenses ($x_6$)&+1\\
\hline
  10 ($a_{10}$)& tax ($x_1$) & -1\\
  \hline
 10  ($a_{10}$) & personal expenses ($x_4$) & +1\\
 \hline
 11  ($a_{11}$)& revenue($x_2$) & -1\\
 \hline
 11 ($a_{11}$)& personal expenses ($x_4$) & +0.7\\
 \hline
 11 ($a_{11}$)& other  expenses ($x_6$) & +0.3\\
 \hline
 12 ($a_{12}$)& revenue($x_2$) & -0.83\\
 \hline
 12  ($a_{12}$)& tax($x_1$)  & -0.17\\
 \hline
12  ($a_{12}$)& personal expenses ($x_4$)  &+0.5\\
\hline
 12  ($a_{12}$)& other  expenses ($x_6$)  &+0.5\\
 \hline
\end{longtable}

 \subsection{B \ concept lattices obtained in different examples}\label{sec:lattice diagrams}
 
 In this Section, we give the diagrams of various concept lattices arising from the different examples in Section \ref{sec:Examples end}. These concept lattices were drawn with the help of lattice visualization by LatViz \cite{Latviz, alam2016latviz, alam2016steps, alam2016interactive}. 
 
 \begin{figure}[!h]
    \centering
    \includegraphics [scale=0.4]{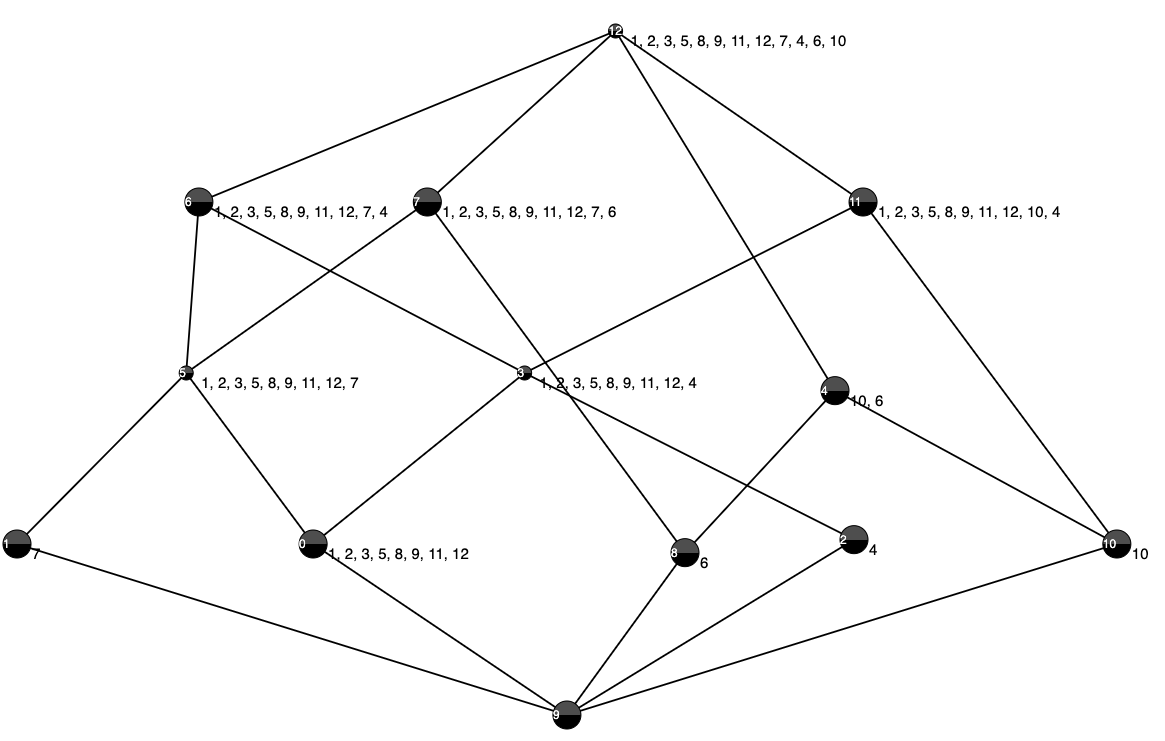}
    \caption{Concept lattice 1}
    \label{fig:lattice 1}
\end{figure}

\begin{figure}
    \centering
    \includegraphics [scale=0.4]{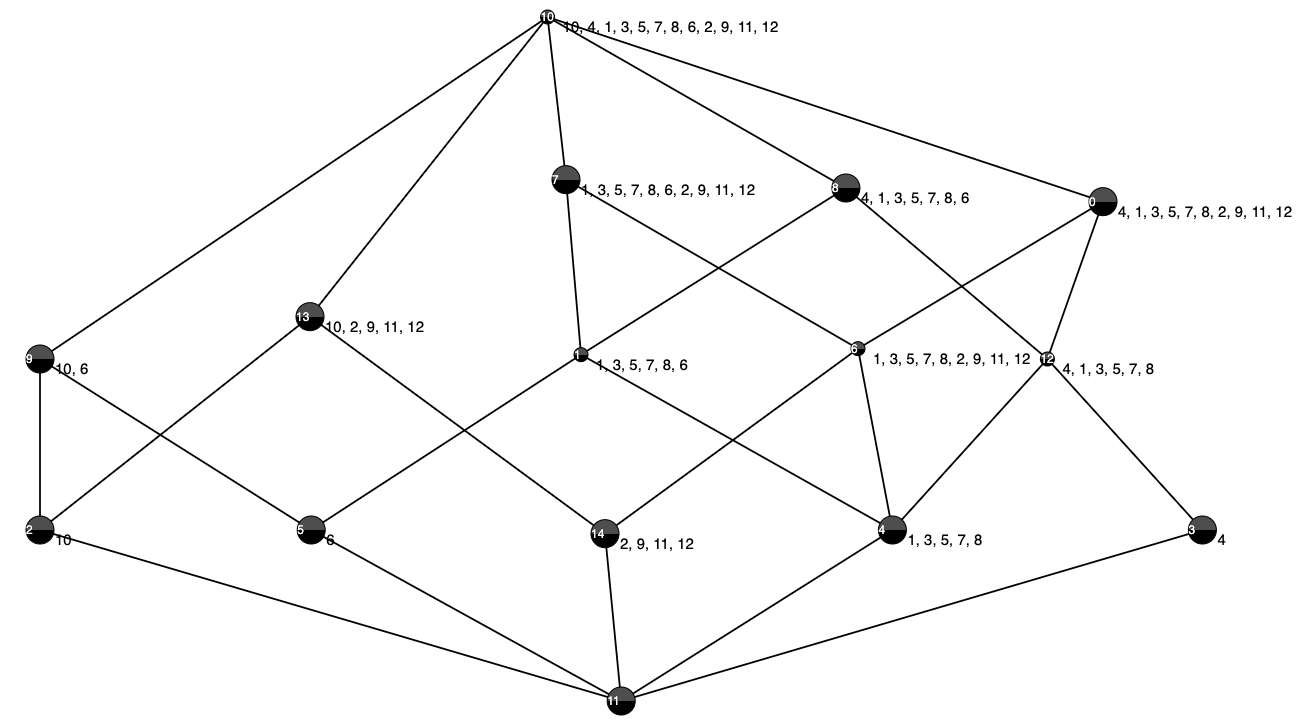}
    \caption{Concept lattice 2}
    \label{fig:lattice 3}
\end{figure}

\begin{figure}
    \centering
    \includegraphics [scale=0.4]{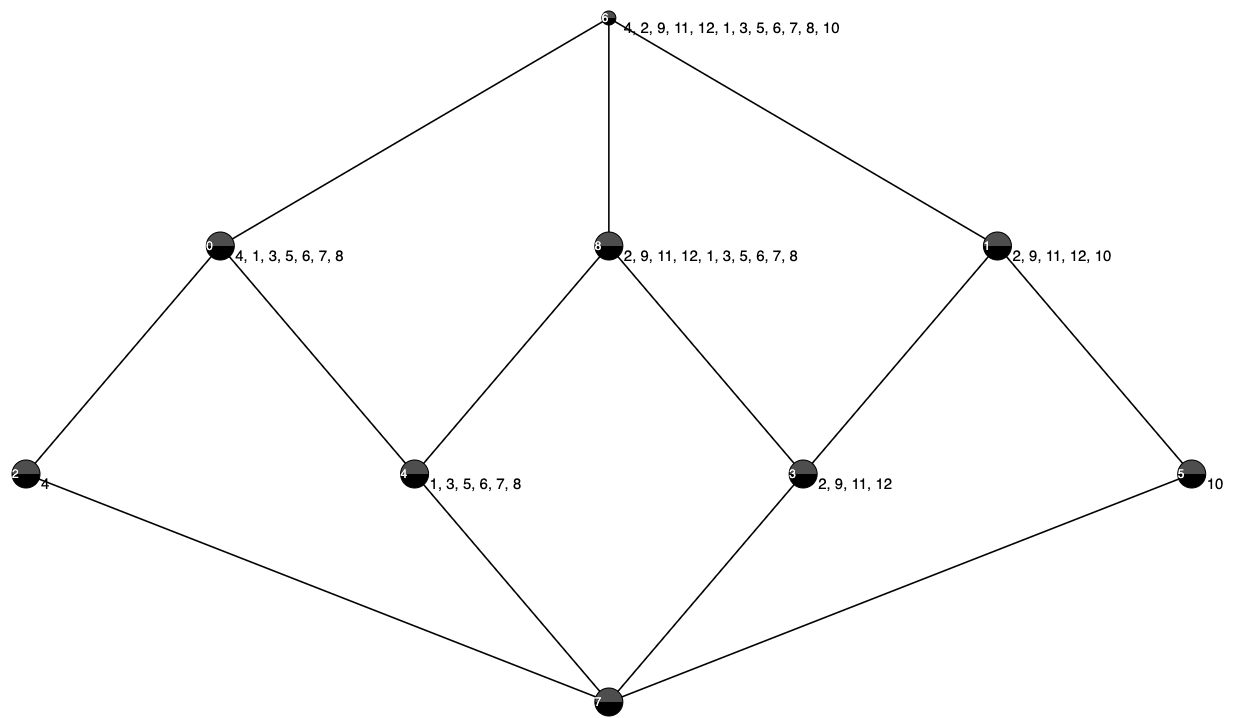}
    \caption{Concept lattice 3}
    \label{fig:lattice 5}
\end{figure}

\begin{figure}
    \centering
    \includegraphics[scale=0.4]{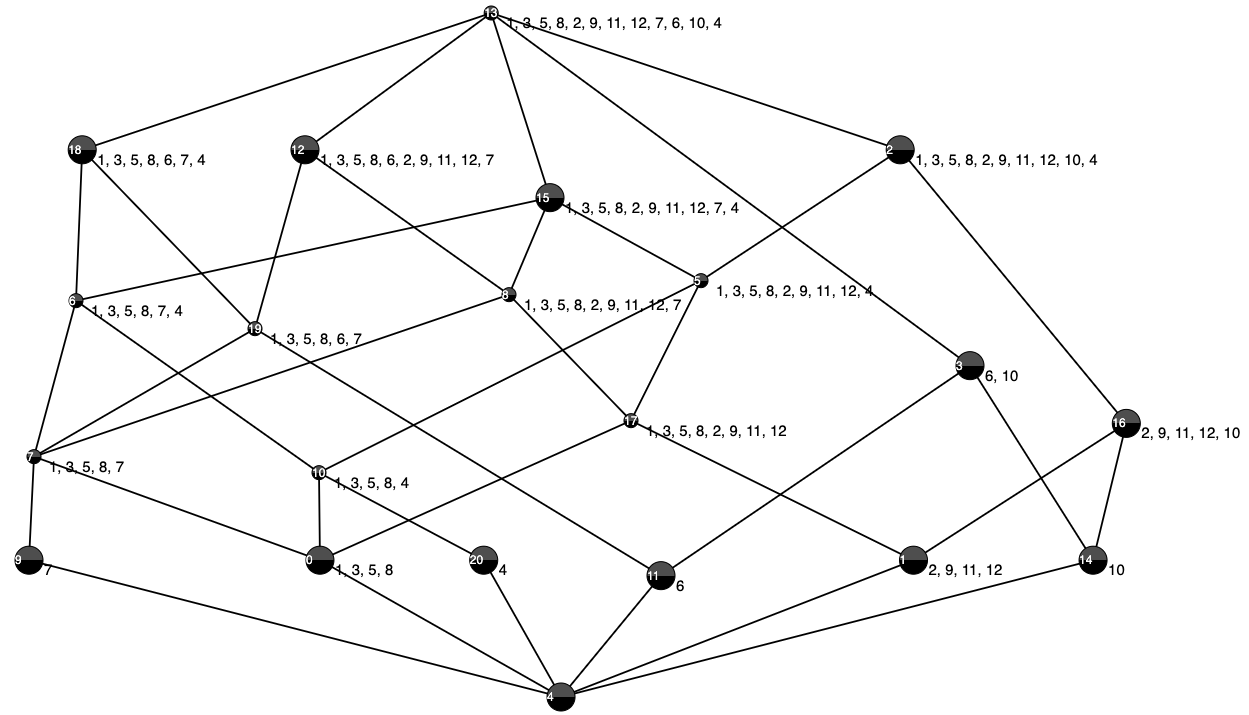}
    \caption{Concept lattice 4}
    \label{fig:lattice 2}
\end{figure}

\begin{figure}
    \centering
    \includegraphics[scale=0.4]{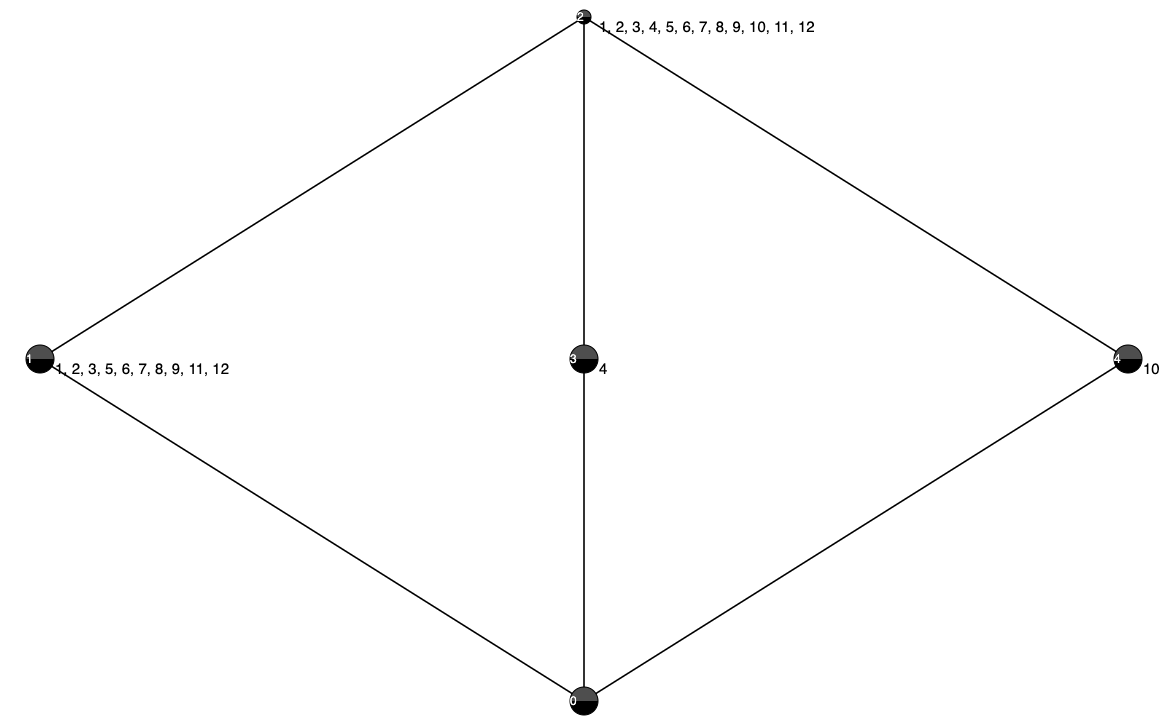}
    \caption{Concept lattice 5}
    \label{fig:lattice 6}
\end{figure}

\begin{figure}
    \centering
    \includegraphics[scale=0.385]{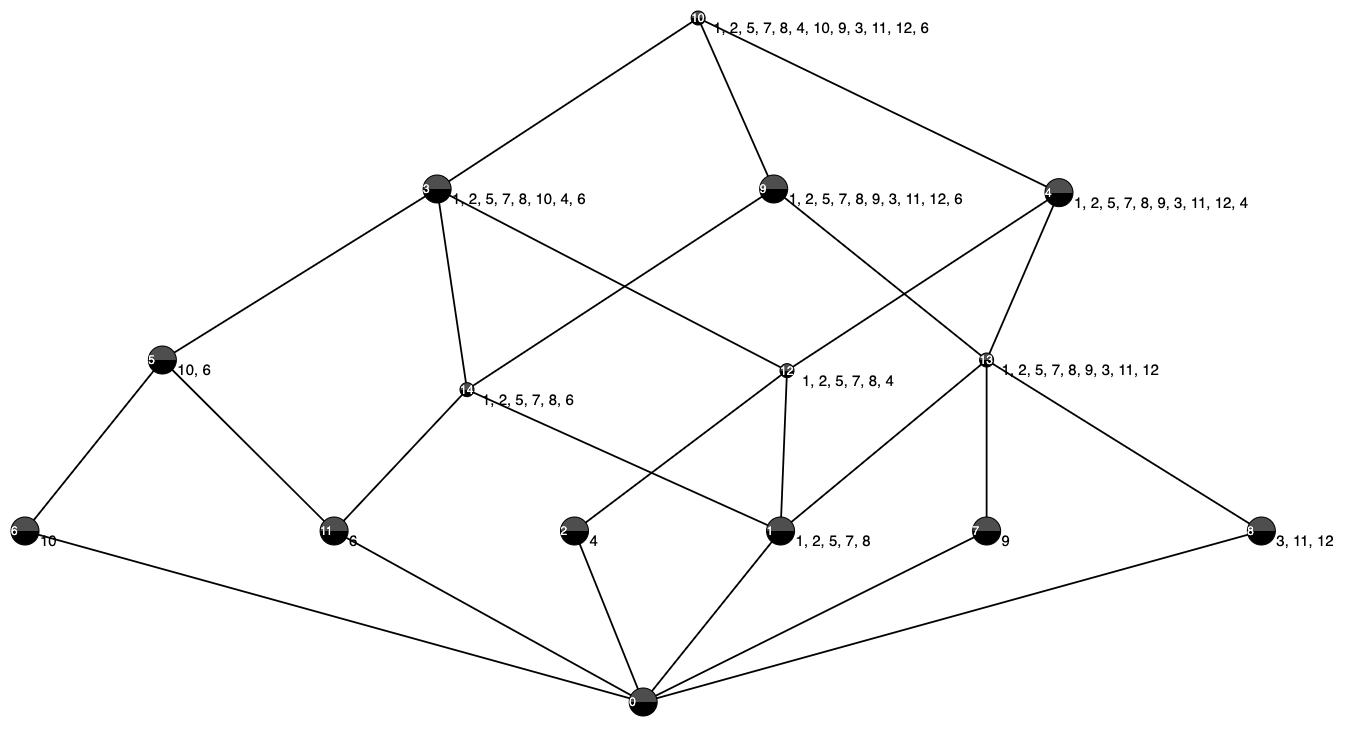}
    \caption{Concept lattice 6}
    \label{fig:lattice 7}
\end{figure}

\begin{figure}
    \centering
    \includegraphics[scale=0.4]{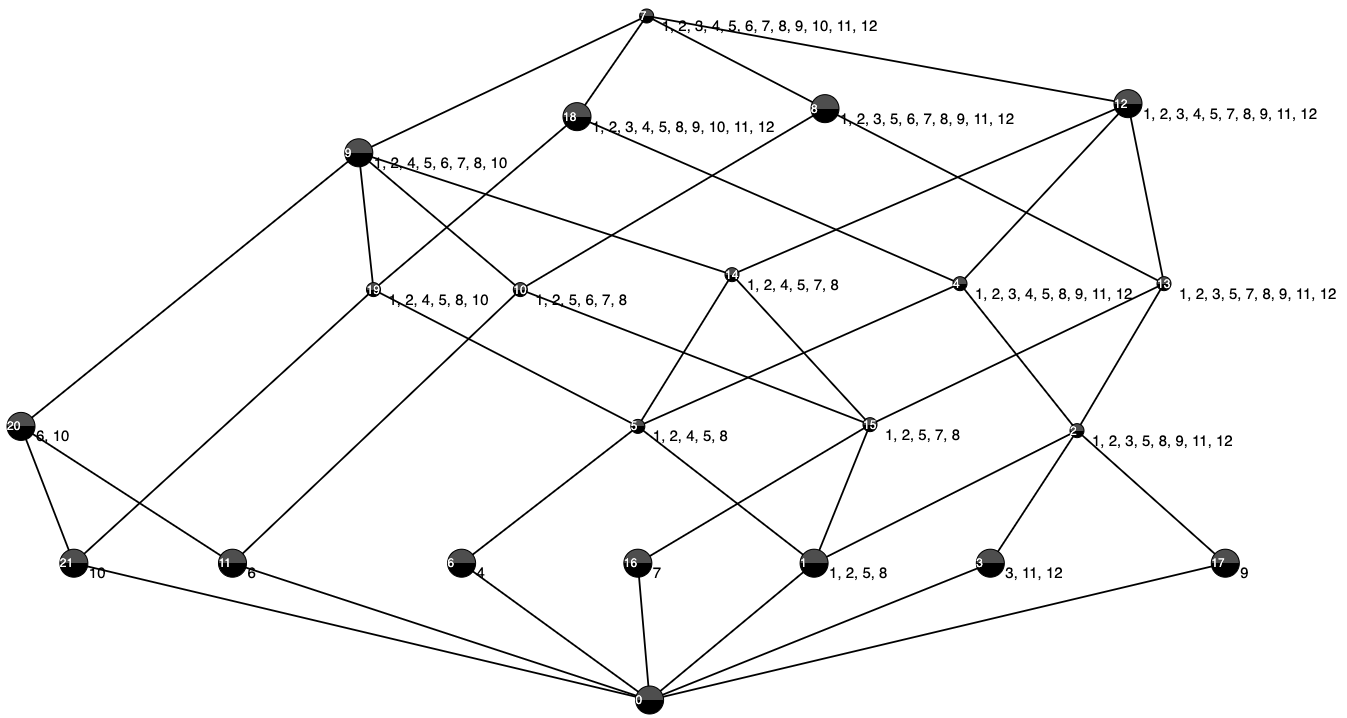}
    \caption{Concept lattice 7}
    \label{fig:lattice 9}
\end{figure}

\begin{figure}
    \centering
    \includegraphics [scale=0.4]{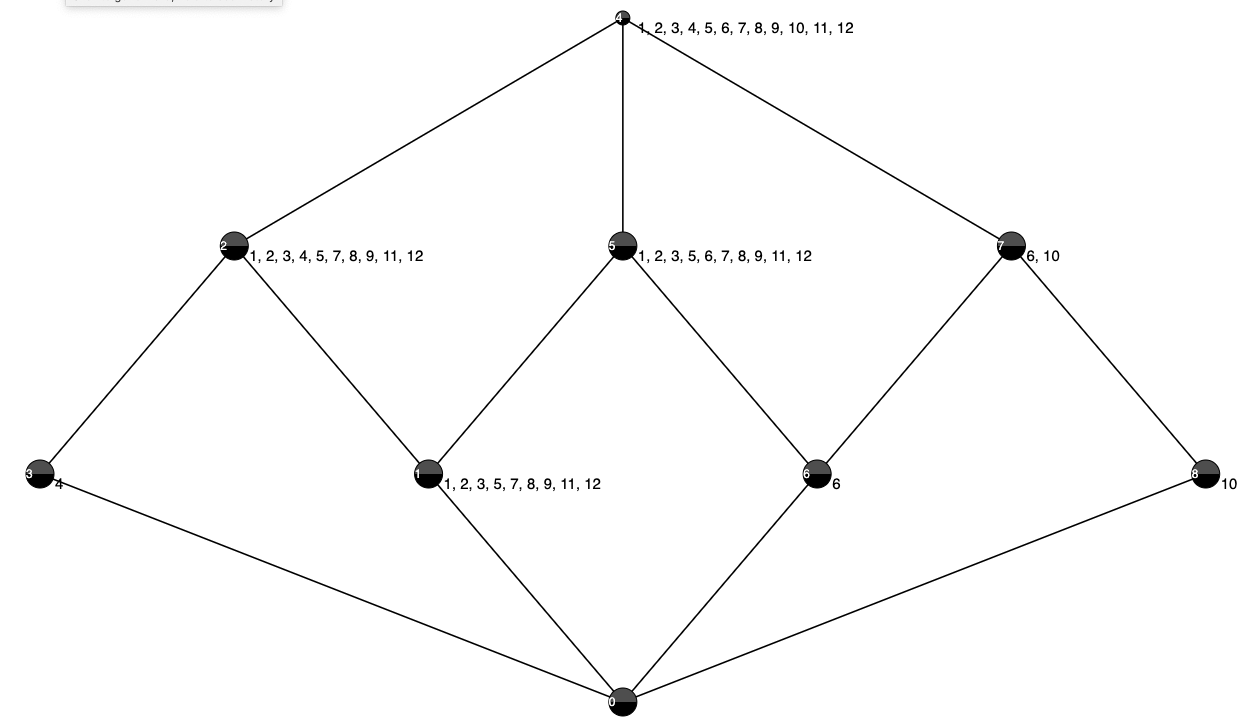}
    \caption{Concept lattice 8}
    \label{fig:lattice 4}
\end{figure}
\begin{figure}
    \centering
    \includegraphics[scale=0.4]{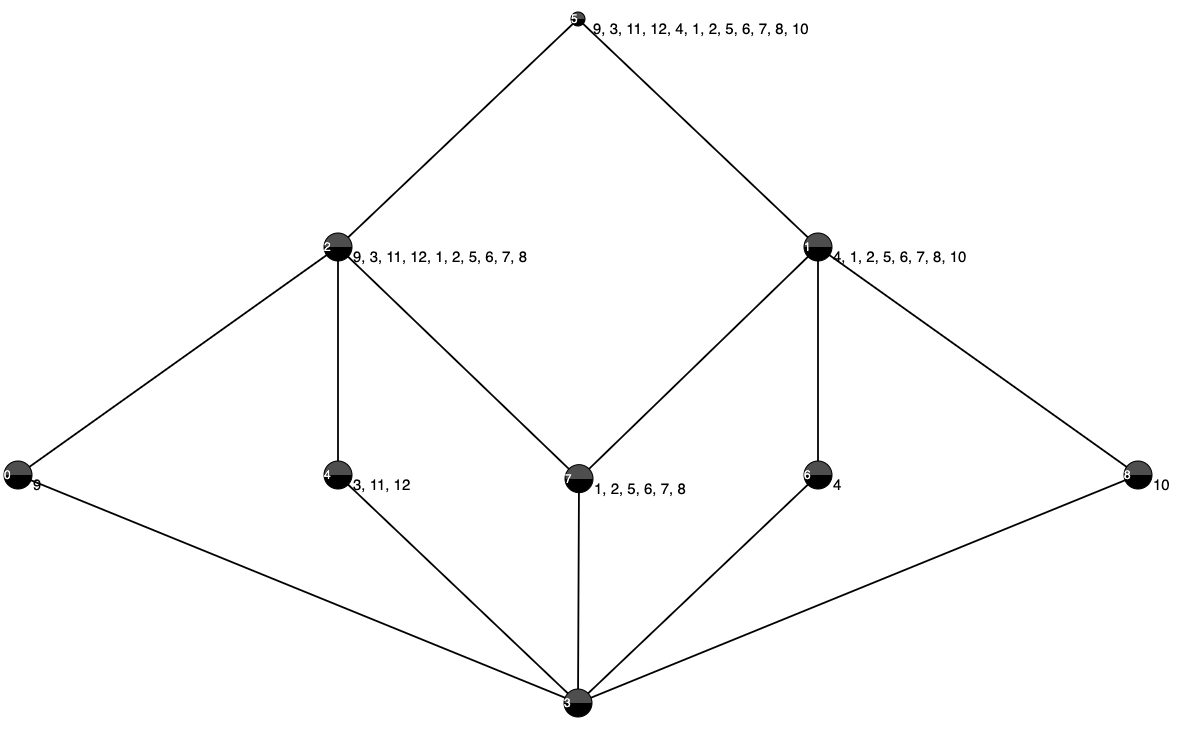}
    \caption{Concept lattice 9}
    \label{fig:lattice 8}
\end{figure}
\begin{figure}
    \centering
    \includegraphics[scale=0.4]{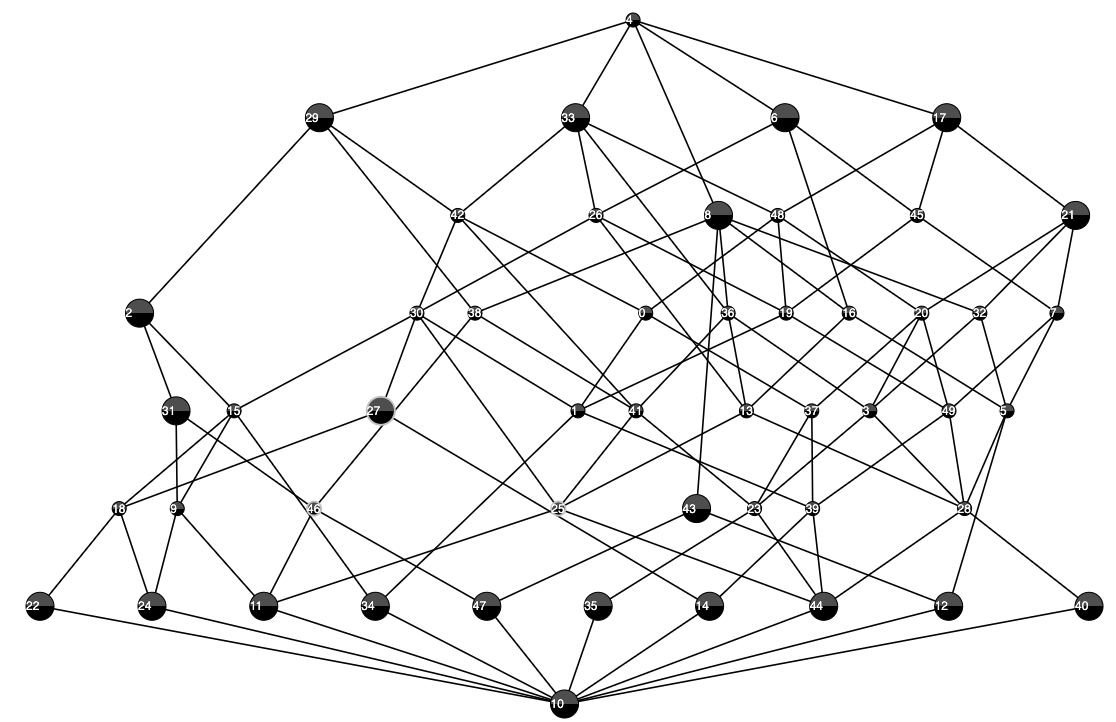}
    \caption{Concept lattice 10}
    \label{fig:lattice 10}
\end{figure}

\end{document}